\def\eqref#1{\ref{#1}}
\def\1{\bm{1}}
\def\vv{{\bm{v}}}
\DeclareMathAlphabet{\mathsfit}{\encodingdefault}{\sfdefault}{m}{sl}
\SetMathAlphabet{\mathsfit}{bold}{\encodingdefault}{\sfdefault}{bx}{n}
\newcommand{\E}{\mathbb{E}}
\DeclareMathOperator*{\argmin}{arg\,min}
\theoremstyle{plain}
\newtheorem{theorem}{Theorem}[section]
\newtheorem{lemma}[theorem]{Lemma}
\theoremstyle{definition}
\theoremstyle{remark}
\newcommand\numberthis{\addtocounter{equation}{1}\tag{\theequation}}
\newcommand{\nb}{\bar{n}}
\DeclareMathOperator{\tr}{tr}
\DeclareMathOperator{\indep}{\perp\!\!\!\perp}
\definecolor{customgray}{RGB}{200,200,200}
\title{Theoretical Insights into Overparameterized Models in Multi-Task and Replay-Based Continual Learning}
\author{\name Amin Banayeeanzade \email banayeea@usc.edu \\
      \addr Department of Computer Science\\
      University of Southern California
      \AND
        \name Mahdi Soltanolkotabi \email  soltanol@usc.edu \\
      \addr Department of Electrical and Computer Engineering\\
      University of Southern California
      \AND
      \name Mohammad Rostami \email rostamim@usu.edu \\
      \addr Department of Computer Science\\
      University of Southern California}
\begin{document}

\maketitle
\begin{abstract}
Multi-task learning (MTL) is a machine learning paradigm that aims to improve the generalization performance of a model on multiple related tasks by training it simultaneously on those tasks. Unlike MTL, where the model has instant access to the training data of all tasks, continual learning (CL) involves adapting to new sequentially arriving tasks over time without forgetting the previously acquired knowledge. Despite the wide practical adoption of CL and MTL and extensive literature on both areas, there remains a gap in the theoretical understanding of these methods when used with overparameterized models such as deep neural networks. This paper studies the overparameterized linear models as a proxy for more complex models. We develop theoretical results describing the effect of various system parameters on the model’s performance in an MTL setup. Specifically, we study the impact of model size, dataset size, and task similarity on the generalization error and knowledge transfer. Additionally, we present theoretical results to characterize the performance of replay-based CL models. Our results reveal the impact of buffer size and model capacity on the forgetting rate in a CL setup and help shed light on some of the state-of-the-art CL methods. Finally, through extensive empirical evaluations, we demonstrate that our theoretical findings are also applicable to deep neural networks, offering valuable guidance for designing MTL and CL models in practice
\footnote{
Code is available at \href{https://github.com/aminbana/MTL-Theory}{https://github.com/aminbana/MTL-Theory} .
}.
\end{abstract}
\section{Introduction}

The dominant machine learning paradigm involves training a single model on a single task and deploying the task-specific model for the targeted application. This approach, while effective for many applications, often results in a highly specialized model that lacks the ability to generalize across different yet related tasks. To overcome this issue, Multi-Task Learning (MTL) methods train a single model on multiple tasks simultaneously with the goal of benefiting from the similarities between a collection of related tasks to improve the efficiency of learning~\citep{Caruana1997}. MTL not only enhances the performance of individual tasks but also facilitates the development of models that are more adaptable and capable of transferring knowledge from one task to another. This transferability is particularly valuable in real-world applications of deep learning where data can be scarce or imbalanced \citep{MTLSurvey2022}.  

In many real-world applications, the tasks are not available for offline training, and instead, the agent needs to learn new tasks sequentially as they are encountered. Learning in these settings is particularly challenging due to the phenomenon known as catastrophic forgetting, where learning new tasks can lead to a significant loss of previously acquired knowledge~\citep{french1999catastrophic}. Continual Learning (CL) is a solution to this challenge that builds on the foundation of MTL but with a focus on the model’s ability to learn continuously over time without forgetting~\citep{PARISI2019CLSurvey}.

Despite the practical success of integrating MTL and CL with deep learning, there still remains a critical need for a theoretical understanding of learning mechanisms in these methods~\citep{Crawshaw2020MultiTaskLW}. Several efforts have been previously made to theoretically understand MTL~\citep{Baxter2015VCDimMTL, BenDavid2008TaskSimilarity}. However, these endeavors are inapplicable to contemporary deep neural networks (DNN), where the models are heavily parameterized. In the overparameterized regime, DNNs exhibit peculiar generalization behaviors, where despite having more parameters than training samples, they can still generalize well to unseen test data~\citep{zhang2017understanding}. An important unexplored area, however, is the effect of overparametrization on the generalizability of MTL models. More importantly, it is crucial to understand the impact of overparametrization on the possibility of effective cross-task knowledge transfer.

DNN models optimized with SGD inherit some specific behaviors of linear models, specifically in overparameterized regimes \citep{chizat2020lazy}. As a result, linear models have been widely studied as the first step toward understanding the deep double descent and benign overfitting~\citep{hastie2022surprises}. However, previous investigations are limited to single-task learning paradigms with additive label noise. 
 In this work, we provide theoretical characterizations of the linear overparameterized models in an MTL configuration for the first time. More specifically: 

\vspace{-2mm}

\begin{itemize}[left=2mm]
    \item 
We provide explicit expressions describing the expected generalization error and knowledge transfer of multi-task learners and compare them to single-task learners in a linear regression setup with i.i.d. Gaussian features and additive noise. Our results help to understand the system’s behavior by highlighting the role and the interplay of various system parameters, including task similarity, model size, and sample size. For instance, we demonstrate that a peak exists in the generalization error curve of the multi-task learner when increasing model complexity and the error peak stems not only from label noise but also from the dissimilarity across the tasks. We also highlight the effect of the sample size on the strength and location of the test error peak. Additionally, we measure knowledge transfer and indicate the conditions under which the tasks can be effectively learned together or interfere with each other.

\item We provide similar results for continual learners and use them to explain the characteristics of replay-based CL methods. These results demonstrate the impact of replay buffer size on forgetting and the generalization error. Furthermore, we complete the theoretical results by analyzing the CL methods that use a combined strategy of relying on implicit regularization and rehearsal techniques. As a practical fruit, our results shed light on state-of-the-art replay-based CL methods by illustrating the effect of model size on reducing the effectiveness of memory buffer, especially in the presence of practical limitations where the replay buffer and model size can not be arbitrarily large. 

\item
We empirically show that our findings for the linear models are generalizable to DNNs by conducting various experiments using different architectures and practical datasets such as CIFAR-100, Imagenet-R, and CUB-200. Our experiments show that the test error follows a similar trend observed in linear models, indicating that our theoretical results can help understand MTL in DNNs. We also provide experiments with replay-based CL methods and study the effect of memory buffer size in practical setups with DNNs.
\end{itemize}

\section{Related Work} \label{sec:Related}

\paragraph{Multi-Task Learning}

Several theoretical efforts have been made to understand the benefits of multi-task learning. In one branch, the statistical learning theory (SLT) toolkit was used to derive generalization bounds for MTL models. These efforts include using VC dimension~\citep{Baxter2015VCDimMTL}, covering number~\citep{Pentia2015CoveringNumber}, Rademacher complexity~\citep{Yousefi2018Radm}, and algorithmic stability~\citep{Zhang2015Stability}. This line of work mostly focuses on defining notions of task similarity in the SLT sense \citep{BenDavid2008TaskSimilarity} and deriving generalization bounds as a function of the number of tasks, training set size, and task similarity~\citep{MTLSurvey2022}. However, these methods are inapplicable to the contemporary paradigm of overparametrized DNNs~\citep{Allen2019Overaparm}, mainly because these models achieve zero training error and perfectly memorize the training data, yet they still generalize well to the unseen test set. 

Additionally, some previous work studied multi-variate regression models, but their analysis or setup fundamentally differs from ours. For instance, \citep{lounici2009taking} study sparse systems of equations with the goal of union support recovery \citep{Kolar2011UnionSupportRecovery, Obozinski2011SparseUnion}. 
More relevant to our work, \citet{Wu2020Understanding} analyzes a two-layer architecture with a shared module across tasks and separate output modules for each task. They derive an upper bound on the performance of an optimal multi-task learning (MTL) model in a two-task scenario, assuming data abundance and finite model width, and examine the impact of task alignment on MTL performance. In contrast to our work, these studies do not characterize the exact convergence point of SGD in the MTL setup or explicitly derive the system’s generalization performance in the overparameterized regime.

\vspace{-2mm}

\paragraph{Continual Learning}

CL methods predominantly use either model regularization~\citep{kirkpatrick2017overcoming} or experience replay techniques \citep{schaul2015prioritized, Rolnick2019ER} to tackle catastrophic forgetting. Model regularization is based on consolidating the model parameters by penalizing drifts in the parameter space during model updates with the goal of preserving the older knowledge \citep{kirkpatrick2017overcoming, zenke2017continual, aljundi2018memory}. The core idea for experience replay is to keep a portion of the training set in a separate memory buffer \citep{schaul2015prioritized}. These samples are representative of previously encountered distributions and are replayed back during learning subsequent tasks along with the current task data. The stored samples enable the model to revisit and learn from the distributions it has encountered before, facilitating the retention of acquired knowledge from past tasks \citep{Rolnick2019ER, huang2024online, zhou2023model}. Memory-based methods have proven to be highly effective, and their variations often lead to the optimal approach in practice \citep{PARISI2019CLSurvey}.  



Several theoretical papers exist on understanding continual learning models from different aspects \citep{Doan2021CLNTK, yin2021optimization, ChenMemoryBounds}. The most relevant to our work examines linear CL models under a Gaussian data model when trained sequentially using SGD as the data for new tasks arrive. For instance, \citet{Goldfarb2023Forg} derives an upper bound for the generalization error of such models, and \citet{Linn2023CLTH} refines this result by proposing a closed-form characterization of the error. Furthermore, \citet{goldfarb2024the} studies the interplay between forgetting and task similarity as a function of different levels of overparametrization under a more general data model. More recently, \cite{guha2024on} extends the theoretical investigation to feed-forward neural networks to show that increasing network widths to reduce forgetting yields diminishing returns. Although these results are valuable and generally comply with our findings, they fall into the regularization CL-based methods. This is due to the fact that they merely rely on the implicit bias of SGD as a regularizer that tends to the solutions closer to the weights of the previous task, while in our study, we equip the continual learner with an external memory buffer and study the closed form characteristics of the error as a function of different system parameters, especially the memory size. Moreover, some of the mentioned results (e.g., \citet{Linn2023CLTH}) can be seen as a special case of our studies in Section~\ref{sec:CL} when memory size is zero.

\vspace{-2mm}

\paragraph{Overparameterization and Double Descent} In classical learning theory, it was widely accepted that increasing a model's complexity would initially reduce test error, but beyond a certain point, the error would rise again due to overfitting. However, the empirically observed double descent curve reveals that after the model complexity surpasses a certain threshold, making the model overparameterized, the test error surprisingly starts to decrease again \citep{Nakkiran_2021}. In fact, state-of-the-art DNNs operate in the overparameterized regime, meaning that the model can perfectly fit the training data and achieve near-zero training errors while paradoxically still being able to generalize well to new unseen data, a phenomenon known as benign overfitting \citep{zhang2017understanding, Cao2022BenignOverfitting}.

Double descent is not specific to DNNs \citep{belkin2019reconciling}. For instance, linear regression exhibits similar behaviors under certain assumptions \citep{belkin2020two}. On the other hand, recent literature has pointed out a direct connection between linear models and more
complex models such as neural networks optimized with SGD~\citep{jacot2018ntk,gunasekar18a, oymak2019overparam}. Consequently, several studies investigated linear models as a proxy for more complex models such as DNNs \citep{hastie2022surprises}. However, these works are primarily focused on single-task setups and are not capable of capturing task notions in MTL setups. In contrast, our work analyzes the MTL case where multiple tasks are learned together. Considering that even a simple multi-label classification problem is a special case of MTL, our work is essential in understanding commonly used overparameterized DNN models. 

\section{Theoretical Results on MTL in Overparameterized Regimes}

We first define the problem setting and then offer our theoretical results.

\subsection{Problem Formulation} \label{sec:problem-formulation}

Consider a set of $T$ learning tasks with a linear ground truth model for each task. Specifically, for task $t$, assume an input feature vector $\bar{x}_t \in \mathbb{R}^{s_t}$ 
and an output $y \in \mathbb{R}$ given by 

{
\begin{equation}
    y = \bar{x}_t^\top \bar{w}_t^* + z,
\end{equation}}

where $\bar{w}_t^* \in \mathbb{R}^{s_t}$ denotes the optimal parameters, and $z \in \mathbb{R}$ is the random noise
\citep{Linn2023CLTH, Belkin2018Kernel}. 
The specific true feature space for each task is unknown, and the features for a specific task may not be useful to other tasks. Since we seek to use a single MTL model to learn all tasks, we consider a larger parameter space with size $p$ that encompasses all true features.

More formally, consider a global set of features indexed by $1,2,\cdots$. We assume that the true set of features for task $t$ is denoted by $S_t$ such that $|S_t|=s_t$. Among all, we choose a set of $p$ features denoted by $\mathcal{W}$ to train our model, assuming that $\cup_{t=1}^T S_t \subseteq \mathcal{W}$. With this in mind, we define the expanded ground truth for task $t$ by introducing $w_t^* \in \mathbb{R}^{p} $, where $w_t^*$ is the same as $\bar{w}_t^*$ in the $S_t$ indices and filled by zero in the remaining $p - s_t$ places \citep{Linn2023CLTH}. 

\paragraph{Data Model.} 
We consider a training dataset of size $n_t$ for task $t$ 
represented as $\mathcal{D}_t = \{(x_{t,i}, y_{t,i})\}_{i=1}^{n_t}$, where $x_{t,i} \in \mathbb{R}^{p}$ and $y_{t,i} = x_{t,i}^\top w_t^* + z_{t,i}$. We also assume the features and noise are i.i.d.~according to the following distributions:

{
\begin{equation}
    x_{t,i} \sim \mathcal{N}(0,I_p)\quad\text{and}\quad z_{t,i} \sim \mathcal{N}(0,\sigma^2).
\end{equation}}
Here, $\sigma^2$ denotes the noise strength.

\paragraph{Data Model in Matrix Form.} 
Consider a matrix representation by stacking the training data as $X_t:= [x_{t,1}, x_{t,2}, ..., x_{t,n_t}] \in \mathbb{R}^{p\times n_t}$,  $y_t:= [y_{t,1},y_{t,2},...,y_{t,n_t}]^\top$,  and the noise vector as $z_t = [z_{t,1},z_{t,2},...,z_{t,n_t}]^\top$, to summarize the data generation process as

{
\begin{equation}
    y_t = X_t^\top w_t^* + z_t.
\end{equation}}

Similarly, we might stack the training data of all tasks to build  $X_{1:T} \in \mathbb{R}^{p\times\nb}$, $y_{1:T} \in \mathbb{R}^{\nb}$ and $z_{1:T}\in \mathbb{R}^{\nb}$ where $\nb = \sum_{t=1}^T n_t$ is the total number of training samples. 

\paragraph{Single-Task Learning (STL)}


To train a single-task learner for task $t$, we consider the standard setting in which the mean-squared-error loss function on $\mathcal{D}_t$ is optimized:

{
\begin{equation} \label{eq:train-loss}
    w_t = \argmin_{w \in \mathbb{R}^p} \frac{1}{n_t} \Vert X_t^\top w - y_t \Vert_2^2.
\end{equation}}

In the underparameterized regime where $p < n_t$, there is a unique solution to minimizing the loss, given by $w_t = (X_t X_t^\top)^{-1}{X}_t y_t$. In contrast, in the overparameterized regime where $p > n_t$, there are infinite solutions with zero training error. Among all solutions, we are particularly interested in the solution with minimum $\ell_2$-norm, since it is the corresponding convergence point of applying stochastic gradient descent (SGD) on Equation~\eqref{eq:train-loss} \citep{gunasekar18a} when starting from zero initialization. In fact, $w_t$ in this case is obtained by equivalently solving the following constrained optimization:

{
\begin{equation} \label{eq:overparam-loss}
    w_t = \argmin_{w \in \mathbb{R}^p} \Vert w \Vert_2^2 \quad \text{s.t.} \quad X_t^\top w=y_t.
\end{equation}}

Since we are interested in the overparameterized regime, our focus is on solving Equation~\eqref{eq:overparam-loss}. 

\paragraph{Single-Task Generalization Error.} To evaluate the generalization performance of $w$ on task $t$, we use the following test loss:

{
\begin{equation}
\begin{split}
    \mathcal{L}_t(w) = \E_{x,z}[( x^\top w - y )^2] 
    = \Vert w - w_t^*\Vert_2^2 + \sigma^2.
\end{split}
\end{equation}}
 
In what follows, we drop $\sigma^2$ and only use $\mathcal{L}_t(w) = \Vert w - w_t^*\Vert_2^2$ as a comparison criterion. As a prelude to our work, we review the following theoretical results on the generalization error of single-task learners:

\begin{theorem} \label{theorem:single-task}
(\citet{hastie2022surprises})
When $n_t \geq p + 2$, the single-task learner described in Equation~\eqref{eq:train-loss} achieves
    \begin{equation} \label{eq:single-task-underparam}
        \mathbb{E}[\mathcal{L}_t(w_t)] = \frac{p \sigma^2}{n_t - p - 1},
    \end{equation}
    and when $p \geq n_t+2$, the single-task learner described in Equation~\eqref{eq:overparam-loss} obtains
    \begin{equation} \label{eq:single-task-overparam}
        \mathbb{E}[\mathcal{L}_t(w_t)] = (1 - \frac{n_t}{p})\Vert w_t^* \Vert_2^2 + \frac{n_t \sigma^2}{p-n_t-1},
    \end{equation}
    where the expectation is due to randomness of $X_t$ and $z_t$.
\end{theorem}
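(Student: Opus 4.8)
The plan is to treat the two regimes separately, since the estimator itself is defined differently in each. In both cases I would begin by substituting the data-generating equation $y_t = X_t^\top w_t^* + z_t$ into the closed-form expression for $w_t$, so that the error $w_t - w_t^*$ splits into a bias-type term involving $w_t^*$ and a variance-type term involving $z_t$. Because $z_t$ is independent of $X_t$ and has mean zero, the cross term vanishes in expectation, so $\E[\mathcal{L}_t(w_t)] = \E[\Vert w_t - w_t^*\Vert_2^2]$ decomposes into two pieces that can be handled independently.

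\textbf{Underparameterized regime ($n_t \geq p+2$).} Here $w_t = (X_t X_t^\top)^{-1} X_t y_t$, and substituting gives $w_t - w_t^* = (X_t X_t^\top)^{-1} X_t z_t$, so the bias term is exactly zero and only the noise term survives. Conditioning on $X_t$, the noise contribution is $\sigma^2 \operatorname{Tr}\!\bigl((X_t X_t^\top)^{-1}\bigr)$ by the standard trace identity for quadratic forms in Gaussian vectors. It then remains to compute $\E\bigl[\operatorname{Tr}((X_t X_t^\top)^{-1})\bigr]$ where $X_t X_t^\top$ is a $p \times p$ Wishart matrix with $n_t$ degrees of freedom and identity scale; this expectation equals $p/(n_t - p - 1)$, which is the classical formula for the mean of an inverse-Wishart matrix's trace (valid precisely when $n_t \geq p+2$). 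Multiplying by $\sigma^2$ yields Equation~\eqref{eq:single-task-underparam}.

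\textbf{Overparameterized regime ($p \geq n_t+2$).} Now $w_t = X_t (X_t^\top X_t)^{-1} y_t$ is the minimum-norm interpolator, and substituting $y_t$ gives $w_t - w_t^* = X_t(X_t^\top X_t)^{-1} X_t^\top w_t^* - w_t^* + X_t(X_t^\top X_t)^{-1} z_t = -(I_p - P_t) w_t^* + X_t(X_t^\top X_t)^{-1} z_t$, where $P_t = X_t(X_t^\top X_t)^{-1}X_t^\top$ is the rank-$n_t$ orthogonal projection onto the column span of $X_t$. The two terms are orthogonal in the appropriate sense, and after taking expectations the error splits as $\E[\Vert (I_p - P_t) w_t^*\Vert_2^2] + \sigma^2 \E[\operatorname{Tr}((X_t^\top X_t)^{-1})]$. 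For the bias term, rotational invariance of the standard Gaussian makes $P_t$ a uniformly random rank-$n_t$ projection, so $\E[I_p - P_t] = (1 - n_t/p) I_p$, giving $(1 - n_t/p)\Vert w_t^*\Vert_2^2$. For the variance term, $X_t^\top X_t$ is an $n_t \times n_t$ Wishart matrix with $p$ degrees of freedom, and the same inverse-Wishart trace formula (with roles of $p$ and $n_t$ swapped) gives $\E[\operatorname{Tr}((X_t^\top X_t)^{-1})] = n_t/(p - n_t - 1)$, valid when $p \geq n_t+2$. Adding the two contributions produces Equation~\eqref{eq:single-task-overparam}.

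\textbf{Main obstacle.} The calculations are essentially routine once set up; the only genuinely nontrivial input is the expected trace of an inverse-Wishart matrix, $\E[\operatorname{Tr}((W W^\top)^{-1})] = \dim/(\text{dof} - \dim - 1)$, which is where the dimension restrictions $n_t \geq p+2$ and $p \geq n_t+2$ enter (below these thresholds the expectation diverges). Since this is a classical result (and the theorem is attributed to \citet{hastie2022surprises}), I would simply cite it rather than re-derive it, so the proof reduces to the two bias-variance decompositions above plus the observation that $\E[P_t] = (n_t/p) I_p$ by rotational symmetry.
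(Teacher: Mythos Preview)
Your proposal is correct and matches the paper's own proof essentially step for step: both regimes are handled via the same bias--variance split, the inverse-Wishart trace formula is invoked for the noise term in each case, and the bias term in the overparameterized regime is computed from $\E[P_t] = (n_t/p) I_p$ via rotational invariance (stated in the paper as Lemma~\ref{lemma:proj-expectations}). There is nothing to add.
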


This result characterizes the generalization error of single-task learning (STL) across both the underparameterized and overparameterized regimes. To better understand this theorem, fix the number of training samples $n$ and analyze how the error evolves as a function of the number of parameters $p$.
In the underparameterized regime ($p < n$) the learner doesn’t have enough capacity to fit all training samples. This serves as a regularization and leads to improved generalization, as the learned parameters approximate the optimal solution with low bias but some variance. As $p$ increases in this regime, the variance grows, leading to an overall increase in error until the interpolation threshold ($p \approx n$) is reached. At this critical point, there exists exactly one predictor that can perfectly fit the training data. However, this solution is highly sensitive to noise, as small fluctuations in the training data lead to large variations in the learned parameters, resulting in a sharp spike in test error.

Beyond the interpolation threshold, as $p$ continues to increase, the model enters the overparameterized regime, where it can completely memorize the training data, driving the training error to zero. Despite this apparent overfitting, the model exhibits benign overfitting, where increasing $p$ paradoxically reduces the influence of noise ($\frac{n_t \sigma^2}{p-n_t-1} \to 0$ as $p \to \infty$). As discussed in Section~\ref{sec:Related}, this surprising behavior is associated with the \textit{implicit bias of SGD}, which biases the optimization toward low-norm solutions. With this in mind, we are ready to present our results on multi-task learning.

\subsection{Main Results on Multi-Task Learning (MTL)} \label{sec:MTL_theory}

In this section, we present our main results for MTL and the deductions our results imply. Consider a multi-task learner that simultaneously learns all tasks by solving the optimization given in Equation~\eqref{eq:overparam-loss} using the training data of all tasks:

{
\begin{equation} \label{eq:multi-task-loss}
    w_{1:T} = \argmin_{w \in \mathbb{R}^p} \Vert w \Vert_2^2 \quad \text{s.t.} \quad X_{1:T}^\top w=y_{1:T}.
\end{equation}}

To evaluate model performance, we utilize two metrics:

\begin{itemize}
    \item \textit{Average generalization error} reflects the overall generalization of the model, averaged across all tasks, i.e., 
    \begin{equation}
        G(w_{1:T}):= \frac{1}{T} \sum_{t=1}^{T} \mathcal{L}_t(w_{1:T}).
    \end{equation}

\item \textit{Average knowledge transfer} is a metric to measure the gain of mutual cross-task knowledge transfer when comparing MTL to STL, i.e., 
\begin{equation}
    K(w_{1:T}) := \frac{1}{T} \sum_{t=1}^{T} \left[ \mathcal{L}_t(w_{t}) - \mathcal{L}_t(w_{1:T})\right].
\end{equation}
\end{itemize}
Notably, a lower generalization and a higher knowledge transfer are more desirable. In what comes next, we offer the main results of the paper on the generalization error and knowledge transfer of an overparameterized multi-task learner.

\begin{theorem} \label{theorem:multi-task}
The multi-task learner described in Equation~\eqref{eq:multi-task-loss} in the overparameterized regime, where \mbox{$p \geq \nb + 2$}, has the following exact generalization error and knowledge transfer:

{
\begin{equation}  \label{eq:G-multi} 
    \mathbb{E} [G(w_{1:T})] = 
    \underbrace{
    \sum_{t=1}^T \sum_{t'=1}^T \frac{n_{t'}}{Tp} (1+\frac{T n_t}{2(p-\nb - 1)}) \Vert w_t^* - w_{t'}^*\Vert_2 ^ 2
    }_\text{term $G_1$}
    + 
    \underbrace{    
    \frac{1}{T} (1 - \frac{\nb}{p}) \sum_{t=1}^T \Vert w_t^*\Vert_2 ^ 2
    }_\text{term $G_2$}
    + 
    \underbrace{    
    \frac{\nb \sigma^2}{p - \nb - 1}
    }_\text{term $G_3$}
    ,
\end{equation}}

{
\begin{equation}\label{eq:K-multi}
\begin{split}
    \mathbb{E} [K(w_{1:T})] &= 
        \underbrace{
    2 \sum_{t=1}^T \sum_{t'=1}^T \frac{n_{t'}}{Tp} (1+\frac{T n_{t}}{2 (p-\nb - 1)}) \langle w_t^* , w_{t'}^*\rangle
    }_\text{term $K_1$}
    \\
    &-
    \underbrace{
      (1+\frac{1}{T} + \frac{\nb}{p-\nb - 1}) \sum_{t=1}^T \frac{n_t}{p} \Vert w_t^*\Vert_2 ^ 2 
    }_\text{term $K_2$}
     - \underbrace{
     \frac{\nb \sigma^2}{p - \nb - 1} + \frac{1}{T}\sum_{t=1}^T\frac{n_t \sigma^2}{p - n_t - 1}
    }_\text{term $K_3$}
    ,
\end{split}
\end{equation}}

\end{theorem}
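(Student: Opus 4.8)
\emph{Setup and bias/variance split.} In the regime $p\ge\nb+2$ the matrix $A:=X_{1:T}^{\top}X_{1:T}\in\R^{\nb\times\nb}$ is a.s.\ invertible, the minimum-norm interpolator of \eqref{eq:multi-task-loss} is $w_{1:T}=X_{1:T}A^{-1}y_{1:T}$, and $\Pi:=X_{1:T}A^{-1}X_{1:T}^{\top}$ is the orthogonal projection onto the column space of $X_{1:T}$. Since $w_{1:T}-\Pi w_t^{*}=X_{1:T}A^{-1}\delta_t$ with $\delta_t:=y_{1:T}-X_{1:T}^{\top}w_t^{*}$ lies in that column space while $(I-\Pi)w_t^{*}$ lies in its orthogonal complement, the Pythagorean identity gives
\[
\mathcal{L}_t(w_{1:T})=\delta_t^{\top}A^{-1}\delta_t+\big\|(I-\Pi)w_t^{*}\big\|_2^{2}.
\]
Blockwise, the $t'$-th block of $\delta_t$ is $X_{t'}^{\top}(w_{t'}^{*}-w_t^{*})+z_{t'}$, so $\delta_t=\mu_t+z_{1:T}$, where the $t'$-th block of $\mu_t$ is $X_{t'}^{\top}(w_{t'}^{*}-w_t^{*})$ and the $t$-th block of $\mu_t$ vanishes. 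The column space of the Gaussian design is a.s.\ $\nb$-dimensional and, by rotational invariance, Haar-distributed on the Grassmannian, so $\E[\Pi]=\tfrac{\nb}{p}I_p$ and averaging $\|(I-\Pi)w_t^{*}\|^2$ over $t$ produces term $G_2$. Since $z_{1:T}$ is independent of $X_{1:T}$ with covariance $\sigma^2 I$ and $A$ is Wishart with $\E[A^{-1}]=(p-\nb-1)^{-1}I_{\nb}$, one has $\E[\mu_t^{\top}A^{-1}z_{1:T}]=0$ and $\E[z_{1:T}^{\top}A^{-1}z_{1:T}]=\sigma^2\,\E\,\Tr(A^{-1})=\tfrac{\nb\sigma^2}{p-\nb-1}$, which is term $G_3$. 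It remains to compute the signal term $\E[\mu_t^{\top}A^{-1}\mu_t]$.

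\emph{The signal term via a block inverse-Wishart computation.} Writing $v_{t'}:=w_{t'}^{*}-w_t^{*}$ and partitioning $A^{-1}$ into task blocks, $\E[\mu_t^{\top}A^{-1}\mu_t]=\sum_{t',t''}v_{t'}^{\top}\,\E\!\big[X_{t'}[A^{-1}]_{t',t''}X_{t''}^{\top}\big]\,v_{t''}$. Replacing $X_{1:T}$ by $QX_{1:T}$ for $Q\in O(p)$ leaves $A$, hence $[A^{-1}]_{t',t''}$, unchanged, so the inner $p\times p$ expectation commutes with every such $Q$ and equals $c_{t',t''}I_p$ with $c_{t',t''}=p^{-1}\E\,\Tr\!\big([A^{-1}]_{t',t''}X_{t''}^{\top}X_{t'}\big)$. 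I would evaluate $c_{t',t''}$ by a leave-one-task-out Schur-complement argument: isolating the block of $A$ indexed by task $t'$ gives $[A^{-1}]_{t',t'}=S_{t'}^{-1}$ and $[A^{-1}]_{t',t''}=-S_{t'}^{-1}X_{t'}^{\top}X_{(-t')}(X_{(-t')}^{\top}X_{(-t')})^{-1}E_{t''}$ for $t''\neq t'$, where $X_{(-t')}$ stacks the other tasks' data, $E_{t''}$ selects its block-$t''$ columns, and $S_{t'}=X_{t'}^{\top}(I-\Pi_{(-t')})X_{t'}$ with $\Pi_{(-t')}$ the projection onto the column space of $X_{(-t')}$. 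Conditioning on $X_{(-t')}$ and splitting $X_{t'}$ into its components inside and orthogonal to that column space — independent Gaussians whose Gram matrices are $W_{n_{t'}}(\nb-n_{t'})$ and $W_{n_{t'}}(p-\nb+n_{t'})$ — the inverse-Wishart mean yields $\E[S_{t'}^{-1}]=(p-\nb-1)^{-1}I_{n_{t'}}$ and, after a short trace manipulation, $c_{t',t'}=\frac{n_{t'}(p-n_{t'}-1)}{p(p-\nb-1)}$ and $c_{t',t''}=-\frac{n_{t'}n_{t''}}{p(p-\nb-1)}$ for $t'\neq t''$.

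\emph{Assembling $G$ and $K$.} Substituting these scalars, using $\langle v_{t'},v_{t''}\rangle=\tfrac12(\|v_{t'}\|^2+\|v_{t''}\|^2-\|w_{t'}^{*}-w_{t''}^{*}\|^2)$ since $v_{t'}-v_{t''}=w_{t'}^{*}-w_{t''}^{*}$, and exploiting the cancellation $(p-1)-\nb=p-\nb-1$, the double sum collapses to $\E[\mu_t^{\top}A^{-1}\mu_t]=\tfrac1p\sum_{t'}n_{t'}\|w_{t'}^{*}-w_t^{*}\|^2+\tfrac1{2p(p-\nb-1)}\sum_{t',t''}n_{t'}n_{t''}\|w_{t'}^{*}-w_{t''}^{*}\|^2$; averaging over $t$ gives term $G_1$ and hence \eqref{eq:G-multi}. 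For the transfer identity, note $p\ge\nb+2\ge n_t+2$, so Theorem~\ref{theorem:single-task} applies to each single-task model, $\E[\mathcal{L}_t(w_t)]=(1-n_t/p)\|w_t^{*}\|^2+\frac{n_t\sigma^2}{p-n_t-1}$; subtracting the $\E[\mathcal{L}_t(w_{1:T})]$ computed above, the $\|w_t^{*}\|^2$ pieces combine to $\tfrac{\nb-n_t}{p}\|w_t^{*}\|^2$, the pure-noise pieces to $\frac{n_t\sigma^2}{p-n_t-1}-\frac{\nb\sigma^2}{p-\nb-1}$, and the remainder is $-\E[\mu_t^{\top}A^{-1}\mu_t]$; expanding each $\|w_{t'}^{*}-w_{t''}^{*}\|^2=\|w_{t'}^{*}\|^2+\|w_{t''}^{*}\|^2-2\langle w_{t'}^{*},w_{t''}^{*}\rangle$, averaging over $t$, and collecting the inner-product terms apart from the squared-norm terms produces $K_1$, $K_2$, $K_3$ as in \eqref{eq:K-multi}.

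\emph{Main obstacle.} The crux is the block inverse-Wishart identity for $c_{t',t''}$; once the Schur-complement reduction and the independence of the two Gaussian components of $X_{t'}$ are in place, the rest is the bias/variance split of the first step, the single-task formula of Theorem~\ref{theorem:single-task}, the elementary facts $\E[A^{-1}]=(p-\nb-1)^{-1}I$ and $\E[\Pi]=\tfrac{\nb}{p}I$, and routine algebra.
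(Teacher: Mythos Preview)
Your proposal is correct and follows essentially the same route as the paper: the Pythagorean split into $\|(I-\Pi)w_t^*\|^2$, the noise quadratic form, and the signal quadratic form; the identities $\E[\Pi]=\tfrac{\nb}{p}I$ and $\E[A^{-1}]=(p-\nb-1)^{-1}I$; a Schur-complement computation for the blockwise contributions; and the same algebraic collapse to \eqref{eq:single-loss-multi}. Your scalars $c_{t',t'}=\frac{n_{t'}(p-n_{t'}-1)}{p(p-\nb-1)}$ and $c_{t',t''}=-\frac{n_{t'}n_{t''}}{p(p-\nb-1)}$ are exactly the content of the paper's Lemmas~\ref{lemma:proj-expectations}(iii) and~\ref{lemma:p01p03-expectations}, rewritten (note $\frac{n_{t'}}{p}(1+\frac{\nb-n_{t'}}{p-\nb-1})=\frac{n_{t'}(p-n_{t'}-1)}{p(p-\nb-1)}$).

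The one genuine difference is in how the signal term is organized. The paper introduces the auxiliary matrices $P_{0|s}=X_{1:T}A^{-1}X_{0|s}^{\top}$ and proves two separate lemmas: the diagonal case $\E[\|P_{0|s}w\|^2]$ via a two-block Schur complement, and the off-diagonal case $\E[w^{\top}P_{0|s}^{\top}P_{0|s'}w']$ via a three-block inverse (Lemma~\ref{lemma:3x3-inv}). You instead reduce everything to the scalars $c_{t',t''}=p^{-1}\E\,\Tr([A^{-1}]_{t',t''}X_{t''}^{\top}X_{t'})$ by $O(p)$-invariance, and compute both the diagonal and off-diagonal cases with a \emph{single} leave-one-task-out (two-block) Schur complement together with the independence of $\Pi_{(-t')}X_{t'}$ and $(I-\Pi_{(-t')})X_{t'}$. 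This is a bit more economical than the paper's three-block machinery, but the underlying probabilistic ingredients (independence of the projected components, inverse-Wishart mean) are identical.
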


where $\langle w_t^*, w_{t'}^*\rangle$ denotes the inner product of the two vectors and the expectation is due to the randomness of $X_{1:T}$ and $z_{1:T}$. 

Proofs for all theorems are provided in Appendix~\ref{appendix:proofs}. To the best of our knowledge, this is the first theoretical analysis of multi-task learning in overparameterized linear models providing an exact closed-form expression under non-asymptotic conditions. Setting $T=1$ recovers the prior STL results~\citep{hastie2022surprises}, which means that our theorem is a more generalized version of Theorem~\ref{theorem:single-task}.

Theorem~\ref{theorem:multi-task} precisely characterizes various phenomena that occur in the multi-task setting. Terms $G_2$ and $G_3$, which also appear in the STL setup, correspond to the task norm and noise strength. However, term $G_1$, which is specific to the MTL configuration, appears due to the distance between the optimal task vectors and is directly affected by task similarities. In fact, this observation motivates us to define a notion of similarity between any two tasks $t$ and $t'$ as the Euclidean distance between the optimal task vectors, i.e. $\Vert w_{t} - w_{t'} \Vert_2$.

\begin{figure}[t] 
  \centering 
  
  \begin{subfigure}{0.242\textwidth}
    \includegraphics[width=\linewidth]{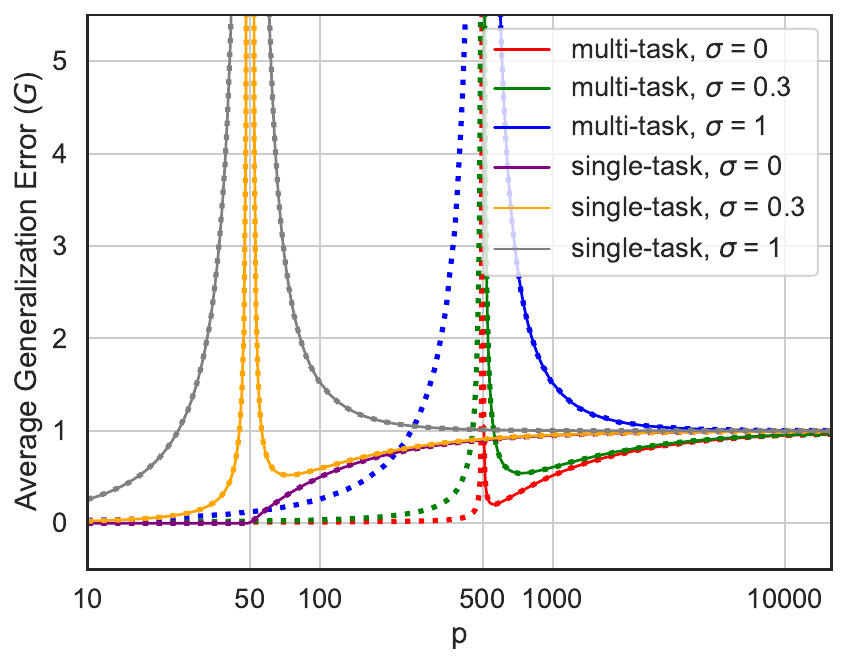}
    \caption{collaborative tasks}
    \label{fig:multi_vs_single_colloborative}
  \end{subfigure}
  \hspace{-6mm}
  \hfill
  \begin{subfigure}{0.231\textwidth}
    \includegraphics[width=\linewidth]{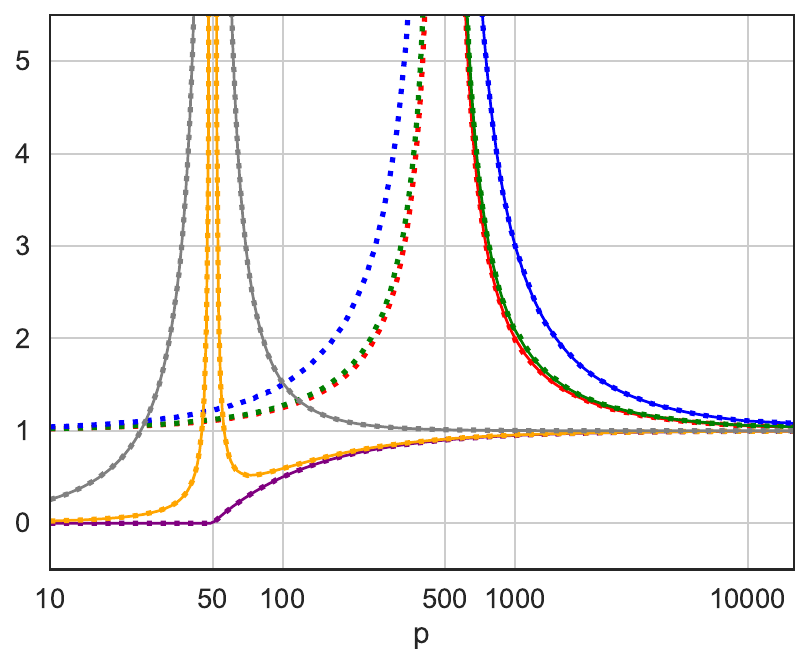}
    \caption{conflicting tasks}
    \label{fig:multi_vs_single_conflicting}
  \end{subfigure}
  \hspace{+0.1mm}
   \color{customgray}\vrule width 0.1pt 
  \hspace{+0.1mm}
    \begin{subfigure}{0.249\textwidth}
    \includegraphics[width=\linewidth]{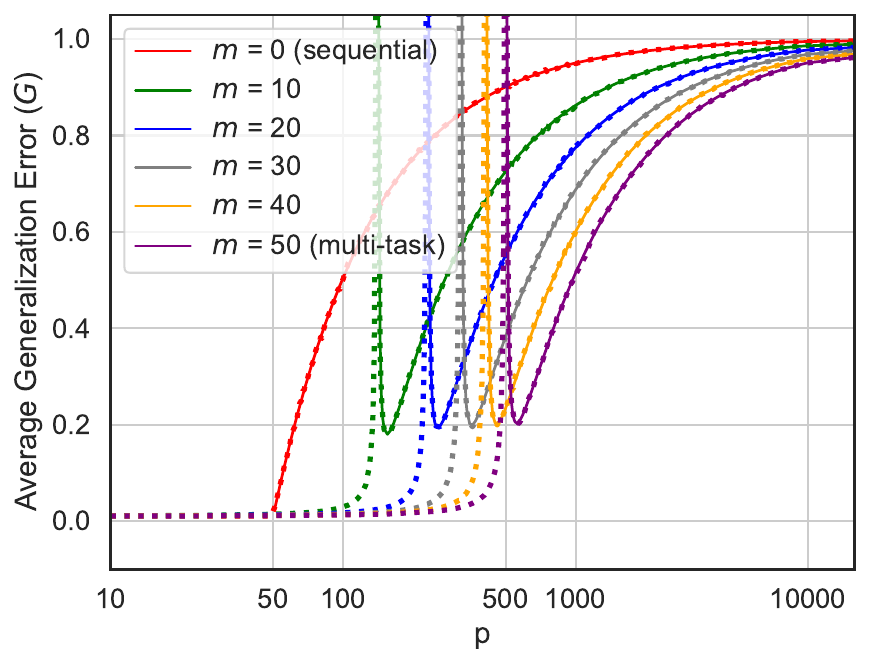}
    \caption{collaborative tasks}
    \label{fig:CL_colloborative}
  \end{subfigure}
  \hspace{-6mm}
  \hfill
  \begin{subfigure}{0.238\textwidth}
    \includegraphics[width=\linewidth]{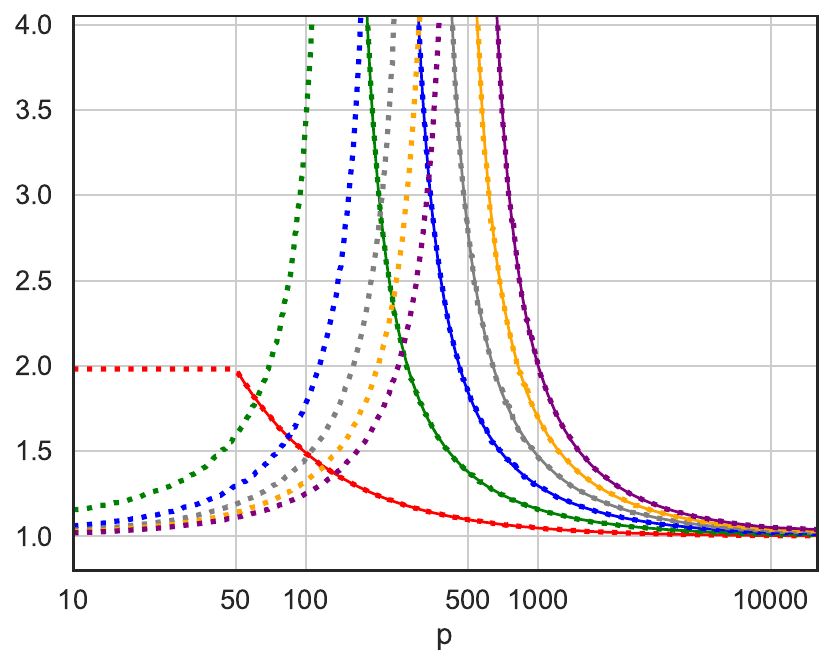}
    \caption{conflicting tasks}
    \label{fig:CL_conflicting}
  \end{subfigure}
  \hfill
  \caption{The average generalization error of multi-task, single-task and continual learners w.r.t. to the model size $p$. (a) and (b) compare the MTL and STL for different levels of noise strength $\sigma$. (c) and (d) present the generalization error of replay-based continual learners for different memory sizes $m$ with zero noise. For all plots, $T=10$, and for all tasks $n_t = 50$ and $\Vert w_t^* \Vert ^ 2 = 1$. In subfigures (a) and (c), the tasks are designed to be collaborative by adjusting $\langle w^*_t,w^*_{t'} \rangle = \cos \frac{\pi}{8}$ for every pair of task vectors. In subfigures (b) and (d), the tasks are chosen to be conflicting by setting $\langle w^*_t,w^*_{t'} \rangle = \cos \frac{7\pi}{8}$. The solid lines represent the results from theoretical predictions. The dot marks are the empirical evaluations averaged over 500 repetitions and are perfectly aligned with the theoretical results in the overparameterized regime.}
  \label{fig:theory_plots}
\end{figure}

To achieve a better insight into the results of Theorem~\ref{theorem:multi-task}, Figures~\ref{fig:multi_vs_single_colloborative} and~\ref{fig:multi_vs_single_conflicting} provide a visualization of the empirical average generalization error for different values of $p$, $\sigma$, and task similarities, as well as those predicted by our theory. Concretely, we have the following observations:
\begin{enumerate}[left=2mm]
    \item \label{observation:Interpolation_th_shifts} \textbf{Interpolation threshold shifts}. Similar to the single-task learner, an interpolation threshold exists for the multi-task learner. For both models, the average error increases at the threshold and decreases afterward. However, the interpolation threshold for the single-task learner occurs at $p = n_t$, while for the multi-task learner, it happens at $p = \nb$, which is the point where the number of MTL training samples matches the number of parameters.

\item \label{observation:tasks_together} \textbf{Not all tasks can be learned together}. The presence of the term $\Vert w_t^* - w_{t'}^* \Vert^2_2$ in $G_1$ indicates the impact of task similarity on the overall generalization performance. More specifically, if the tasks are highly similar, meaning that their optimal parameters are close, utilizing the training dataset of one task can improve the test generalization of the other task. However, when using a multi-task learner to simultaneously learn tasks with considerable gaps, interference happens in the parameters space and leads to poor performance.

 The knowledge transfer also tightly depends on task similarity. In fact, the sign of the $K_1$ term is closely correlated to the pairwise cosine similarity of the tasks. With conflicting tasks, negative knowledge transfer~\citep{wang2019characterizing} happens and the multi-task learner underperforms the single-task learner. 
In addition to that, the $K_3$ term in Equation~\eqref{eq:K-multi}  reveals that with the same $p$, the multi-task learner performs worse against the noise. However, to fairly compare the models, notice that the multi-task learner, which has only $p$ parameters, is being compared against an average of $T$ single-task learners with $T \times p$ overall parameters. 

Finally, collaboration or conflict across the tasks is observable in models with limited capacity. In other words, when $p \to \infty$, both $G_1$ and $K_1$ terms vanish, meaning that neither collaboration nor interference happens anymore in infinite wide models.
    
\item \label{observation:task_dissimilarity} \textbf{Task dissimilarity and noise intensify the error peak}. Recall that the STL test error peak that exists at the interpolation threshold appears because of the high variance of the learned predictor. This behavior stems only from the label noise and intensifies as $\sigma$ grows larger. Nevertheless, when $\sigma=0$, meaning that there exists no label noise in the training set, no error peak happens at the interpolation threshold of the single-task learner. Meanwhile, in an MTL setup, there are two sources for the error peak. Not only can the label noise cause the peak through the term $G_3$, but task dissimilarity can also contribute through the term $G_1$, meaning that highly dissimilar tasks can produce larger error peaks. In fact, the stochasticity of data points (i.e., $X$), together with high task dissimilarity, can lead to highly variant predictors with poor generalization behavior. This observation explains earlier empirical findings where the double-descent phenomenon was observed even in the absence of label noise \citep{Nakkiran_2021} and also distinguishes the study of MTL models from STL learners.

\item \label{observation:descent_floor} \textbf{Descent floor exists when tasks are collaborative}. When tasks are collaborative and the noise is not too strong, increasing $p$ in the overparameterized regime can be a double-edged sword. On one hand, increasing the number of parameters can help by reducing the effect of noise through the term $G_3$. On the other hand, increasing $p$ to $\infty$ can kill the positive knowledge transfer across the tasks. There exists a point in the middle where the effect of both mechanisms matches and a descent floor appears in the generalization error (see Figure~\ref{fig:multi_vs_single_colloborative} as an example).

\end{enumerate}

Although these observations are derived from our theoretical analysis of linear models, in Section~\ref{sec:exp}, we will use an empirical approach to demonstrate that DNNs trained with SGD also exhibit similar behaviors. Consequently, our theoretical insights can extend to and enhance the understanding of more complex models.

\section{Theoretical Results on CL in Overparameterized Regimes} \label{sec:CL}

Continual learning can be considered a special type of MTL, with an extra constraint that the tasks arrive sequentially during a continual episode. At each timestep $t$ the model is only exposed to $\mathcal{D}_t$, and the goal is to continually train a model that performs well across all tasks at the end of the episode.
Note that the full data for past learned tasks is not accessible when the current task is being learned. In this case, sequentially fine-tuning on the most recent task at hand results in a model that performs well on the most recent task, but with degraded performance on previous tasks, a phenomenon which is called \textit{catastrophic forgetting}. 
Formally, let $w_{\vv{t}}$ denote the weights of the continual learner at timestep $t$ after sequentially observing tasks from $1$ to $t$. \textit{Average forgetting} (lower is better) reflects the amount of negative backward knowledge transfer in a continual learner, defined as \(F(w_{\vv{T}}):= \frac{1}{T-1} \sum_{t=1}^{T-1} \left [ \mathcal{L}_t(w_{\vv{T}}) - \mathcal{L}_t(w_{\vv{t}})\right ] \). For simplicity, in the rest of this section, we assume that $n_t = n$ for all tasks. 

\paragraph{Implicit Regularization for Continual Learning.} Prior to our work, recent theoretical CL studies relied on the implicit regularization of SGD to achieve a continual learning algorithm and analyzed its characteristics \citep{Evron2022, Linn2023CLTH, Goldfarb2023Forg, goldfarb2024the}. In other words, they start from a zero initialization of weights, i.e. $w_{\vv{{0}}} = 0$, and simply use SGD to fit the training data of the new tasks as the tasks arrive sequentially. Because of the implicit bias of SGD, this method is equivalent to the following optimization:
{
\begin{equation}\label{eq:reg-loss}
    w_{\vv{{t+1}}} = \argmin_{w \in \mathbb{R}^p} \Vert w - w_{\vv{{t}}} \Vert_2^2 \quad  \text{s.t.} \quad X_{t+1}^\top w=y_{t+1},
\end{equation}}
where $w_{\vv{{0}}} = 0 \in \mathbb{R}^p$. This continual learner, often called \textit{sequential finetuning} in CL, is not effective enough in practice and yields high forgetting. Even further, using explicit regularization can not solve the issue in practice and therefore, there is a need to study more useful CL strategies. 

\paragraph{Replay-Based Continual Learning.} Most successful methods in deep continual learning benefit from memory replay buffer~\citep{vandeVen2022}. These methods store a small portion of the training data from each task. When learning new tasks, the stored samples are used to either regularize or retrain the model. In this section, we seek to theoretically study replay-based continual learning in the context of overparameterized linear models. Specifically, assume we have access to $m_i$ samples for task $i$ in the memory buffer when solving task $t+1$. Let the memory size be $\bar{m}_t = \sum_{i=1}^{t} m_i$. We use the notation $\hat{X}_{1:t} \in \mathbb{R}^{p \times \bar{m}_t}$ and $\hat{y}_{1:t} \in \mathbb{R}^{\bar{m}_t}$ to denote the training data and their labels in the memory. 

A simple strategy to update the weights, in this case, is to ignore prior weights learned so far, start again from the zero-initialized weights, and use SGD to fit both the data of the new task as well as the older data stored in the memory buffer. The described replay-based continual learning optimization is equivalent to

{
\begin{equation} \label{eq:mem-loss}
    w_{\vv{{t+1}}} = \argmin_{w \in \mathbb{R}^p} \Vert w \Vert_2^2 \quad  \text{s.t.} \quad X_{t+1}^\top w=y_{t+1}, \quad \hat{X}_{1:t}^\top w=\hat{y}_{1:t}.
\end{equation}}

As opposed to Equation~\ref{eq:reg-loss}, the continual learner introduced here, tends toward low-norm solutions, but does not seek to minimize its distance with the weights acquired so far from previous tasks. Therefore, we name this strategy as pure replay-based continual learning.

\paragraph{Implicit Regularization+Replay-Based Continual Learning.} 
Lastly, we can think of a continual learner that uses a mixed strategy of relying on both the replay memory buffer as well as the implicit regularization, i.e., to store older samples in the memory and also sequentially update the weights using SGD without reinitializing to zero. This strategy can be formulated as solving
{
\begin{equation}\label{eq:memreg-loss}
    w_{\vv{{t+1}}} = \argmin_{w \in \mathbb{R}^p} \Vert w - w_{\vv{{t}}} \Vert_2^2 \quad  \text{s.t.} \quad X_{t+1}^\top w=y_{t+1}, \quad \hat{X}_{1:t}^\top w=\hat{y}_{1:t},
\end{equation}}
where $w_{\vv{{0}}} = 0 \in \mathbb{R}^p$. Notice that the optimization described in Equation~\ref{eq:reg-loss} is a special case of Equation~\ref{eq:memreg-loss} when $m=0$.

\subsection{Theoretical Results on CL} 

In this section, we theoretically study the continual learners described in Equations \eqref{eq:mem-loss} and \eqref{eq:memreg-loss}. With a closer look, solving Equation~\eqref{eq:mem-loss} at $t=T$ is a specific case of multi-task learning described in Equation~\eqref{eq:multi-task-loss}, achieved by setting $n_1 = m_1, ..., n_{T-1}=m_{T-1}$ and $n_T = n$. Therefore, we avoid repeating the theoretical results, and instead, we provide a corollary for the two-task case. 

\begin{theorem} \label{theorem:mem}
Assume $T = 2$, $\sigma = 0$, $m_1 = m$, $n_1 = n_2 = n$ and $\Vert w_1^* \Vert = \Vert w_2^* \Vert$ . When $p \geq n + m + 2$, for the continual learner described in Equation \eqref{eq:mem-loss}, it holds that
{
\begin{equation}\label{eq:mem-error}
    \mathbb{E}[G(w_{\vv{{T}}})] = 
    \underbrace{
     \frac{n}{2p}( 1 + \frac{m}{n} + \frac{2m}{p-(n+m)-1}) \Vert w_1^* - w_2^* \Vert_2^2
    }_\text{term $G_1$}
    + 
    \underbrace{
    (1 - \frac{n+m}{p}) \Vert w_1^*\Vert_2^2 
    }_\text{term $G_2$}
    ,
\end{equation}}

{
\begin{equation}\label{eq:mem-forget}
\begin{split}
    \mathbb{E} [F(w_{\vv{{T}}})] = 
    \underbrace{
    \frac{n}{p} (1 + \frac{m}{p - (n+m) -1}) \Vert w_1^* - w_2^*\Vert_2^2
    }_\text{term $F_1$}
    - 
    \underbrace{
    \frac{m}{p} \Vert w_1^*\Vert_2^2
     }_\text{term $F_2$}
    .
\end{split}
\end{equation}}
\end{theorem}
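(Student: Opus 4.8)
The plan is to obtain Theorem~\ref{theorem:mem} as a specialization of Theorem~\ref{theorem:multi-task}, exploiting the observation already made in the text that solving Equation~\eqref{eq:mem-loss} at $t=T$ is exactly the multi-task optimization of Equation~\eqref{eq:multi-task-loss} with task sample counts $(n_1,\dots,n_{T-1},n_T)=(m_1,\dots,m_{T-1},n)$. For the two-task case we set $T=2$, $n_1=m$, $n_2=n$, so that $\nb = n+m$, and we impose $\sigma=0$ and $\Vert w_1^*\Vert = \Vert w_2^*\Vert$. First I would substitute these values into Equation~\eqref{eq:G-multi}. The noise term $G_3$ drops since $\sigma=0$. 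In term $G_2$, the factor $\frac1T(1-\tfrac{\nb}{p})\sum_t\Vert w_t^*\Vert^2$ becomes $\frac12(1-\tfrac{n+m}{p})(\Vert w_1^*\Vert^2+\Vert w_2^*\Vert^2) = (1-\tfrac{n+m}{p})\Vert w_1^*\Vert^2$ using the equal-norm assumption, matching the claimed $G_2$. The double sum in term $G_1$ has only the off-diagonal pair $(t,t')\in\{(1,2),(2,1)\}$ contributing (the diagonal has $\Vert w_t^*-w_t^*\Vert^2=0$), so it equals $\tfrac{n_2}{Tp}(1+\tfrac{Tn_1}{2(p-\nb-1)})\Vert w_1^*-w_2^*\Vert^2 + \tfrac{n_1}{Tp}(1+\tfrac{Tn_2}{2(p-\nb-1)})\Vert w_1^*-w_2^*\Vert^2$; plugging $T=2$, $n_1=m$, $n_2=n$ and collecting gives $\tfrac{1}{2p}\big(n+m + \tfrac{2mn}{p-(n+m)-1}\big)\Vert w_1^*-w_2^*\Vert^2 = \tfrac{n}{2p}(1+\tfrac{m}{n}+\tfrac{2m}{p-(n+m)-1})\Vert w_1^*-w_2^*\Vert^2$, which is precisely term $G_1$ in Equation~\eqref{eq:mem-error}. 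This step is essentially bookkeeping once the identification with MTL is in place.

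Next I would handle the forgetting quantity $F(w_{\vv{T}})$. Here the subtlety is that $F$ is \emph{not} simply a relabeling of the MTL knowledge-transfer metric $K$: forgetting is measured as $\frac{1}{T-1}\sum_{t=1}^{T-1}[\mathcal{L}_t(w_{\vv{T}}) - \mathcal{L}_t(w_{\vv{t}})]$, comparing the final continual model against the model obtained right after task $t$ was learned. For $T=2$ this reduces to $F(w_{\vv{2}}) = \mathcal{L}_1(w_{\vv{2}}) - \mathcal{L}_1(w_{\vv{1}})$. The first term, $\mathcal{L}_1(w_{\vv{2}}) = \Vert w_{\vv{2}} - w_1^*\Vert^2$, is a per-task (not averaged) generalization error of the same two-task minimum-norm interpolator analyzed above; I would extract its expectation from the intermediate computations in the proof of Theorem~\ref{theorem:multi-task} (i.e. $\mathbb{E}\Vert w_{1:2} - w_1^*\Vert^2$ before averaging over $t$), rather than from the averaged statement. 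The second term, $\mathcal{L}_1(w_{\vv{1}}) = \Vert w_{\vv{1}} - w_1^*\Vert^2$, is the generalization error of the single-task minimum-norm interpolator trained on task $1$ alone with $n_1 = n$ samples; by Theorem~\ref{theorem:single-task} with $\sigma=0$ its expectation is $(1-\tfrac{n}{p})\Vert w_1^*\Vert^2$. Subtracting and simplifying — again using $\Vert w_1^*\Vert=\Vert w_2^*\Vert$ — should collapse to term $F_1$ minus term $F_2$ as stated in Equation~\eqref{eq:mem-forget}.

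The main obstacle I anticipate is the bias/variance decomposition needed for the \emph{non-averaged} per-task error $\mathbb{E}\Vert w_{1:2} - w_1^*\Vert^2$. The averaged expression in Theorem~\ref{theorem:multi-task} has already symmetrized over which task plays the role of ``task $t$'', so I cannot read off the single-task term directly; I would need to revisit the underlying computation. Concretely, writing $w_{1:T} = X_{1:T}(X_{1:T}^\top X_{1:T})^{-1} y_{1:T}$ (or the pseudoinverse form) and decomposing $y_{1:T} = X_{1:T}^\top w_1^* + X_{1:T}^\top(\text{per-task offsets}) + z_{1:T}$, the deviation $w_{1:2} - w_1^*$ splits into a projection term $-(I - \Pi)w_1^*$ where $\Pi$ is the orthogonal projector onto the column span of $X_{1:2}$ (contributing $(1-\tfrac{\nb}{p})\Vert w_1^*\Vert^2$ in expectation, by the standard Gaussian-subspace argument), plus a term capturing how the task-2 rows pull the solution toward $w_2^*$, which produces the $\Vert w_1^* - w_2^*\Vert^2$ contributions with the $\tfrac{n_2}{p}$-type and $\tfrac{n_1 n_2}{p(p-\nb-1)}$-type coefficients coming from $\mathbb{E}[\Pi]$ and from second-moment (inverse-Wishart) identities for $\mathbb{E}[(X_{1:2}^\top X_{1:2})^{-1}]$ restricted to the relevant block. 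Getting the cross-block coefficient $\tfrac{m}{p-(n+m)-1}$ right in $F_1$ is where the inverse-Wishart trace computation must be done carefully; everything else is substitution and the equal-norm simplification.
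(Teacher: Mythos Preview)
Your proposal is correct and matches the paper's approach essentially line for line. The paper does exactly what you outline: it invokes the per-task intermediate expression $\mathbb{E}[\mathcal{L}_i(w_{1:T})]$ derived during the proof of Theorem~\ref{theorem:multi-task} (labeled there as Equation~\eqref{eq:single-loss-multi}), specializes to $T=2$ with sample sizes $(n_1,n_2)=(m,n)$ and $\sigma=0$, averages $\mathcal{L}_1$ and $\mathcal{L}_2$ to obtain $G$, and for $F$ subtracts the single-task error $\mathbb{E}[\mathcal{L}_1(w_{\vv{1}})]=(1-\tfrac{n}{p})\Vert w_1^*\Vert^2$ from $\mathbb{E}[\mathcal{L}_1(w_{\vv{2}})]$. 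Your anticipated ``obstacle'' in the last paragraph is therefore already resolved by that labeled intermediate formula, so no fresh inverse-Wishart computation is needed beyond what the MTL proof already contains.
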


The generalization error of the regularization+replay-based learner is more complicated compared to the pure replay-based model and requires more delicate investigation. In fact, the dependence of $w_{\vv{t}}$ to the samples in the memory buffer is the source of such complication and produces many cross-terms in the final expression. To keep the results understandable and intuitive, we present the results for the two tasks case here and leave the full form for Appendix~\ref{appendix:memreg}.

\begin{theorem} \label{theorem:mem-reg}
Assume $T = 2$, $\sigma = 0$, $m_1 = m$, $n_1 = n_2 = n$ and $\Vert w_1^* \Vert = \Vert w_2^* \Vert$ .When $p \geq n + m + 2$, for the continual learner described in Equation \eqref{eq:memreg-loss}, it holds that

{
\begin{equation}\label{eq:memreg-error}
    \mathbb{E}[G(w_{\vv{{T}}})] = 
    \underbrace{
     \frac{n}{2p}(2 - \frac{n - m}{p-m} +  \frac{2m}{p-(n+m)-1}) \Vert w_1^* - w_2^* \Vert_2^2
    }_\text{term $G_1$}
    + 
    \underbrace{
    (1 - \frac{n}{p - m})(1 - \frac{n}{p}) \Vert w_1^*\Vert_2^2 
    }_\text{term $G_2$}
    ,
\end{equation}
}

{
\begin{equation}\label{eq:memreg-forget}
    \mathbb{E} [F(w_{\vv{{T}}})] = 
    \underbrace{
    \frac{n}{p} (1 + \frac{m}{p - (n + m) - 1}) \Vert w_1^* - w_2^*\Vert_2^2
    }_\text{term $F_1$}- 
    \underbrace{
    \frac{n}{p - m} (1 - \frac{n}{p}) \Vert w_1^*\Vert_2^2
     }_\text{term $F_2$}
    .
\end{equation}}
\end{theorem}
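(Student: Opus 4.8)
\textbf{Proof plan for Theorem~\ref{theorem:mem-reg}.}

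The plan is to peel the two-step continual process apart and express $w_{\vv{2}}$ explicitly in terms of the ground truths and the design matrices, then take expectations using the Gaussian/Wishart machinery already invoked for Theorems~\ref{theorem:single-task}--\ref{theorem:mem}. Concretely: at $t=1$ the learner solves $w_{\vv{1}} = \arg\min \|w\|_2^2$ subject to $X_1^\top w = y_1$, so $w_{\vv{1}} = P_1 w_1^*$ where $P_1$ is the orthogonal projection onto the row space of $X_1^\top$ (using $\sigma=0$, so $y_1 = X_1^\top w_1^*$). At $t=2$ the learner solves $w_{\vv{2}} = \arg\min \|w - w_{\vv{1}}\|_2^2$ subject to the stacked constraints $X_2^\top w = y_2$ and $\hat X_{1}^\top w = \hat y_{1}$. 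Writing $A = [X_2 \; \hat X_1] \in \mathbb{R}^{p \times (n+m)}$ and $b = A^\top w^{*}_{\text{(blockwise)}}$ (the first $n$ entries from $w_2^*$, the last $m$ from $w_1^*$, again using $\sigma=0$), the solution is $w_{\vv{2}} = w_{\vv{1}} + A(A^\top A)^{-1}(b - A^\top w_{\vv{1}})$, i.e. $w_{\vv{2}} = \Pi_A^{\perp} w_{\vv{1}} + A(A^\top A)^{-1} b$ where $\Pi_A^\perp = I - A(A^\top A)^{-1}A^\top$ is the projection onto the orthogonal complement of $\mathrm{col}(A)$. Then $w_{\vv{2}} - w_2^* = \Pi_A^\perp(w_{\vv{1}} - w_2^*) + A(A^\top A)^{-1}(b - A^\top w_2^*)$, and since $b - A^\top w_2^*$ has zero in its first $n$ coordinates and equals $\hat X_1^\top(w_1^* - w_2^*)$ in its last $m$, this second term is a controlled correction supported on the memory block. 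A symmetric (in fact simpler) expression holds for $w_{\vv{2}} - w_1^*$ since the constraint $\hat X_1^\top w = \hat X_1^\top w_1^*$ forces agreement with $w_1^*$ along $\mathrm{col}(\hat X_1)$.

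Next I would compute $\mathbb{E}\|w_{\vv{2}} - w_1^*\|_2^2$ and $\mathbb{E}\|w_{\vv{2}} - w_2^*\|_2^2$, which give $\mathbb{E}[\mathcal{L}_1(w_{\vv{2}})]$ and $\mathbb{E}[\mathcal{L}_2(w_{\vv{2}})]$, from which $G$ and $F$ follow (noting $\mathcal{L}_2(w_{\vv{2}})$ at the end equals $\mathcal{L}_2(w_{\vv{2}})$ trivially, $\mathcal{L}_2(w_{\vv{1}})$ is the STL-type quantity from Theorem~\ref{theorem:single-task} with one task, and $\mathcal{L}_1(w_{\vv{1}})=0$ since $\sigma=0$ and $p > n$ — wait, $\mathcal{L}_1(w_{\vv{1}}) = \|P_1 w_1^* - w_1^*\|_2^2 = \|(I-P_1)w_1^*\|_2^2$, whose expectation is $(1 - n/p)\|w_1^*\|_2^2$). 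Expanding the squared norms produces three kinds of terms: (i) $\|\Pi_A^\perp(w_{\vv{1}} - w_j^*)\|_2^2$, (ii) the cross term $2\langle \Pi_A^\perp(w_{\vv{1}} - w_j^*),\, A(A^\top A)^{-1}(b - A^\top w_j^*)\rangle$, and (iii) $\|A(A^\top A)^{-1}(b - A^\top w_j^*)\|_2^2$. For (iii) one uses that for $A$ with i.i.d.\ standard Gaussian entries and an independent vector $v$ of dimension $n+m$, $\mathbb{E}[v^\top (A^\top A)^{-1} v] = \|v\|^2/(p - (n+m) - 1)$ (inverse-Wishart mean), which produces the $\frac{m}{p-(n+m)-1}$ factors; here $v = b - A^\top w_j^*$ has the form $\hat X_1^\top(w_1^* - w_2^*)$ in the relevant block, so $\mathbb{E}\|v\|^2 = m\|w_1^* - w_2^*\|_2^2$. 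For (i) I would condition on $A$ and use that $\Pi_A^\perp$ is a projection of rank $p-(n+m)$ onto a subspace that is uniformly random and \emph{independent of} $w_{\vv{1}} - w_j^*$ only if $w_{\vv{1}}$ is independent of $A$ — it is not, since $\hat X_1$ is a sub-selection of the columns of $X_1$. Handling this dependence is the crux: I would decompose $X_1 = [\hat X_1 \;\; \tilde X_1]$ (memory columns vs.\ discarded columns), note $\mathrm{col}(\hat X_1) \subseteq \mathrm{col}(A)$ so $\Pi_A^\perp$ annihilates that part, and use that $\tilde X_1$ is independent of $A$. Writing $P_1 = P_{\hat X_1} + Q$ where $Q$ is the projection onto the part of $\mathrm{col}(X_1)$ orthogonal to $\mathrm{col}(\hat X_1)$, one gets $w_{\vv{1}} - w_j^* = (P_{\hat X_1} + Q - I)w_1^* + (w_1^* - w_j^*)$, and the $P_{\hat X_1}$ and $(w_1^*-w_j^*)$ pieces interact with $\hat X_1$ (hence with $A$) while $Q$ and $(I - P_1)$ are comparatively benign; carefully tracking which pieces survive $\Pi_A^\perp$ and computing the resulting traces of products of (conditionally) independent projections yields the $\frac{n-m}{p-m}$ and $(1 - \frac{n}{p-m})$ factors that distinguish this theorem from Theorem~\ref{theorem:mem}.

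Finally I would assemble all the pieces, impose $\|w_1^*\| = \|w_2^*\|$ to simplify the quadratic forms (which lets $2\langle w_1^*, w_2^*\rangle = \|w_1^*\|^2 + \|w_2^*\|^2 - \|w_1^* - w_2^*\|^2$ absorb cross terms into the two surviving structural quantities $\|w_1^* - w_2^*\|_2^2$ and $\|w_1^*\|_2^2$), average $\mathcal{L}_1$ and $\mathcal{L}_2$ to get $G$, and subtract the baselines $\mathcal{L}_1(w_{\vv{1}}) = (1-\tfrac np)\|w_1^*\|^2$ and $\mathcal{L}_2(w_{\vv{2}})$ appropriately to get $F = \tfrac12[(\mathcal{L}_1(w_{\vv 2}) - \mathcal{L}_1(w_{\vv 1})) + (\mathcal{L}_2(w_{\vv 2}) - \mathcal{L}_2(w_{\vv 2}))]$ — actually $F$ here is over $T-1=1$ task so $F(w_{\vv 2}) = \mathcal{L}_1(w_{\vv 2}) - \mathcal{L}_1(w_{\vv 1})$, which is the cleanest of the computations. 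I expect the main obstacle to be step (i)/the cross-terms in the middle paragraph: the statistical dependence between the first-task solution $w_{\vv{1}}$ and the memory matrix $\hat X_1$ (a column-subset of $X_1$) breaks the clean independence that makes Theorem~\ref{theorem:mem} a direct Wishart computation, and it is precisely this dependence that generates the extra $p - m$ denominators. Everything else is bookkeeping with inverse-Wishart first moments and traces of independent Gaussian projections, along the lines already used for Theorems~\ref{theorem:single-task} and~\ref{theorem:mem}.
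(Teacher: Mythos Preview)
Your plan is sound and would reach the result, and it tracks the paper's argument closely in spirit: write $w_{\vv{2}}$ via the constrained least-squares update, split the error into the $\Pi_A^\perp$ piece and the pseudoinverse piece, and use inverse-Wishart moments for the latter. One small slip: your cross term (ii) is identically zero, since $\Pi_A^\perp A = 0$ makes the two summands orthogonal --- so there is nothing to compute there.

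The substantive difference is in how the dependence between $w_{\vv{1}}$ and $\hat X_1$ is dissolved. You propose to split $P_1 = P_{\hat X_1} + Q$ and carry $Q$ through $\Pi_A^\perp$; this works but leaves you tracking a random projection $Q$ that still depends on $\hat X_1$ through $(I-P_{\hat X_1})$, so the conditioning is a bit delicate. The paper instead first peels off $X_2$ from $A$ (this is clean because $X_2$ is independent of everything at step~1) to reduce the problem to evaluating $\E\|P_{\hat X_1}(w_{\vv{1}}-w_i^*)\|^2$, and then uses the one-line identity $P_{\hat X_1}P_1 = P_{\hat X_1}$ (equivalently $P_{\hat X_1}\hat X_1^\dagger$ applied to the block pseudoinverse formula) to get $P_{\hat X_1}w_{\vv{1}} = P_{\hat X_1}w_1^*$. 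This instantly removes the discarded columns $\tilde X_1$ from the picture and produces the $\frac{n}{p-m}$ and $(1-\frac{n}{p-m})$ factors without ever touching $Q$. The paper also packages this as a recursion in $t$ before specializing to $T=2$, which is why the result extends cleanly; your direct $T=2$ computation buys simplicity at the cost of that generality.
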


Equations~\eqref{eq:mem-error} to~\eqref{eq:memreg-forget}  reveal the effect of memory capacity. Setting $m=0$ yields the results for the sequential learner described in Equation~\ref{eq:reg-loss}. Increasing the memory size reduces the forgetting and the generalization error through the terms $G_2$ and $F_2$, respectively. However, a larger memory size can be harmful through the terms $G_1$ and $F_1$ if the tasks are highly dissimilar. Additionally, the full MTL model is recovered when $m=n$, meaning that all samples from the previous tasks are stored in the buffer. 

An interesting observation is the effect of model size on the forgetting and the generalization error. With a larger $p$, both the positive and negative terms in the forgetting vanish. This observation suggests that bigger models with more capacity are less vulnerable to forgetting \citep{Goldfarb2023Forg}. Moreover, the next observation is that a large $p$ reduces the effect of memory size which means a natural trade-off exists in practical applications with limited physical memory. In other words, one may consider investing the hardware in deploying larger models or consider a larger memory buffer for training samples. In fact, this is an essential result that sheds light on several state-of-the-art continual learning models where they reported that sample memory is not enough and the best solution is to also keep some parts of the architecture in the memory \citep{zhou2023model,wang2022foster,douillard2021dytox}. We revisit this result in Section~\ref{sec:exp} when studying DNNs. 

Although we presented the $T=2$ versions of the theorems for better comprehension, the mentioned phenomena are also observable in the $T>2$ case as presented in Appendices~\ref{appendix:full-form} and~\ref{appendix:memreg}. Figures~\ref{fig:CL_colloborative} and~\ref{fig:CL_conflicting} visualize the error behavior of the replay-based model for different values of $m$ when learning $T=10$ tasks. As demonstrated, the location of the error peak and its strength are affected by memory size, $m$. Moreover, it is surprisingly observable that the continual learner can sometimes outperform the MTL baseline even when tasks are collaborative, especially at the multi-task learner’s interpolation threshold.


\section{Empirical Exploration for Deep Neural Networks} \label{sec:exp}

\subsection{Empirical Results}

Although fully understanding studying DNNs through theory seems infeasible, recent literature has pointed out connections between DNNs optimized by SGD and linear models and used these models as a proxy to study DNNs (Please refer to Appendix~\ref{appendix:Linear_to_DNN} for a discussion on the connection between overparameterized DNNs and linear models). We follow a similar approach and empirically demonstrate how our results can provide insights for understanding and designing deep MTL and CL models. In summary, our most important observations are as follows:
\begin{itemize}[left=2mm]
\vspace{-2mm}
\item MTL DNNs experience a test error peak at the interpolation threshold, but unlike STL models, the peak not only depends on the label noise but is also intensified when tasks are highly dissimilar or ineffective parameter sharing is happening (Figure~\ref{fig:multi-exp}), a consistent behavior across various practical datasets and architecture (Figures~\ref{fig:multi_vs_single_scratch} and \ref{fig:multi_vs_single_different_datasets}).

\item The distance of optimal task-specific parameters correlates with the amount of conflict that is happening across different tasks and can provide evidence for deciding which layers to decouple (Figure~\ref{fig:weight_inner_prod}).

\item Memory-based continual learners also undergo a double descent as the number of parameters is increased. The characteristics of the error peak depend on the memory size as well as task-similarity and training size (Figure~\ref{fig:continual_experiment}).

\item High overparametrization can diminish the negative knowledge transfer (forgetting). Additionally, overparametrized models benefit less from explicit memory buffers. Therefore, it is important to adjust the resources accordingly between model parameters and raw training samples (Table~\ref{tab:result}).
\end{itemize}

\subsection{Experimental Setup} 

Our experiments are performed using three datasets and three backbone architectures to offer the generalizability and scalability of our results to non-linear models. It is notable that all experiments are repeated at least 3 times, and the average values are reported. In the subsequent paragraphs, we present the most important details of our experiments and leave further information for Appendix~\ref{appendix:exp-details}.

\paragraph{Datasets.} We use CIFAR-100, ImageNet-R (IN-R), and CUB-200 datasets in our experiments. We generate MTL or CL tasks by randomly splitting these datasets into 10 tasks, with an equal number of classes in each task~\citep{vandeVen2022}.  No specific data augmentation or preprocessing techniques were applied, except for resizing to accommodate the input requirements of different backbones.

\paragraph{Architecture.} We utilize the ResNet-18, ResNet-50 and ViT-B-16 as three different backbones in our experiments. ResNet-50 and ViT-B-16 were respectively pretrained on Imagenet-1k and Imagenet-21k and used as frozen feature extractors shared across all tasks. However, we used two versions of ResNet-18, namely, a frozen version, pretrained on CIFAR-10 and an unfrozen version with random initialization.

Our main focus is on the models operating in overparameterized regimes. Therefore, we leverage models that are complex enough to ensure that the training data is completely overfitted. When using the pretrained backbones, we implement a shared 3-layer MLP on top of the feature extractor followed by a classification head. We use the width of the MLP to manipulate the model's complexity by changing the number of hidden neurons for all of the 3 layers. On the other hand, when the backbone is trainable, we adjust the model's complexity by considering different values for the number of convolutional filters in the backbone. In particular, the ResNet-18 has layers with $[k,2k,4k,8k]$ number of filters and we control the width of the backbone by changing the value of $k$. The typical ResNet-18 width has $k=64$ filters. On top of this backbone, we use an MLP layer with a single hidden layer of size 128, followed by a classification head. 

In all of the single-task experiments, we use a linear classification head as the final layer. However, we consider two architectures for the MTL case. In the multi-head architecture, we employ a separate linear classifier for each task while in the single-head architecture, we use a shared linear head for all tasks. 

\begin{figure}[t] 
  \centering 
  \begin{subfigure}{0.24\linewidth}
    \includegraphics[width=\linewidth]{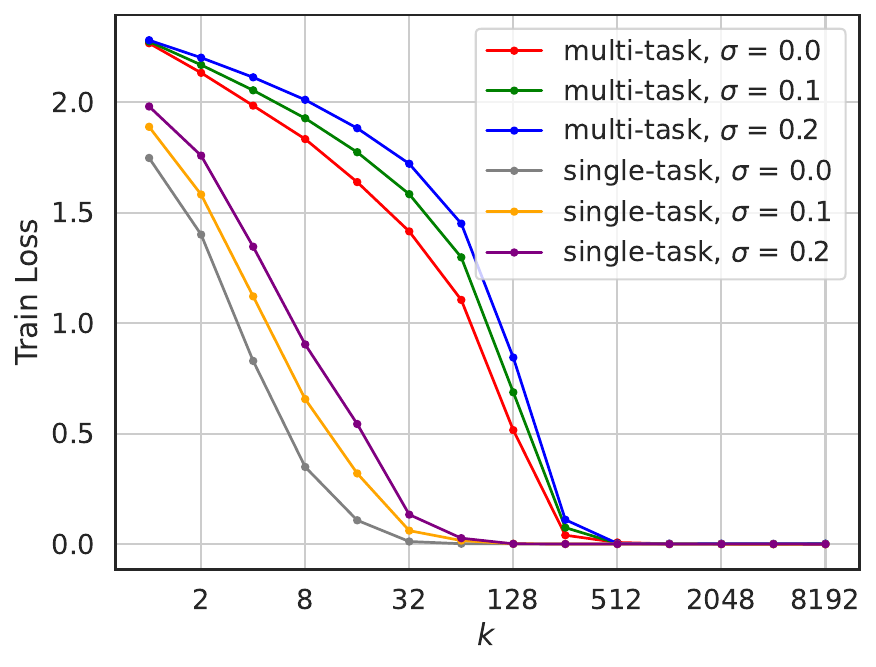}
    \caption{Single-head}
    \label{fig:MTL_vs_Single_cifar100_single_head_train_pretrained_resnet}
  \end{subfigure}
  \hfill
  \begin{subfigure}{0.24\linewidth}
    \includegraphics[width=\linewidth]{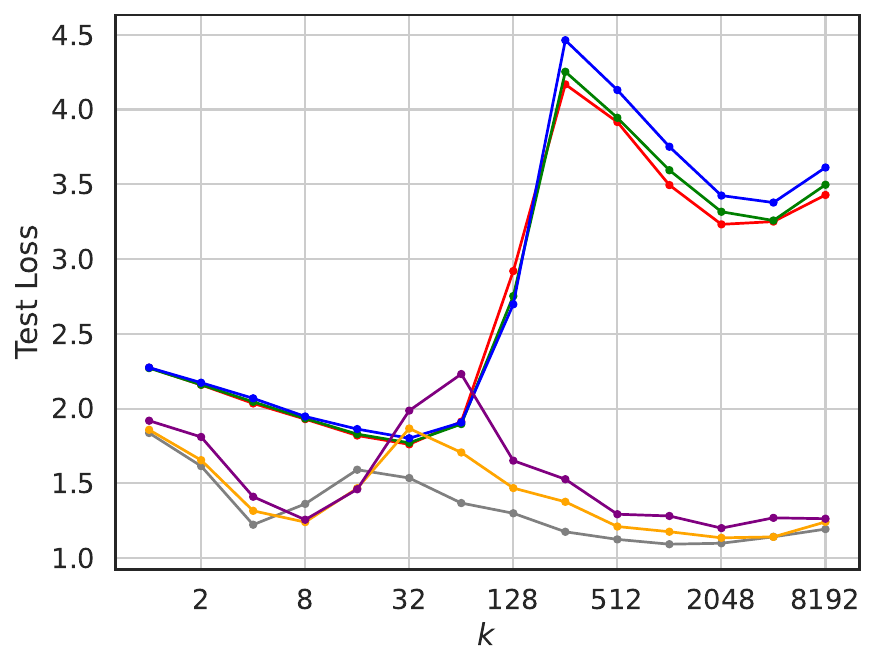}
    \caption{Single-head}
    \label{fig:MTL_vs_Single_cifar100_single_head_test_pretrained_resnet}
  \end{subfigure}
  \hfill
  \begin{subfigure}{0.24\linewidth}
    \includegraphics[width=\linewidth]{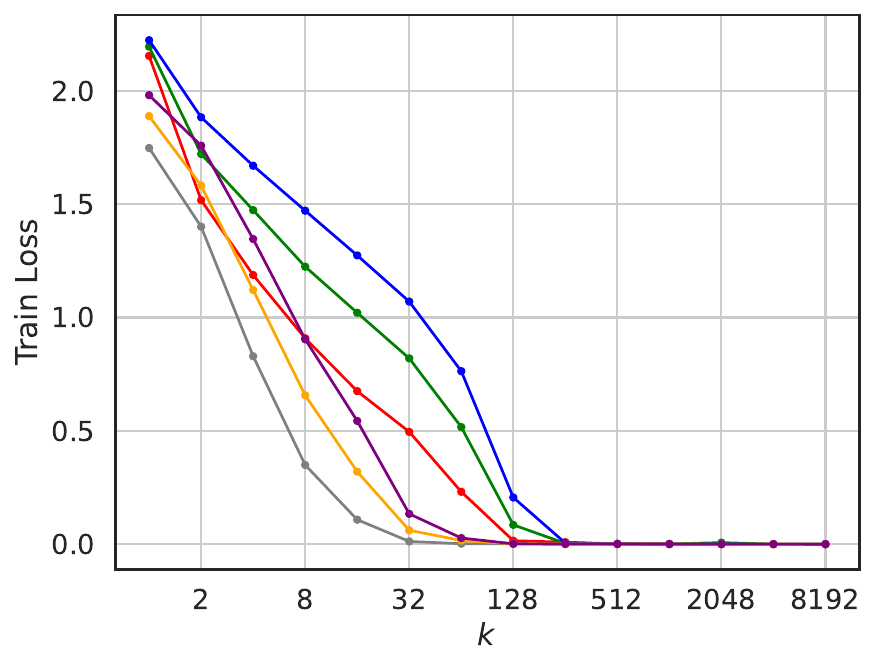}
    \caption{Multi-head}
    \label{fig:MTL_vs_Single_cifar100_multi_head_train_pretrained_resnet}
  \end{subfigure}
  \hfill
  \begin{subfigure}{0.24\linewidth}
    \includegraphics[width=\linewidth]{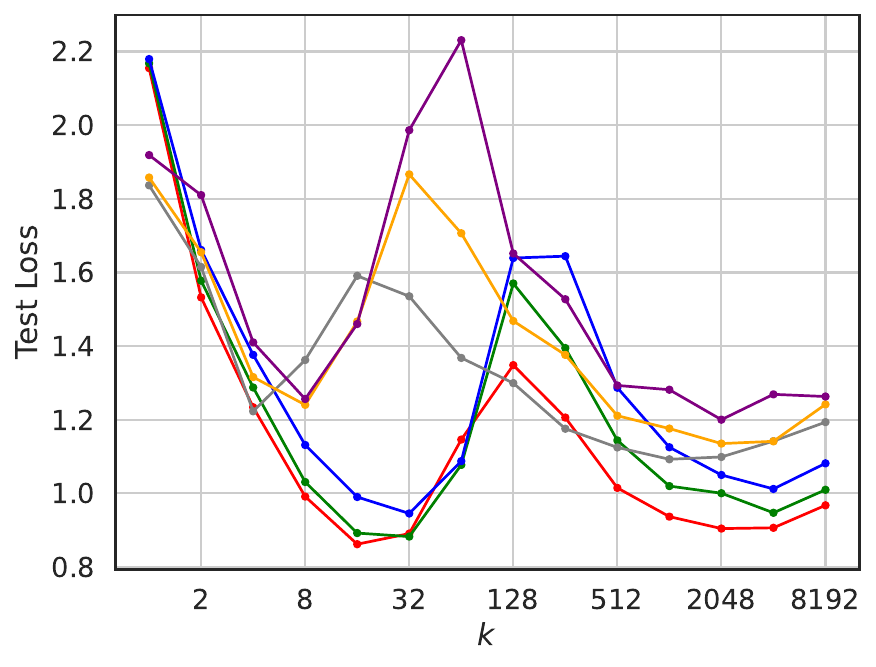}
    \caption{Multi-head}
    \label{fig:MTL_vs_Single_cifar100_multi_head_test_pretrained_resnet}
  \end{subfigure} 
    \hfill
  \caption{Comparing the performance of the single-task vs multi-task learners on CIFAR-100 using pretrained ResNet-18 backbone. The vertical axis is the cross-entropy loss and the horizontal axis is the width of the MLP on top of the backbone. $\sigma$ represents the noisy portion of the training set that was corrupted by randomly switching its labels. (a) and (b) correspond to the train and test loss with the single-head classifier, while (c) and (d) show the train and test loss of the multi-head architecture. Similar to linear models studied previously, deep MTL models also undergo the double descent behaviors with the test error peak depending on the characteristics of the task similarity and strength of the noise.}
  \label{fig:multi-exp}
 \vspace{-3mm}
\end{figure}

\begin{wrapfigure}{r}{0.33\textwidth}
\vspace{-2mm}
\includegraphics[width=\linewidth]{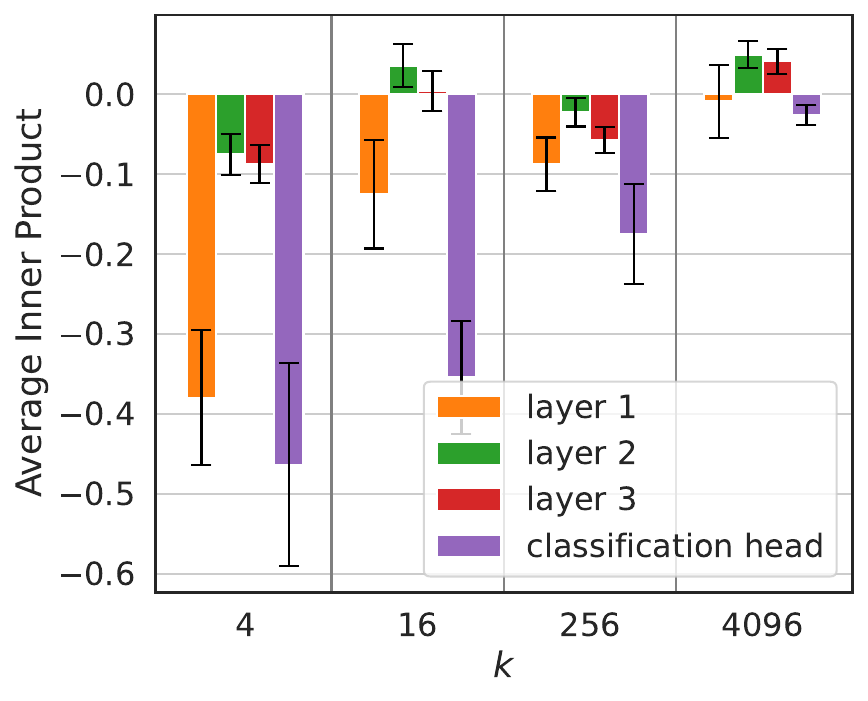}
  \caption{The average across-task inner product of task-specific optimal weights at different layers for different values of MLP width, $k$. As this figure shows, the weights of the classification head has the most negative inner products which means that the tasks are highly conflicting in this layer.}
  \label{fig:weight_inner_prod}
  \vspace{-3mm} 
\end{wrapfigure} 

\paragraph{Multi-task vs single-task learning.} 
Figure~\ref{fig:multi-exp} presents the train and test behavior of a multi-task DNN compared to single-task models, when trained on CIFAR-100 using the pretrained ResNet18. As depicted in Figures~\ref{fig:MTL_vs_Single_cifar100_single_head_train_pretrained_resnet} and~\ref{fig:MTL_vs_Single_cifar100_single_head_test_pretrained_resnet}, train error is almost zero for heavily overparameterized models. However, the test error behavior is in line with our theoretical analysis and is similar to the linear models illustrated in Figure~\ref{fig:theory_plots} from several aspects: (i)~The training interpolation threshold and the corresponding test loss peak for STL and MTL models are observable at different locations, which is consistent with the observation~\ref{observation:Interpolation_th_shifts}. (ii)~The error peak exists even with zero amount of label noise but is intensified with larger noise, which aligns with the observation~\ref{observation:task_dissimilarity}. (iii)~A descent floor exists in the overparameterized regime where the test loss is minimized, which is related to the observation~\ref{observation:descent_floor}. These observations are crucial when designing MTL models and should be considered when optimizing the model size.

In Figures~\ref{fig:MTL_vs_Single_cifar100_single_head_train_pretrained_resnet} and~\ref{fig:MTL_vs_Single_cifar100_single_head_test_pretrained_resnet}, while the single-head architecture is used and although the training error is zero for large enough models, we observe that single-task learners significantly outperform multi-task learners. Inspired by the observations~\ref{observation:tasks_together} and~\ref{observation:task_dissimilarity}, this behavior suggests that weight sharing is not happening effectively, and the tasks are not fully collaborative, which can be considered a type of negative transfer learning. To dig deeper into this issue, we analyze the optimal weight for each task per MLP layer. In other words, we start from a fully shared model, repeatedly pick and unshare the weights of a specific layer and optimize it to find the optimal weights of that layer per each task. Inspired by Equation~\eqref{eq:K-multi}, we calculate the inner product of the task-specific weights at each layer across different tasks and report the average values in Figure~\ref{fig:weight_inner_prod}. For further details and discussion regarding this experiment, please refer to Appendix~\ref{appendix:optimal_weight}.

\begin{wrapfigure}{r}{0.33\textwidth}
  \centering
  \begin{subfigure}[t]{0.3\textwidth}
\includegraphics[width=\linewidth]{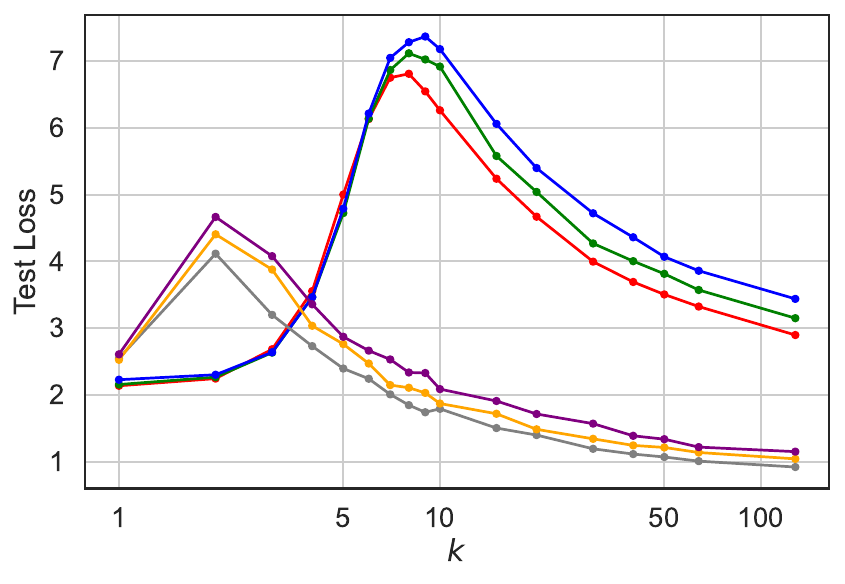}
  \end{subfigure} 
  \begin{subfigure}[t]{0.3\textwidth}
    \includegraphics[width=\linewidth]{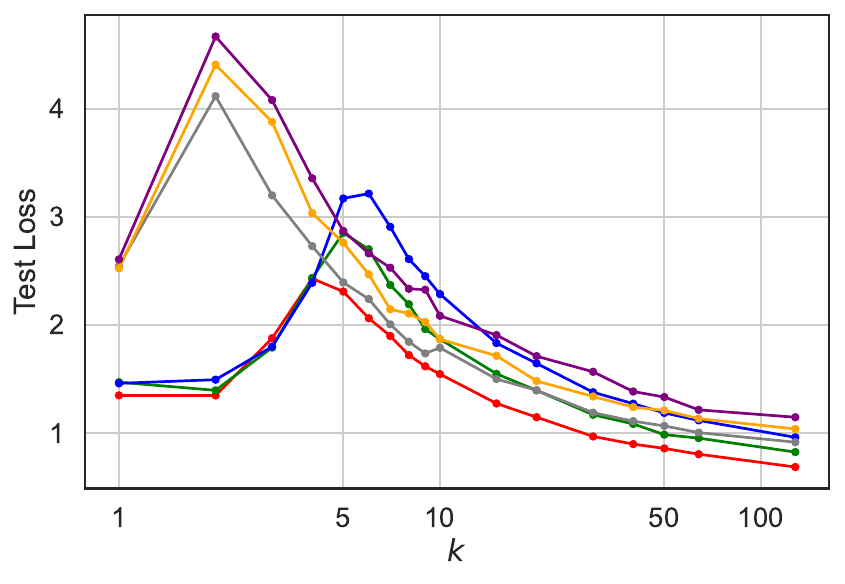}
  \end{subfigure}
  \vspace{-5mm}
  \caption{Test loss of the single-task (\textbf{Top}) vs multi-task (\textbf{Bottom}) learners on CIFAR-100 with unfrozen ResNet-18. The vertical axis is the cross-entropy loss and the horizontal axis is the scale factor that controls the number of filters in convolutional layers. As we observe here, the test loss behaves similarly to the case where only the MLP head is trained.}
\label{fig:multi_vs_single_scratch}
  \vspace{-5mm}  
\end{wrapfigure}

According to Figure~\ref{fig:weight_inner_prod}, we observe that the weights at the final classification layer have the most negative inner product values. This observation suggests that CIFAR-100 classes are highly conflicting in the classification layer. Nevertheless, extending the classification head to the multi-head architecture enables effective knowledge transfer in addition to an improved performance for the multi-task learner, as illustrated in Figure~\ref{fig:MTL_vs_Single_cifar100_multi_head_test_pretrained_resnet}. In other words, while unsharing any layer can benefit the MTL performance (as elaborated in Appendix~\ref{appendix:optimal_weight}), unsharing the classification head is the most effective approach, which leads to a desired behavior closely similar to the collaborative version of the linear models in Figure~\ref{fig:multi_vs_single_colloborative}. As an additional experiment, we present Figure~\ref{fig:multi_vs_single_scratch}, which provides a similar comparison using an unfrozen version of ResNet-18 training from random initialization. Although this backbone is much deeper compared to the linear models we studied in Section~\ref{sec:MTL_theory}, the consistent behavior suggests that our findings also generalize to more complex architectures.

\begin{figure}[t]
  \centering 
  \begin{subfigure}{0.29\linewidth}
    \includegraphics[width=\linewidth]{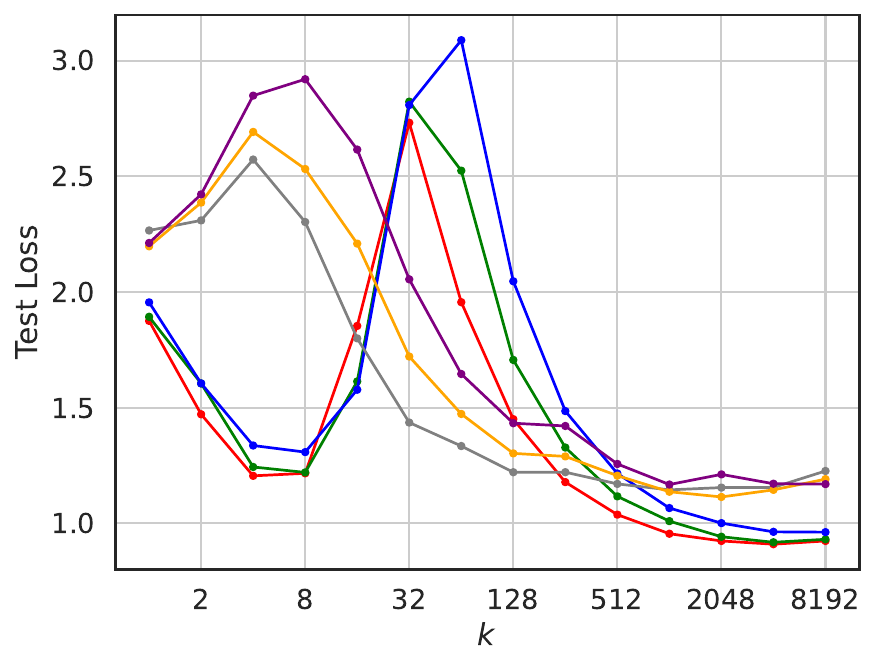}
    \caption{CIFAR-100 / ResNet-50}
  \end{subfigure}
  \begin{subfigure}[b]{0.29\linewidth}
    \includegraphics[width=\linewidth]{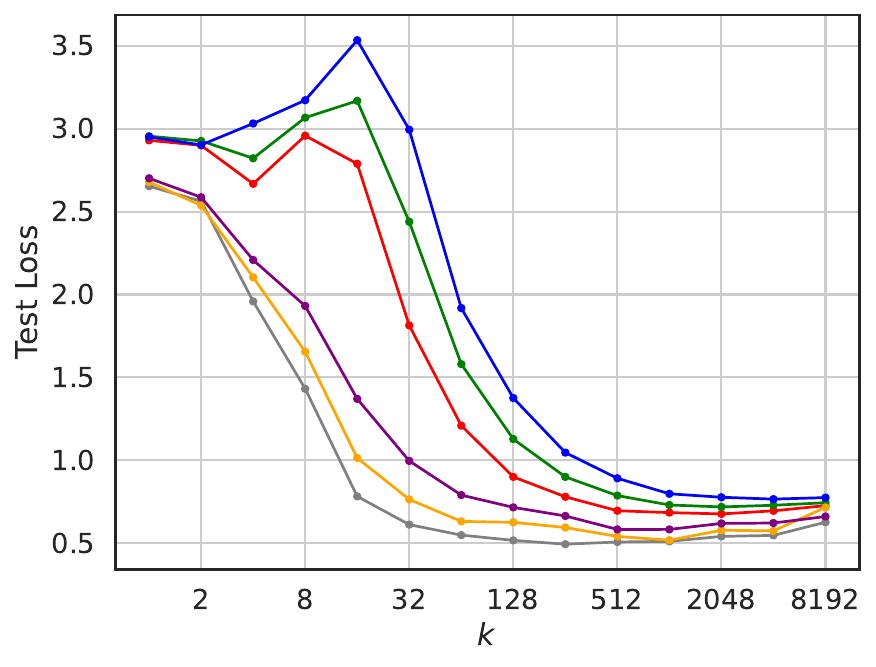}
    \caption{CUB / ResNet-50}
  \end{subfigure}
  \begin{subfigure}[b]{0.29\linewidth}
    \includegraphics[width=\linewidth]{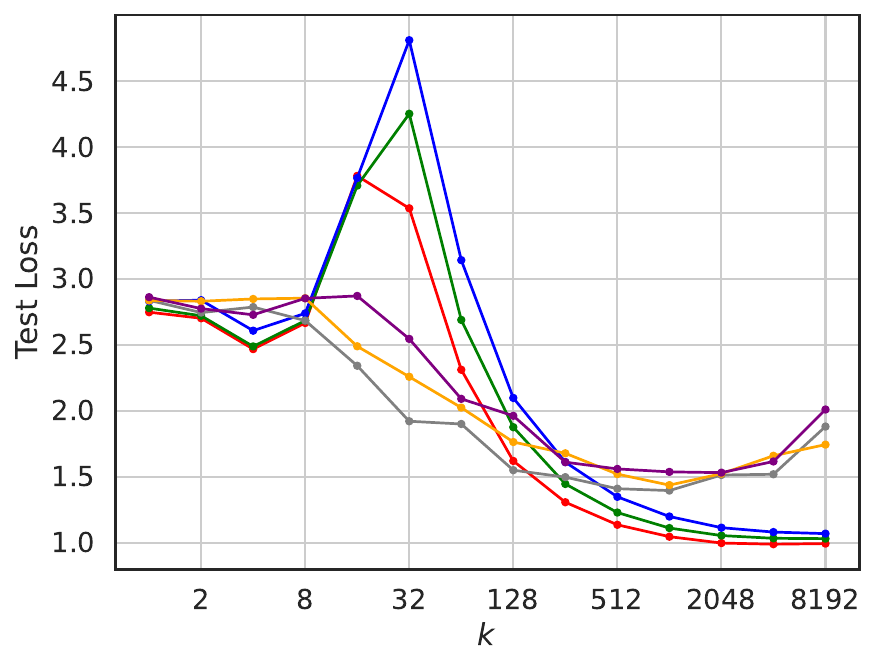}
    \caption{IN-R / ResNet-50}
  \end{subfigure}
  \begin{subfigure}[b]{0.29\linewidth}
    \includegraphics[width=\linewidth]{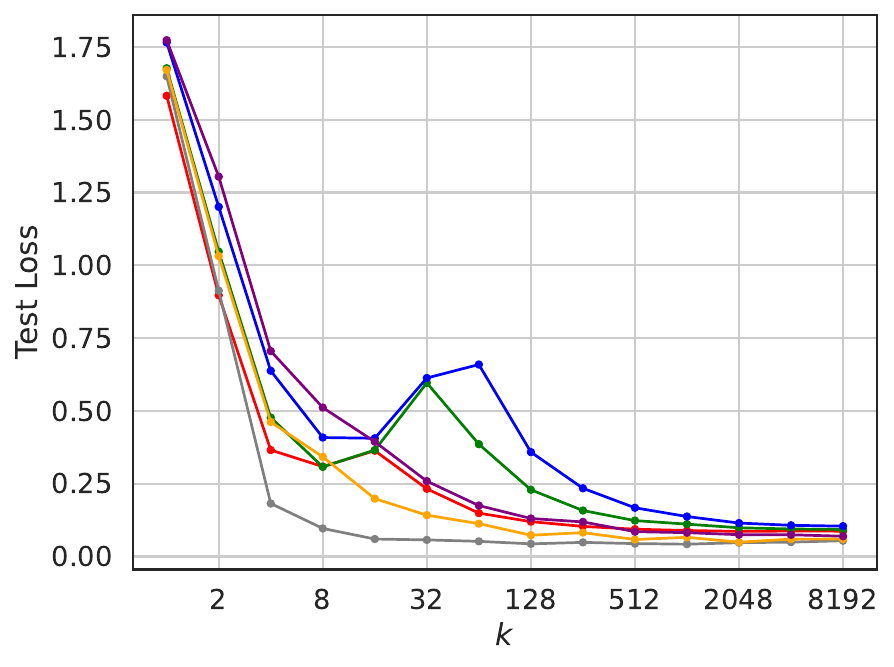}
    \caption{CIFAR-100 / ViT-B-16}
  \end{subfigure}
  \begin{subfigure}[b]{0.29\linewidth}
    \includegraphics[width=\linewidth]{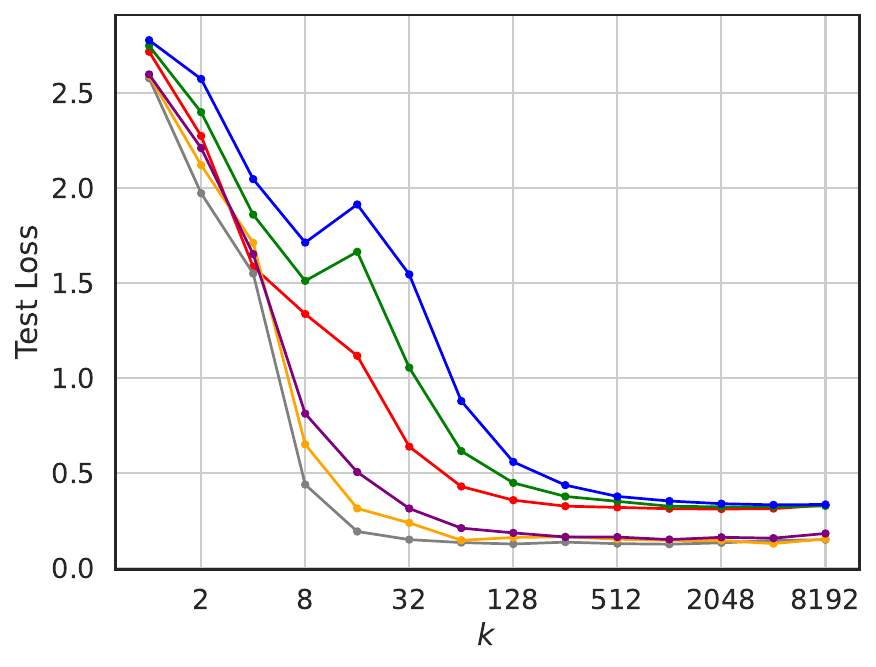}
    \caption{CUB / ViT-B-16}
  \end{subfigure}
  \begin{subfigure}[b]{0.29\linewidth}
    \includegraphics[width=\linewidth]{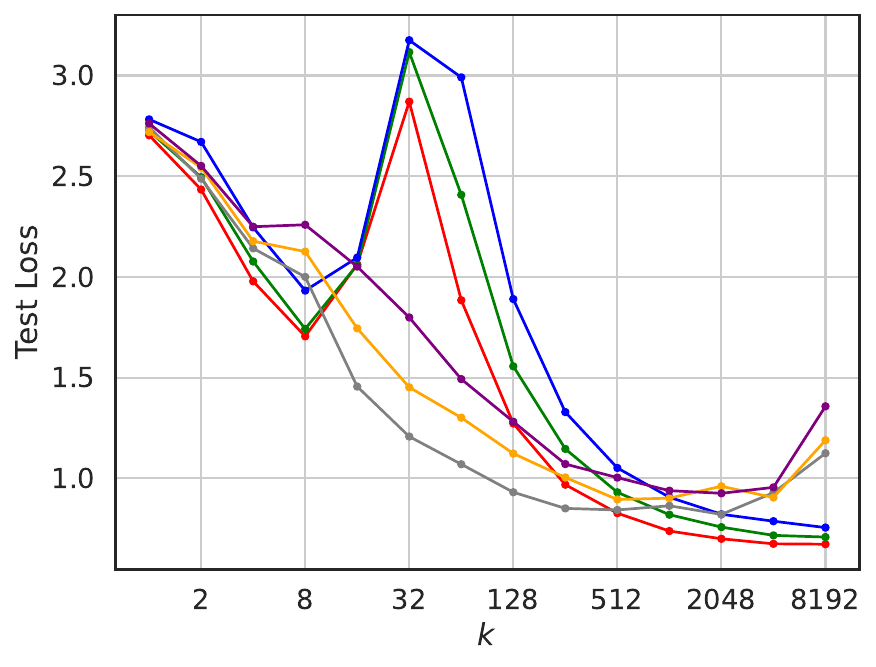}
    \caption{IN-R / ViT-B-16}
  \end{subfigure}  
  \caption{Test loss of the single-task vs multi-task learners on different datasets with pretrained backbones. The vertical axis is the cross-entropy loss and the horizontal axis is the width of the MLP on top of the backbone. $\sigma$ represents the noisy portion of the training set that was corrupted by randomly switching its labels. The multi-head architecture was used in all MTL experiments. These plots highlight the difference between MTL and STL double descent across several practical backbones and datasets.}
  \label{fig:multi_vs_single_different_datasets}
  \vspace{-5mm}
\end{figure}

Additionally, Figure~\ref{fig:multi_vs_single_different_datasets} offers further experiments on CUB and IN-R datasets trained on two different pretrained backbones. Observing similar phenomena to what we presented in observations~\ref{observation:Interpolation_th_shifts}-\ref{observation:task_dissimilarity}, endorses our hypothesis regarding the connection between our results for linear models and DNNs in an MTL setup. For example, in all of these experiments the test error peak of the multi-task learner is considerably strong, while in some configurations, the STL error peak is negligible. This observation emphasizes that the label noise is not the sole source of the error peak; rather, as predicted from our theoretical analysis, the inherent dissimilarity across different tasks will also contribute to the strength of the peak. We conclude that our results help explain the behavior of DNNs, and combined with empirical investigation, they can be used for optimal design. Complementary experiments, as well as the full training and testing curves of MTL experiments, are provided in Appendix~\ref{appendix:comp_MTL}.

\paragraph{Replay-based continual learning.} Figure~\ref{fig:continual_experiment} presents the test loss and the forgetting of a DNN continual learner on CIFAR-100 for different amounts of memory budget and model size. These models follow a trend very similar to the linear models presented in Section~\ref{sec:CL}. In particular, there exists a test error peak for all models, but its location and strength are controlled by the memory size. By increasing the memory size, the test error peak naturally shifts to the right with reduced intensity. Another observation aligning with the linear models is that the forgetting greatly reduces for large model sizes, even without any memory, a phenomenon also confirmed by recent studies on sequential fine-tuning CL~\citep{Goldfarb2023Forg}.

Additionally, it is noticeable that increasing the memory capacity does not always uniformly increase the performance for all values of model size. Especially, the memory size is less effective with larger models. This raises a fundamental question when designing CL models in practice. Given a fixed budget for physical memory, is it more efficient to spend it on increasing the model size or building a bigger sample memory? Table~\ref{tab:result} contains the results of experiments with varying budgets dedicated to the replay memory buffer. As the results show, different optimal balance points exist for all datasets that maximize the average accuracy and minimize forgetting. This observation aligns with the recent CL methods, where an equal portion of the sample memory budget is devoted to specialized blocks \citep{zhou2023model,wang2022foster} or task-specific tokens~\citep{douillard2021dytox}. Therefore, we propose that our theoretical findings in linear models can be applied to practical CL algorithms to gain a deeper understanding.

\begin{figure}[t] 
  \centering 
  \begin{subfigure}{0.238\linewidth}
    \includegraphics[width=\linewidth]{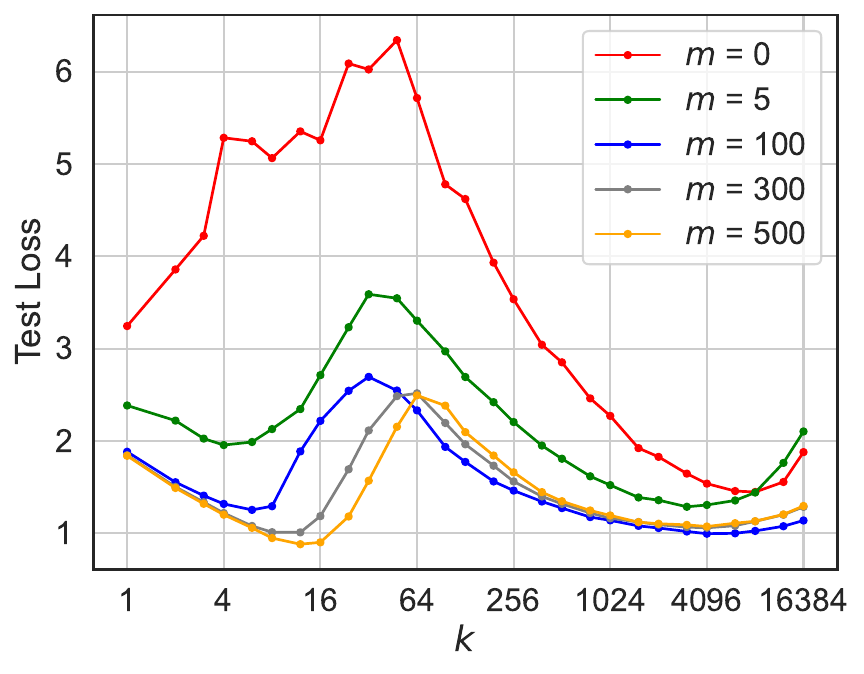}
    \caption{Test Loss}
  \end{subfigure}
  \begin{subfigure}{0.238\linewidth}
    \includegraphics[width=\linewidth]{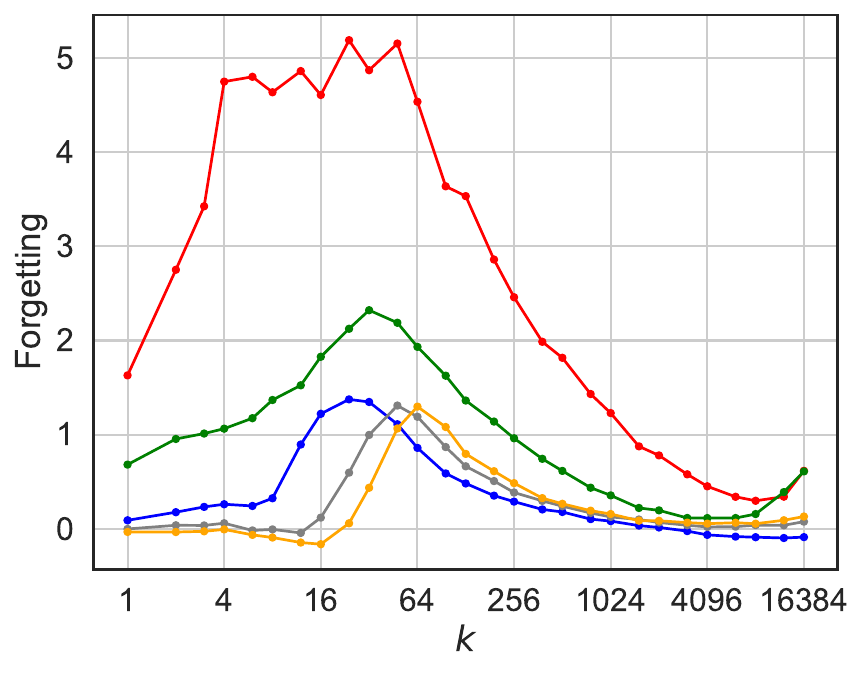}
    \caption{Forgetting}
  \end{subfigure}
  \begin{subfigure}{0.238\linewidth}
    \includegraphics[width=\linewidth]{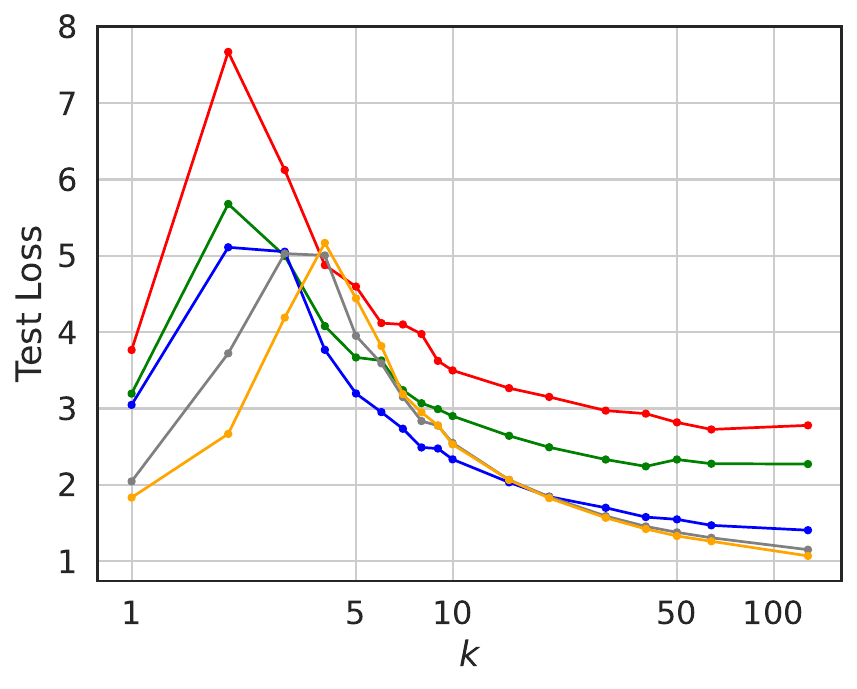}
    \caption{Test Loss}
  \end{subfigure}
  \begin{subfigure}{0.238\linewidth}
    \includegraphics[width=\linewidth]{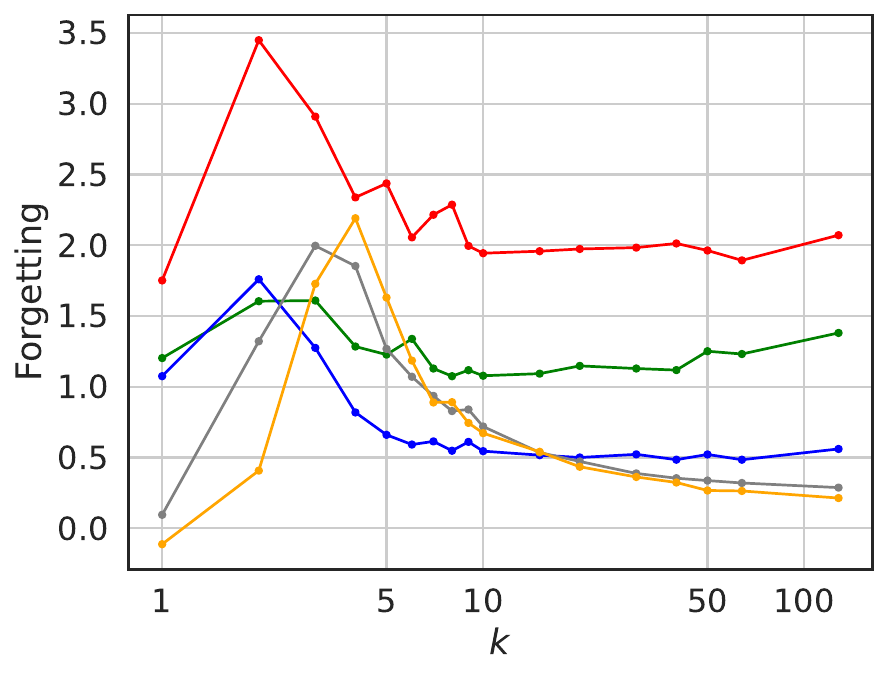}
    \caption{Forgetting}
  \end{subfigure} 
  \caption{The test loss and forgetting of a continual learner on CIFAR-100 w.r.t. width of the model, $k$, for different levels of per-class memory capacity. $m = 0$ means naive fine-tuning, and $m=500$ corresponds to the full MTL setup. (a) and (b) correspond to the experiments with the Pretrained ResNet-18 and subfigures (c) and (d) show the results when fine-tuning a ResNet-18 model from scratch. The multi-head architecture with no label noise was used in these experiments. These figures show that a CL learner also has a test error peak which is affected by the size of the memory buffer.}
  \label{fig:continual_experiment}
  \vspace{-3mm}
\end{figure}

\begin{table}[hb!]
\caption{The effect of varying sample memory budget and architecture memory budget over the average accuracy and average forgetting of overparameterized continual learners on two datasets. All values in this table are presented as percentages. The sample budget is linearly proportional to the number of training images stored in the memory, where 100\% corresponds to storing 10\% of the whole training set. A complete version of this table with more details is presented in Appendix~\ref{appendix:exp-details}}.
  \label{tab:result}
  \vspace{-3mm}
  \small
\centering  
\begin{tabular}{cccccccccc}
\specialrule{1pt}{2.1pt}{2.5pt}
\multirow{3}{*}{\begin{tabular}[c]{@{}c@{}}Mem.\\ Budg.\end{tabular}} & \multirow{3}{*}{\begin{tabular}[c]{@{}c@{}}Arch.\\ Budg.\end{tabular}} & \multicolumn{2}{c}{CIFAR-100} & \multicolumn{2}{c}{Imagenet-R} & \multicolumn{2}{c}{CUB} & \multicolumn{2}{c}{PMNIST} \\ 
\cline{3-10} & & \begin{tabular}[c]{@{}c@{}}Avg.\\ Acc.\end{tabular} & \begin{tabular}[c]{@{}c@{}}Avg.\\ Forg.\end{tabular} & \begin{tabular}[c]{@{}c@{}}Avg.\\ Acc.\end{tabular} & \begin{tabular}[c]{@{}c@{}}Avg.\\ Forg.\end{tabular} & \begin{tabular}[c]{@{}c@{}}Avg.\\ Acc.\end{tabular} & \begin{tabular}[c]{@{}c@{}}Avg.\\ Forg.\end{tabular} & \begin{tabular}[c]{@{}c@{}}Avg.\\ Acc.\end{tabular} & \begin{tabular}[c]{@{}c@{}}Avg.\\ Forg.\end{tabular} \\
\hline
100 & <1  & $56.6_{\pm 0.4}$      & $-22.5_{\pm 0.5}$     & $76.1_{\pm 0.4}$      & $-3.3_{\pm 0.6}$      & $87.6_{\pm 0.2}$      & $-4.5_{\pm 0.2}$     & $32.8_{\pm 2.8}$     & $-69.8_{\pm 3.0}$\\
80  & 20  & $70.6_{\pm 0.1}$      & $-7.1_{\pm 0.1}$      & $76.9_{\pm 0.6}$      & $-2.0_{\pm 0.3}$      & $84.1_{\pm 0.5}$      & $-6.7_{\pm 0.0}$     & $82.9_{\pm 1.0}$     & $-13.9_{\pm 1.1}$\\
60  & 40  & $\bm{71.9_{\pm 0.2}}$ & $\bm{-5.4_{\pm 0.3}}$ & $77.1_{\pm 0.6}$      & $-1.4_{\pm 0.4}$      & $88.5_{\pm 0.3}$      & $-1.8_{\pm 0.6}$     & $87.7_{\pm 0.4}$     & $-8.5_{\pm 0.4}$\\
40  & 60  & $66.3_{\pm 0.4}$      & $-11.3_{\pm 0.4}$     & $\bm{78.9_{\pm 0.5}}$ & $-1.0_{\pm 0.1}$      & $\bm{92.0_{\pm 0.1}}$ & $\bm{-0.1_{\pm 0.2}}$& $\bm{88.8_{\pm 0.1}}$& $\bm{-7.1_{\pm 0.1}}$\\
20  & 80  & $67.4_{\pm 0.3}$      & $-9.7_{\pm 0.3}$      & $78.5_{\pm 0.3}$      & $\bm{-0.6_{\pm 0.1}}$ & $91.7_{\pm 0.4}$      & $-0.4_{\pm 0.3}$     & $88.2_{\pm 0.3}$     & $-7.2_{\pm 0.2}$\\
0   & 100 & $63.8_{\pm 0.3}$      & $-10.8_{\pm 0.3}$     & $49.5_{\pm 0.6}$      & $-20.5_{\pm 0.5}$     & $75.8_{\pm 0.7}$      & $-12.2_{\pm 0.7}$    & $68.2_{\pm 1.0}$     & $-17.9_{\pm 0.2}$\\
\specialrule{1pt}{2.1pt}{2.5pt}
\end{tabular}
\end{table}

\section{Conclusion}
We studied overparameterized linear models in the MTL and CL settings and derived exact theoretical characterizations to better understand the impact of various system parameters over the average generalization error and knowledge transfer. We also analyzed replay-based continual learning and provided theoretical results to describe the generalization error and forgetting of such methods. We also performed extensive experiments with DNNs to show that similar behaviors are observable in the deep MTL and CL models, and understanding the linear overparameterized models can guide our intuition when studying more complex models. Please see Appendix~\ref{appendix:limitations} for a discussion on the limitations and future work.

 {
\small
\bibliography{main}
\bibliographystyle{tmlr}
}
\clearpage
\appendix


\section{Proof of Theorems.}\label{appendix:proofs}
\subsection{Useful Lemmas} \label{appendix:lemmas_and_notation}
In this section, we start by providing some useful lemmas and then provide proofs for the theorems in the main text.

\vspace{4mm}
\begin{lemma} \label{lemma:2x2-inv}

Consider a square invertible matrix $S \in \mathbb{R}^{n\times n}$. Assume $S$ can be partitioned into four smaller blocks as

\[
S = \begin{bmatrix}
A & B \\
C & D \\
\end{bmatrix},
\]

where $A$ and $D$ are invertible square matrices with arbitrary relative sizes. Denote $H = D - CA^{-1}B$ and assume that $H$ is invertible. Then, the inverse of $S$ can be written as

\[
S^{-1} = \begin{bmatrix}
A & B \\
C & D \\
\end{bmatrix}^{-1}
=
\begin{bmatrix}
A^{-1} + A^{-1}BH^{-1}CA^{-1} & -A^{-1}BH^{-1} \\
-H^{-1}CA^{-1} & H^{-1}
\end{bmatrix},
\]
\end{lemma}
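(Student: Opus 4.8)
The statement to prove is the block-matrix inversion formula (the Schur complement formula), and the plan is to verify it directly by multiplication rather than deriving it, since the formula is already written down explicitly.

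\textbf{Approach.} I would prove the identity by checking that the proposed matrix, call it $M$, satisfies $SM = I_n$ (and by symmetry, or by a separate one-line check, $MS = I_n$, though for square matrices $SM=I_n$ already suffices). Write
\[
M = \begin{bmatrix}
A^{-1} + A^{-1}BH^{-1}CA^{-1} & -A^{-1}BH^{-1} \\
-H^{-1}CA^{-1} & H^{-1}
\end{bmatrix},
\qquad
H = D - CA^{-1}B,
\]
and compute the four blocks of the product $SM$ using block multiplication. For instance, the top-left block is
\[
A\bigl(A^{-1} + A^{-1}BH^{-1}CA^{-1}\bigr) + B\bigl(-H^{-1}CA^{-1}\bigr)
= I + BH^{-1}CA^{-1} - BH^{-1}CA^{-1} = I,
\]
the top-right block is $A(-A^{-1}BH^{-1}) + B H^{-1} = -BH^{-1} + BH^{-1} = 0$, and the two bottom blocks are handled the same way, with the bottom-right block collapsing precisely because $(D - CA^{-1}B)H^{-1} = HH^{-1} = I$. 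Each of the four computations is a short cancellation, so I would present them compactly, perhaps only spelling out the top row in full and indicating that the bottom row is analogous.

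\textbf{Order of steps.} First, state the block decomposition of $SM$ and note that block multiplication is valid because the row/column partitions of $S$ and $M$ are conformable (both split as $(\text{size of }A) + (\text{size of }D)$). Second, compute the top-left and top-right blocks, using only the definition $H = D - CA^{-1}B$ and associativity of matrix multiplication. Third, compute the bottom-left and bottom-right blocks, where the key simplification is substituting $D = H + CA^{-1}B$ so that the $D$-terms combine with the $CA^{-1}B$-terms to leave a clean $H$. Conclude that $SM = I_n$; since $S$ is assumed invertible and square, $M = S^{-1}$.

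\textbf{Main obstacle.} There is no real obstacle here — the result is entirely routine linear algebra and the proof is bookkeeping. The only thing to be careful about is that the hypotheses actually guarantee every inverse written down exists: $A^{-1}$ by assumption, and $H^{-1}$ by assumption, while invertibility of $D$ is not even needed for this direction of the verification (it is listed in the hypotheses but the formula goes through as long as $A$ and $H$ are invertible). I would make a brief remark to that effect, or simply proceed under the stated hypotheses without belaboring it. A secondary minor point is whether to also verify $MS = I$; I would note that for square matrices over a field a one-sided inverse is automatically two-sided, so checking $SM = I_n$ is enough.
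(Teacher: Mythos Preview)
Your proposal is correct and matches the paper's approach exactly: the paper's proof is the one-line remark ``The lemma can be proved by simply examining that $S S^{-1} = I$,'' which is precisely the direct block-multiplication verification you outline.
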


\begin{proof} The lemma can be proved by simply examining that $S S^{-1} = I$. \end{proof}

\vspace{4mm}
\begin{lemma} \label{lemma:3x3-inv}
Consider a square invertible matrix $S \in \mathbb{R}^{n\times n}$. Assume $S$ can be partitioned into $3\times3$ blocks:

\[
S = \begin{bmatrix}
E & F & G \\
H & J & K \\
L & M & N \\
\end{bmatrix}.
\]

Then the inverse of $S$ is

\[
S^{-1} = 
\begin{bmatrix}
E^{-1} + E^{-1} (FA^{-1}H + UR^{-1}V) E^{-1} & -E^{-1} (F - UR^{-1}C) A^{-1} & -E^{-1}UR^{-1} \\
-A^{-1} (H - BR^{-1}V) E^{-1} & A^{-1} + A^{-1}BR^{-1}CA^{-1} & -A^{-1}BR^{-1} \\
-R^{-1}VE^{-1} & -R^{-1}CA^{-1} & R^{-1}
\end{bmatrix}
\]

where we define

\begin{align*}
A &= J - HE^{-1}F \\
B &= K - HE^{-1}G \\
C &= M - LE^{-1}F \\
D &= N - LE^{-1}G \\
U &= G - FA^{-1}B \\
V &= L - CA^{-1}H \\
R &= D - C A^{-1} B
\end{align*}

and we assume that the matrix inverses exist wherever necessary.
\end{lemma}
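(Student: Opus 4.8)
\textbf{Proof proposal for Lemma~\ref{lemma:3x3-inv}.}

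The plan is to reduce the $3\times3$ block inversion to two nested applications of Lemma~\ref{lemma:2x2-inv}. First I would group the matrix $S$ as a $2\times2$ block matrix by treating the top-left entry $E$ as one block and the remaining $2\times2$ sub-array $\begin{bmatrix} J & K \\ M & N \end{bmatrix}$ as the other block, with off-diagonal strips $\begin{bmatrix} F & G \end{bmatrix}$ and $\begin{bmatrix} H \\ L \end{bmatrix}$. Applying Lemma~\ref{lemma:2x2-inv} with this partition, the Schur complement of $E$ is
\[
\begin{bmatrix} J & K \\ M & N \end{bmatrix}
- \begin{bmatrix} H \\ L \end{bmatrix} E^{-1} \begin{bmatrix} F & G \end{bmatrix}
= \begin{bmatrix} J - HE^{-1}F & K - HE^{-1}G \\ M - LE^{-1}F & N - LE^{-1}G \end{bmatrix}
= \begin{bmatrix} A & B \\ C & D \end{bmatrix},
\]
which is exactly the $2\times2$ block matrix whose four entries match the definitions of $A,B,C,D$ in the statement.

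Next I would invert this $2\times2$ Schur complement by a \emph{second} application of Lemma~\ref{lemma:2x2-inv}, now with $A$ playing the role of the invertible top-left block and $R = D - CA^{-1}B$ its Schur complement; this produces the inner inverse
\[
\begin{bmatrix} A & B \\ C & D \end{bmatrix}^{-1}
= \begin{bmatrix} A^{-1} + A^{-1}BR^{-1}CA^{-1} & -A^{-1}BR^{-1} \\ -R^{-1}CA^{-1} & R^{-1} \end{bmatrix}.
\]
Substituting this $2\times2$ expression back into the outer Lemma~\ref{lemma:2x2-inv} formula for $S^{-1}$ and carrying out the block multiplications $E^{-1} (\text{strip}) (\text{inner inverse})$ and $(\text{inner inverse})(\text{strip}) E^{-1}$ yields the claimed $3\times3$ formula. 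The auxiliary quantities $U = G - FA^{-1}B$ and $V = L - CA^{-1}H$ emerge naturally when one combines the strip $\begin{bmatrix} F & G \end{bmatrix}$ (resp. $\begin{bmatrix} H \\ L \end{bmatrix}$) with the inner inverse: for instance the $(1,3)$ block of $S^{-1}$ is $-E^{-1}\begin{bmatrix} F & G \end{bmatrix}\begin{bmatrix} -A^{-1}BR^{-1} \\ R^{-1} \end{bmatrix} = -E^{-1}(G - FA^{-1}B)R^{-1} = -E^{-1}UR^{-1}$, and similarly for the other mixed blocks.

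Alternatively, and perhaps more robustly, one can simply verify $S S^{-1} = I$ directly by block multiplication, exactly as in the proof of Lemma~\ref{lemma:2x2-inv}; every product telescopes using the definitions of $A,\dots,R,U,V$ and the assumed invertibility. I expect the main obstacle to be purely bookkeeping: tracking which Schur complement is taken with respect to which block, and correctly simplifying the compound products such as $E^{-1}(FA^{-1}H + UR^{-1}V)E^{-1}$ in the $(1,1)$ entry without sign or ordering errors. There is no conceptual difficulty — the lemma is a routine (if tedious) consequence of iterating the $2\times2$ block-inverse identity — so the write-up should emphasize the nesting structure and then delegate the remaining algebra to direct verification.
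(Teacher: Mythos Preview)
Your proposal is correct. The paper's own proof is a one-line appeal to direct verification --- ``Similarly this lemma is proved by examining $S S^{-1} = I$'' --- which is precisely the alternative you mention at the end. Your primary route, iterating Lemma~\ref{lemma:2x2-inv} by first taking the Schur complement of $E$ in the $2\times2$ grouping and then the Schur complement of $A$ inside that, is a genuinely constructive derivation that the paper does not give; it explains \emph{where} the formula comes from and why the auxiliary quantities $U,V,R$ appear, whereas the paper simply posits the answer and checks it. Both approaches are valid and elementary; yours is more informative but longer to write out, while the paper's is shorter but opaque. For the write-up, either fully commit to the nested derivation (your sample computation of the $(1,3)$ block is exactly the right style) or, if brevity is preferred, state the nesting idea in one sentence and then defer to the $SS^{-1}=I$ check as the paper does.
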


\begin{proof} Similarly this lemma is proved by examining $S S^{-1} = I$. \end{proof}

\vspace{4mm}
\paragraph{Notations.} We have already introduced some matrix notations in Section \ref{sec:problem-formulation}. In this section, we continue to introduce new helpful notations. Remember that for task $t \in [T]$ with $n_t$ samples, we had $X_t \in \mathbb{R}^{p\times n_t}$, $w^*_t \in \mathbb{R}^p$, $z_t \in \mathbb{R}^{n_t}$ and $y_t = X_t^\top w^*_t + z_t$. 

Now consider indices $i,j \in [T]$ such that $i < j$ and denote $n_{ij} = \sum_{t=i}^j n_t$. We introduce the concatenated data matrix as $X_{ij} \in \mathbb{R}^{p \times n_{ij}}$ which is constructed by concatenating  data matrices from $X_i$ to $X_j$ in the second dimension, i.e. 
$X_{ij} = \begin{bmatrix}
    X_i & X_{i+1} & ... & X_j
\end{bmatrix} $
. With this notation, the matrix representation of all training samples is $X_{1:T} \in \mathbb{R}^{p \times \nb}$ where $\nb = \sum_{t=1}^{T} n_t$. Additionally, we introduce the matrix $X_{0|t} \in \mathbb{R}^{p\times \nb}$ which is built from $X_{1:T}$ by replacing all entries with zeros except at the columns corresponding to the task $t$. In other words, 
$
X_{0|t} = \begin{bmatrix}
    0_{p \times r} & X_t & 0_{p \times q}
\end{bmatrix}
$, where $0_{p \times r}$ is an all zero matrix with size $p \times r$, $r = \sum_{i=1}^{t-1} n_i$ and $q = \sum_{i=t+1}^{T} n_i$.

Since our study mainly focuses on overparameterized regimes, the data points $X_t \in \mathbb{R}^{p\times n_t}$ are random matrices with more rows than columns. Therefore, $(X_t^\top X_t)^{-1}$ exists almost surely and we define $X_t^\dagger = X_t(X_t^\top X_t)^{-1}$ and the projection matrix $P_t = X_t(X_t^\top X_t)^{-1}X_t^\top$. 
Additionally, let $P_{ij} = X_{ij}(X_{ij}^\top X_{ij})^{-1}X_{ij}^\top$, $X_{ij}^\dagger = X_{ij}(X_{ij}^\top X_{ij})^{-1}$, and $P_{0|t} = X_{1:T}(X_{1:T}^\top X_{1:T})^{-1}X_{0|t}^\top$.
With this in mind, we provide some useful lemmas.\footnote{Notice that $P_{0|t}$, by definition is not a projection matrix and we use this notation just for consistency.}

\vspace{4mm}
\begin{lemma} \label{lemma:proj-formulas}
Let $X_{1:2}\in \mathbb{R}^{p \times (n_1+n_2)}$ be the result of concatenating $X_1 \in \mathbb{R}^{p \times n_1}$ and $X_2 \in \mathbb{R}^{p \times n_2}$. Also let $X_{0|1}\in \mathbb{R}^{p \times (n_1+n_2)}$ be result of concatenating $X_1$ with a zero matrix. Similarly, build $X_{0|2} \in \mathbb{R}^{p \times (n_1+n_2)}$ by concatenating a zero matrix with $X_2$. Assuming $p > n_1 + n_2$ and that all inverses exist, it holds that
\begin{enumerate}[label=\roman*., left=0pt]
\item
$
P_{1:2} = X_{1:2}(X_{1:2}^\top X_{1:2})^{-1}X_{1:2}^\top = P_1 + (I - P_1)X_2 (X_2^\top (I - P_1) X_2)^{-1}X_2^\top (I-P_1)
$
\item 
$
P_{0|1} = X_{1:2}(X_{1:2}^\top X_{1:2})^{-1}X_{0|1}^\top = P_1 - (I - P_1) X_2 (X_2^\top (I - P_1) X_2)^{-1} X^\top_2 P_1
$
\item 
$
P_{0|2} = X_{1:2}(X_{1:2}^\top X_{1:2})^{-1}X_{0|2}^\top = (I - P_1) X_2 (X_2^\top (I - P_1) X_2)^{-1} X^\top_2
$
\item 
$
P^\top_{0|1} P_{0|1} = X_{0|1}(X_{1:2}^\top X_{1:2})^{-1}X_{0|1}^\top =  P_1 + P_1 X_2 (X_2^\top (I - P_1) X_2)^{-1} X^\top_2 P_1
$
\item 
$
\begin{aligned}
X_{1:2}^{\dagger} &= X_{1:2}(X_{1:2}^\top X_{1:2})^{-1} \\
&= 
\begin{bmatrix}
X^{\dagger}_1 - (I - P_1) X_2 (X^\top_2 (I - P_1)  X_2)^{-1} X^\top_2 X^{\dagger}_1 &  (I - P_1) X_2 (X^\top_2 (I - P_1)  X_2)^{-1}
\end{bmatrix}
\end{aligned}
$
\end{enumerate}

where $P_1 = X_1(X_1^\top X_1)^{-1}X_1^\top$, $X^{\dagger}_1 = X_1(X_1^\top X_1)^{-1}$ and $I$ is the identity matrix with size $p\times p$.
\end{lemma}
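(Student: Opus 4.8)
The plan is to prove all five identities by reducing everything to the standard block-matrix inverse formula (Lemma~\ref{lemma:2x2-inv}) applied to the Gram matrix $X_{1:2}^\top X_{1:2}$, which naturally splits into a $2\times 2$ block structure whose $(1,1)$ block is $X_1^\top X_1$, $(1,2)$ block is $X_1^\top X_2$, $(2,2)$ block is $X_2^\top X_2$, and $(2,1)$ block is $X_2^\top X_1$. First I would write $X_{1:2}^\top X_{1:2} = \begin{bmatrix} X_1^\top X_1 & X_1^\top X_2 \\ X_2^\top X_1 & X_2^\top X_2 \end{bmatrix}$ and identify the Schur complement $H = X_2^\top X_2 - X_2^\top X_1 (X_1^\top X_1)^{-1} X_1^\top X_2 = X_2^\top(I - P_1)X_2$, which is exactly the matrix appearing in the denominators of every claimed identity; this is the conceptual heart of the argument, so I would establish it first. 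Invertibility of $H$ holds almost surely in the overparameterized regime (the columns of $(I-P_1)X_2$ are linearly independent when $p > n_1 + n_2$), as already noted in the surrounding text.

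Next I would handle the identities one at a time. For (i), I would compute $X_{1:2}(X_{1:2}^\top X_{1:2})^{-1}X_{1:2}^\top$ by substituting the block inverse from Lemma~\ref{lemma:2x2-inv} and expanding $X_{1:2} = \begin{bmatrix} X_1 & X_2 \end{bmatrix}$ on the left and its transpose on the right; after collecting terms and repeatedly using $X_1(X_1^\top X_1)^{-1}X_1^\top = P_1$ and $(X_1^\top X_1)^{-1}X_1^\top = X_1^\dagger{}^\top$-type simplifications, the cross terms should assemble into $(I-P_1)X_2 H^{-1} X_2^\top (I-P_1)$. Identity (v) for $X_{1:2}^\dagger = X_{1:2}(X_{1:2}^\top X_{1:2})^{-1}$ follows from the same computation but stopping before multiplying by $X_{1:2}^\top$ on the right — in fact I would prove (v) first and then obtain (i) by right-multiplying (v) by $X_{1:2}^\top$, which saves redundant algebra. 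For (ii), (iii), and (iv), the same block inverse is used, but now paired with $X_{0|1}^\top = \begin{bmatrix} X_1 & 0 \end{bmatrix}^\top$ or $X_{0|2}^\top = \begin{bmatrix} 0 & X_2 \end{bmatrix}^\top$: the zero blocks kill half the terms, so these are strictly easier than (i). Identity (iii) is the cleanest — only the bottom block of the inverse survives against the left factor structure — so I might do it second, right after (v); (ii) and (iv) then fall out by the same bookkeeping, with (iv) using $X_{0|1}$ on both sides.

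The main obstacle is purely the bookkeeping: each expansion produces four or nine matrix products that must be regrouped into the stated closed form, and it is easy to drop a $P_1$ versus $(I-P_1)$ or to mishandle the distinction between $X_1^\dagger = X_1(X_1^\top X_1)^{-1}$ and $X_1(X_1^\top X_1)^{-1}X_1^\top = P_1$. To keep this manageable I would introduce the shorthand $Q := (I-P_1)X_2$ and $H = Q^\top Q = X_2^\top(I-P_1)X_2$ at the outset, note the useful facts $P_1 Q = 0$, $Q^\top P_1 = 0$, $P_1 X_1 = X_1$, and $Q H^{-1} Q^\top$ is the orthogonal projector onto $\mathrm{col}(Q)$, and then every identity collapses to a short manipulation. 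No deep idea is required beyond the Schur-complement observation; the verification could alternatively be done exactly as the lemma statements for Lemmas~\ref{lemma:2x2-inv} and~\ref{lemma:3x3-inv} suggest — by checking that the proposed right-hand sides, when multiplied appropriately by $X_{1:2}^\top X_{1:2}$ or its relevant factors, return the identity or the correct projection — which sidesteps the need to \emph{derive} the forms and only requires \emph{verifying} them.
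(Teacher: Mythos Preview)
Your proposal is correct and matches the paper's approach essentially verbatim: the paper writes $X_{1:2}^\top X_{1:2}$ as a $2\times 2$ block, applies Lemma~\ref{lemma:2x2-inv} to obtain the inverse with Schur complement $H = X_2^\top(I-P_1)X_2$, and then multiplies out each of (i)--(v) directly. The only cosmetic differences are that the paper proceeds in the order (i)--(v) rather than doing (v) first, and it does not introduce your shorthand $Q = (I-P_1)X_2$; otherwise the algebra is identical.
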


\begin{proof} We start by rewriting $X_{1:2}^\top X_{1:2}$ as a block matrix and then finding its inverse:

$$
(X_{1:2}^\top X_{1:2})^{-1} = \left(
\begin{bmatrix}
X^\top_1 X_1 & X^\top_1 X_2 \\
X^\top_2 X_1 & X^\top_2 X_2
\end{bmatrix}
\right)
^{-1}
$$

Now using Lemma \ref{lemma:2x2-inv}

\begin{align*}
(X_{1:2}^\top X_{1:2})^{-1} &=
\begin{bmatrix}
(X^\top_1 X_1)^{-1} + (X^\top_1 X_1)^{-1}X^\top_1 X_2 H^{-1} X^\top_2 X_1 (X^\top_1 X_1)^{-1} & 
-(X^\top_1 X_1)^{-1} X^\top_1 X_2 H^{-1} \\
-H^{-1}X^\top_2 X_1(X^\top_1 X_1)^{-1} & 
H^{-1}
\end{bmatrix},
\end{align*}

where $H = X^\top_2 X_2 - X^\top_2 X_1 (X^\top_1 X_1)^{-1} X^\top_1 X_2 = X^\top_2 (I-P_1) X_2$. Now, by noticing that $X_{1:2} = \begin{bmatrix}
X_1 & X_2
\end{bmatrix}$, 
we can prove what was desired by doing multiplications:

\begin{enumerate}[label=\roman*., left=-1pt]
    \item 
    
\begin{align*}
P_{1:2} &=  \begin{bmatrix}
X_1 & X_2
\end{bmatrix}
\left(
\begin{bmatrix}
X^\top_1 X_1 & X^\top_1 X_2 \\
X^\top_2 X_1 & X^\top_2 X_2
\end{bmatrix}
\right)
^{-1}
    \begin{bmatrix}
X^\top_1 \\ X^\top_2
\end{bmatrix}
\\[2mm]
&= P_1 + P_1 X_2 H^{-1} X^\top_2 P_1 - X_2 H^{-1}X^\top_2 P_1 - P_1 X_2 H^{-1} X^\top_2 + X_2H^{-1}X^\top_2
\\[2mm]
&= P_1 + (I - P_1)X_2 (X_2^\top (I - P_1) X_2)^{-1}X_2^\top (I-P_1)
\end{align*}

\item 

\begin{align*}
P_{0|1} &= 
\begin{bmatrix}
X_1 & X_2
\end{bmatrix}
\left(
\begin{bmatrix}
X^\top_1 X_1 & X^\top_1 X_2 \\
X^\top_2 X_1 & X^\top_2 X_2
\end{bmatrix}
\right)
^{-1}
    \begin{bmatrix}
X^\top_1 \\ 0
\end{bmatrix}
\\[2mm]
&= P_1 + P_1 X_2 H^{-1} X^\top_2 P_1 - X_2 H^{-1}X^\top_2 P_1
\\[2mm]
&= P_1 - (I - P_1) X_2 (X_2^\top (I - P_1) X_2)^{-1} X^\top_2 P_1
\end{align*}

\item 

\begin{align*}
P_{0|2} &= 
\begin{bmatrix}
X_1 & X_2
\end{bmatrix}
\left(
\begin{bmatrix}
X^\top_1 X_1 & X^\top_1 X_2 \\
X^\top_2 X_1 & X^\top_2 X_2
\end{bmatrix}
\right)
^{-1}
    \begin{bmatrix}
0 \\ X^\top_2
\end{bmatrix}
\\[2mm]
&= - P_1 X_2 H^{-1} X^\top_2 + X_2H^{-1}X^\top_2
\\[2mm]
&= (I - P_1) X_2 (X_2^\top (I - P_1) X_2)^{-1} X^\top_2
\end{align*}

\item 

\begin{align*}
P^\top_{0|1} P_{0|1} &= 
\begin{bmatrix}
X_1 & 0
\end{bmatrix}
\left(
\begin{bmatrix}
X^\top_1 X_1 & X^\top_1 X_2 \\
X^\top_2 X_1 & X^\top_2 X_2
\end{bmatrix}
\right)
^{-1}
    \begin{bmatrix}
X_1^\top \\ 0
\end{bmatrix}
\\[2mm]
&= P_1 + P_1 X_2 H^{-1} X^\top_2 P_1 = P_1 + P_1 X_2 (X_2^\top (I - P_1) X_2)^{-1} X^\top_2 P_1
\end{align*}
    \item 
    
\begin{align*}
 X^{\dagger}_{1:2} 
&=  \begin{bmatrix}
X_1 & X_2
\end{bmatrix}
\begin{bmatrix}
(X^\top_1 X_1)^{-1} + (X^\top_1 X_1)^{-1}X^\top_1 X_2 H^{-1} X^\top_2 X^{\dagger}_1 & 
-(X^\top_1 X_1)^{-1} X^\top_1 X_2 H^{-1} \\
-H^{-1}X^\top_2 X_1(X^\top_1 X_1)^{-1} & 
H^{-1}
\end{bmatrix}
\\[2mm]
& =
\begin{bmatrix}
X^{\dagger}_1 - (I - P_1) X_2 (X^\top_2 (I - P_1)  X_2)^{-1} X^\top_2 X^{\dagger}_1 &  (I - P_1) X_2 (X^\top_2 (I - P_1)  X_2)^{-1}
\end{bmatrix}
\end{align*}

\end{enumerate}
\end{proof}

\vspace{4mm}
\begin{lemma} \label{lemma:p01p03-formula}
Let $X_{1:3}\in \mathbb{R}^{p \times (n_1+n_2+n_3)}$ be the result of concatenating $X_1 \in \mathbb{R}^{p \times n_1}$, $X_2 \in \mathbb{R}^{p \times n_2}$ and $X_3 \in \mathbb{R}^{p \times n_3}$. Also let $X_{0|1}, X_{0|2}, X_{0|3}\in \mathbb{R}^{p \times (n_1+n_2+n_3)}$ be the result of concatenating $X_1$, $X_2$ and $X_3$ with zero matrices such that 
$X_{0|1} = \begin{bmatrix} X_1&0&0 \end{bmatrix}$
, 
$X_{0|2} = \begin{bmatrix}0&X_2&0\end{bmatrix}$
and 
$X_{0|3} = \begin{bmatrix}0&0&X_3\end{bmatrix}$. 
Define 
$P_{0|1} = X_{1:3} (X^\top_{1:3} X_{1:3})^{-1} X^\top_{0|1}$ and $P_{0|3} = X_{1:3} (X^\top_{1:3} X_{1:3})^{-1} X^\top_{0|3}$
. Assuming $p > n_1 + n_2 + n_3$ and that all inverses exist, it holds that

$$
P^\top_{0|1} P_{0|3} = -P_1(I - \hat{P}_{1:2})X_3(X_3^\top(I-P_1)(I-\hat{P}_{1:2})X_3)^{-1}X_3^\top
$$

where $P_1 = X_1(X_1^\top X_1)^{-1}X_1^\top$ and we denote $\hat{P}_{1:2} = X_2 (X^\top_2 (I-P_1) X_2)^{-1} X^\top_2 (I - P_1)$.

\end{lemma}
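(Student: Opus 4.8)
The plan is to collapse $P_{0|1}^\top P_{0|3}$ to a single block of the inverse Gram matrix $(X_{1:3}^\top X_{1:3})^{-1}$ and then read that block off from Lemma~\ref{lemma:3x3-inv}. Since $(X_{1:3}^\top X_{1:3})^{-1}$ is symmetric, $P_{0|1}^\top = X_{0|1}(X_{1:3}^\top X_{1:3})^{-1}X_{1:3}^\top$, and multiplying by $P_{0|3} = X_{1:3}(X_{1:3}^\top X_{1:3})^{-1}X_{0|3}^\top$ cancels one factor $X_{1:3}^\top X_{1:3}$ against one copy of the inverse, giving
\[
P_{0|1}^\top P_{0|3} = X_{0|1}(X_{1:3}^\top X_{1:3})^{-1} X_{0|3}^\top .
\]
Because $X_{0|1}=\begin{bmatrix}X_1&0&0\end{bmatrix}$ and $X_{0|3}=\begin{bmatrix}0&0&X_3\end{bmatrix}$, this equals $X_1\,M_{13}\,X_3^\top$, where $M_{13}$ is the top-right block of $(X_{1:3}^\top X_{1:3})^{-1}$ under the $3\times3$ partition induced by $X_{1:3}=\begin{bmatrix}X_1&X_2&X_3\end{bmatrix}$.

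Next I would apply Lemma~\ref{lemma:3x3-inv} with $E=X_1^\top X_1$, $F=X_1^\top X_2$, $G=X_1^\top X_3$, $H=X_2^\top X_1$, $J=X_2^\top X_2$, $K=X_2^\top X_3$, $L=X_3^\top X_1$, $M=X_3^\top X_2$, $N=X_3^\top X_3$. Substituting $HE^{-1}F=X_2^\top P_1 X_2$ and the analogous identities shows that the auxiliary blocks simplify to $A=X_2^\top(I-P_1)X_2$, $B=X_2^\top(I-P_1)X_3$, $C=X_3^\top(I-P_1)X_2$, $D=X_3^\top(I-P_1)X_3$, $U=G-FA^{-1}B$, $R=D-CA^{-1}B$, and the lemma gives $M_{13}=-E^{-1}UR^{-1}$.

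It then remains to simplify $X_1\bigl(-E^{-1}UR^{-1}\bigr)X_3^\top$. Using $X_1 E^{-1}=X_1^\dagger$ and $X_1^\dagger X_1^\top=P_1$, I would compute
\[
X_1 E^{-1} U = P_1 X_3 - P_1 X_2 A^{-1}X_2^\top(I-P_1)X_3 = P_1\bigl(I - X_2 A^{-1}X_2^\top(I-P_1)\bigr)X_3 = P_1(I-\hat P_{1:2})X_3,
\]
recognizing $\hat P_{1:2}=X_2 A^{-1}X_2^\top(I-P_1)$ with $A=X_2^\top(I-P_1)X_2$. Likewise,
\[
R = X_3^\top(I-P_1)X_3 - X_3^\top(I-P_1)X_2 A^{-1}X_2^\top(I-P_1)X_3 = X_3^\top(I-P_1)(I-\hat P_{1:2})X_3 .
\]
Combining these two identities yields $P_{0|1}^\top P_{0|3} = -P_1(I-\hat P_{1:2})X_3\bigl(X_3^\top(I-P_1)(I-\hat P_{1:2})X_3\bigr)^{-1}X_3^\top$, as claimed.

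The step I expect to be the main obstacle is the bookkeeping: correctly matching the generic blocks of Lemma~\ref{lemma:3x3-inv} to the Gram sub-blocks, tracking the non-symmetric products $X_i^\top(I-P_1)X_j$, and spotting the factorizations $I - X_2A^{-1}X_2^\top(I-P_1)=I-\hat P_{1:2}$ and the regrouping $(I-P_1)-(I-P_1)\hat P_{1:2}=(I-P_1)(I-\hat P_{1:2})$. A slightly longer alternative avoids Lemma~\ref{lemma:3x3-inv} entirely by writing $X_{1:3}=\begin{bmatrix}X_{1:2}&X_3\end{bmatrix}$ and applying the two-block identities of Lemma~\ref{lemma:proj-formulas} iteratively, first peeling off $X_3$ and then expanding the resulting $X_{1:2}$-expressions via $P_{1:2}$ and $P_{0|1}$.
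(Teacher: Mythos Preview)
Your proposal is correct and follows essentially the same route as the paper: collapse $P_{0|1}^\top P_{0|3}$ to $X_{0|1}(X_{1:3}^\top X_{1:3})^{-1}X_{0|3}^\top$, read off the $(1,3)$ block of the inverse Gram matrix via Lemma~\ref{lemma:3x3-inv}, and identify the auxiliary matrices $A,B,C,D,U,R$ with the $(I-P_1)$- and $(I-\hat P_{1:2})$-expressions. Your write-up is in fact slightly more explicit than the paper's in spelling out why $X_1E^{-1}U = P_1(I-\hat P_{1:2})X_3$ and $R = X_3^\top(I-P_1)(I-\hat P_{1:2})X_3$.
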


\begin{proof}

\begin{align*}
P^\top_{0|1} P_{0|3} &= X_{0|1} (X^\top_{1:3} X_{1:3})^{-1} X^\top_{1:3} X_{1:3} (X^\top_{1:3} X_{1:3})^{-1} X^\top_{0|3}
\\[2mm]
&= X_{0|1} (X^\top_{1:3} X_{1:3})^{-1} X^\top_{0|3}
\\[2mm]    
&= 
X_{0|1}
\left(\begin{bmatrix}
X^\top_1 X_1 & X^\top_1 X_2 & X^\top_1 X_3\\
X^\top_2 X_1 & X^\top_2 X_2 & X^\top_2 X_3 \\
X^\top_3 X_1 & X^\top_3 X_2 & X^\top_3 X_3
\end{bmatrix}
\right)^{-1}
X^\top_{0|3}
\end{align*}

Now we use Lemma \ref{lemma:3x3-inv} to find the matrix inverse. With the notation introduced in that lemma, we are looking for $- X_1 E^{-1} U R^{-1} X_3^\top$. Thus we have,

\begin{align*}
A &= J - HE^{-1}F = X_2^\top (I-P_1) X_2
\\[2mm]
B &= K - HE^{-1}G = X_2^\top (I-P_1) X_3
\\[2mm]
C &= M - LE^{-1}F = X_3^\top (I-P_1) X_2
\\[2mm]
D &= N - LE^{-1}G = X_3^\top (I-P_1)X_3
\\[2mm]
U &= G - FA^{-1}B = X_1^\top (I - \hat{P}_{1:2}) X_3
\\[2mm]
R &= D - CA^{-1}B = X_3^\top(I-P_1)(I - \hat{P}_{1:2})X_3
\\[2mm]
\end{align*}

$$
\Rightarrow P^\top_{0|1} P_{0|3} = - X_1 E^{-1} U R^{-1} X_3^\top = -P_1(I - \hat{P}_{1:2})X_3(X_3^\top(I-P_1)(I-\hat{P}_{1:2})X_3)^{-1}X_3^\top
$$
    
\end{proof}

\vspace{4mm}
\begin{lemma} \label{lemma:underparam-sol}
    Assume $X \in \mathbb{R}^{p \times n}$ is the training data matrix and $y \in \mathbb{R}^n$ are the corresponding labels. In the underparameterized regime where $p < n$, we seek to solve an optimization of the form

$$
w^* = \argmin_{w \in \mathbb{R}^p} \Vert X^\top w - y \Vert_2^2.
$$

The optimal solution to this optimization is given by 

$$
w^* = (X X^\top)^{-1} Xy
$$

\end{lemma}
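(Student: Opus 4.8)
The plan is to treat this as an unconstrained convex quadratic minimization and invoke first-order optimality. First I would expand the objective as $f(w) = \Vert X^\top w - y\Vert_2^2 = w^\top (X X^\top) w - 2 y^\top X^\top w + y^\top y$ and compute its gradient $\nabla f(w) = 2 X X^\top w - 2 X y$. Setting $\nabla f(w) = 0$ yields the normal equations $X X^\top w = X y$.

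Next I would note that the Hessian $\nabla^2 f(w) = 2 X X^\top$ is positive semidefinite, so $f$ is convex on $\mathbb{R}^p$; hence every stationary point is a global minimizer, and a point is optimal if and only if it satisfies the normal equations. It then remains to solve these equations explicitly: in the underparameterized regime $p < n$ the design matrix $X \in \mathbb{R}^{p \times n}$ has full row rank $p$ (almost surely under the i.i.d.\ Gaussian feature model of Section~\ref{sec:problem-formulation}, in line with the standing convention that the relevant inverses exist), so $X X^\top \in \mathbb{R}^{p \times p}$ is positive definite and thus invertible. Left-multiplying the normal equations by $(X X^\top)^{-1}$ gives the unique solution $w^* = (X X^\top)^{-1} X y$, which is therefore the unique global minimizer, as claimed.

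The main obstacle --- really the only substantive point --- is justifying the invertibility of $X X^\top$, which is precisely where the hypothesis $p < n$ (combined with the genericity of the Gaussian design) is used; everything else is a routine computation requiring none of the block-matrix machinery developed for the other lemmas in this section.
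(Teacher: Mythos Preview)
Your proposal is correct and follows essentially the same approach as the paper: set the gradient of the quadratic objective to zero to obtain the normal equations $X X^\top w = X y$, then invoke invertibility of $X X^\top$ (almost surely, under the Gaussian design with $p<n$) to solve. Your version is slightly more explicit about convexity via the Hessian, but the substance is identical.
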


\begin{proof}
By setting the derivative of the objective to zero, we obtain:

$$
X (X^\top w^* - y) = 0 \Rightarrow w^* = (X X^\top)^{-1} Xy
$$
noticing that $(X X^\top)^{-1}$ exist almost surely if $X$ is a Gaussian random matrix with $p < n$.
\end{proof}

\vspace{4mm}
\begin{lemma} \label{lemma:overparam-sol}
    Let $X \in \mathbb{R}^{p \times n}$ be the training data matrix and $y \in \mathbb{R}^n$ be the corresponding labels. Assume $w_0 \in \mathbb{R}^{p}$ is an arbitrary fixed vector. In the overparameterized regime where $p > n$, we seek to solve optimizations of the form

$$
w^* = \argmin_{w \in \mathbb{R}^p} \Vert w - w_0\Vert_2^2 \quad \text{s.t.} \quad X^\top w=y.
$$

The optimal solution to this optimization is given by 

$$
w^* = (I - P) w_0 + X^{\dagger} y 
$$

where $P = X(X^\top  X)^{-1}X^\top$ and $X^{\dagger} = X(X^\top  X)^{-1}$.

\end{lemma}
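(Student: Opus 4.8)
The plan is to treat this as an equality-constrained strictly convex quadratic program and solve it via Lagrange multipliers, then confirm the candidate is feasible and globally optimal. First I would record the standing nondegeneracy facts: since $X$ is a Gaussian matrix with $p > n$, it has full column rank almost surely, so $X^\top X \in \mathbb{R}^{n \times n}$ is invertible and both $P = X(X^\top X)^{-1}X^\top$ and $X^\dagger = X(X^\top X)^{-1}$ are well defined (exactly the a.s.-invertibility used in Lemma~\ref{lemma:underparam-sol}); moreover $X^\top$ has full row rank $n$, so the constraint set $\{w : X^\top w = y\}$ is a nonempty affine subspace, and the optimization is feasible.

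For the main computation I would introduce a multiplier $\lambda \in \mathbb{R}^n$ and form the Lagrangian $\mathcal{L}(w,\lambda) = \Vert w - w_0 \Vert_2^2 - \lambda^\top(X^\top w - y)$. Stationarity $\nabla_w \mathcal{L} = 0$ gives $2(w - w_0) = X\lambda$, i.e. $w = w_0 + \tfrac12 X\lambda$. Substituting into $X^\top w = y$ yields $X^\top w_0 + \tfrac12 X^\top X \lambda = y$, so $\lambda = 2(X^\top X)^{-1}(y - X^\top w_0)$, and back-substitution gives $w^* = w_0 + X(X^\top X)^{-1}(y - X^\top w_0)$, which rearranges to $(I - P)w_0 + X^\dagger y$. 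Because the objective is strictly convex and the constraints affine, this unique KKT point is the unique global minimizer.

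An equivalent and perhaps cleaner route, which I would present as an alternative, is to substitute $w = w_0 + \delta$ and reduce to minimizing $\Vert \delta \Vert_2^2$ subject to $X^\top \delta = y - X^\top w_0$; the minimum-norm solution of a consistent underdetermined system $X^\top \delta = \tilde y$ with $X^\top$ of full row rank is the standard $\delta = X(X^\top X)^{-1}\tilde y$ (the pseudoinverse of $X^\top$), and this gives the claim directly. In either case I would finish with a one-line feasibility check: $X^\top(I - P) = X^\top - X^\top X(X^\top X)^{-1}X^\top = 0$ and $X^\top X^\dagger = X^\top X(X^\top X)^{-1} = I_n$, hence $X^\top w^* = y$.

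There is no genuine obstacle here; the only items deserving care are the justification that $(X^\top X)^{-1}$ exists and the constraint is consistent (inherited from the Gaussian, $p>n$ assumption as in Lemma~\ref{lemma:underparam-sol}) and the appeal to strict convexity to upgrade the stationary point to the global optimum. Everything else is routine linear algebra.
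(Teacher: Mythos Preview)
Your proposal is correct and follows essentially the same Lagrange-multiplier approach as the paper's proof, differing only in cosmetic sign/scaling conventions for $\lambda$. You are in fact more careful than the paper in explicitly justifying invertibility of $X^\top X$, feasibility, and uniqueness via strict convexity, and your alternative change-of-variables route is a nice bonus; none of this deviates from the paper's method.
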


\begin{proof} Using the Lagrange multipliers and by setting the derivatives to $0$, we can get:

$$
w^*, \lambda^* = \argmin_{w, \lambda} \frac{1}{2} \Vert w - w_0\Vert_2^2 + \lambda^\top (X^\top w-y).
$$
$$
\Rightarrow w^* - w_0 + X \lambda^* = 0 \Rightarrow w^* = - X \lambda^* + w_0
$$

$$
X^\top w^* = y \Rightarrow - X^\top X \lambda^* + X^\top w_0 = y \Rightarrow \lambda^* = (X^\top  X)^{-1} X^\top  w_0 - (X^\top  X)^{-1}y
$$

$$
\Rightarrow w^* = - X (X^\top  X)^{-1} X^\top  w_0 + X (X^\top  X)^{-1}y + w_0 = (I - P) w_0 + X^{\dagger} y 
$$
\end{proof}

\clearpage
\begin{lemma} \label{lemma:proj-expectations}
    Assume matrices $X_1 \in \mathbb{R}^{p \times n_1}$ and $X_2 \in \mathbb{R}^{p \times n_2}$ to be random matrices with entries being independently sampled from the standard Gaussian distribution $\mathcal{N}(0,1)$. Assume $p > n_1 + n_2 + 1$ and let $P_1 = X_1(X^\top_1 X_1)^{-1}X^\top_1$ be the orthogonal projection matrix that projects onto the column space of $X_1$. Also assume $X_{1:2} \in \mathbb{R}^{p\times (n_1 + n_2)}$ be the result of concatenation of $X_1$ and $X_2$, and $X_{0|1} \in \mathbb{R}^{p\times (n_1 + n_2)}$ be the result of concatenation of $X_1$ with a zero matrix. Additionally, let $P_{1:2} = X_{1:2}(X^\top_{1:2} X_{1:2})^{-1}X^\top_{1:2}$ and $P_{0|1} = X_{1:2}(X^\top_{1:2} X_{1:2})^{-1}X^\top_{0|1}$.
    Assuming $w \in \mathbb{R}^{p}$ is a fixed given vector, the following equalities hold:
    
\begin{enumerate}[label=\roman*.]
    \item $\E[\Vert P_1 w\Vert^2] = \frac{n_1}{p} \Vert w \Vert^2 $
    \item $\E[\Vert (I - P_1) w\Vert^2] = (1 - \frac{n_1}{p})\Vert w \Vert^2 $ 
    \item $\E[\Vert P_{0|1} w\Vert^2] = \frac{n_1}{p}  (1 + \frac{n_2}{p-(n_1+n_2)-1}) \Vert w \Vert^2$
\end{enumerate}
\end{lemma}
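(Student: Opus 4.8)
The plan is to treat parts (i) and (ii) by rotational invariance and part (iii) by reducing it, via Lemma~\ref{lemma:proj-formulas}(iv) and an orthogonal splitting of $X_2$, to the mean of an inverse Wishart matrix. For (i), since $P_1$ is an orthogonal projector we have $P_1^\top P_1 = P_1$, so $\Vert P_1 w\Vert^2 = w^\top P_1 w$ and $\E[\Vert P_1 w\Vert^2] = w^\top \E[P_1]\, w$. The column space of $X_1$ is a uniformly random $n_1$-dimensional subspace because the law of $X_1$ is invariant under $X_1 \mapsto Q X_1$ for any orthogonal $Q$; hence $Q\,\E[P_1]\,Q^\top = \E[P_1]$ for all orthogonal $Q$, which forces $\E[P_1] = cI_p$, and taking traces with $\tr P_1 = n_1$ gives $c = n_1/p$. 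Part (ii) follows since $I - P_1$ is also an orthogonal projector, or directly from $\Vert (I-P_1)w\Vert^2 = \Vert w\Vert^2 - \Vert P_1 w\Vert^2$ together with (i).

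For (iii), I would start from Lemma~\ref{lemma:proj-formulas}(iv), which gives $P_{0|1}^\top P_{0|1} = P_1 + P_1 X_2 (X_2^\top(I-P_1)X_2)^{-1} X_2^\top P_1$, so that
\[
\E[\Vert P_{0|1} w\Vert^2] = \E[w^\top P_1 w] + \E\big[\, w^\top P_1 X_2 (X_2^\top(I-P_1)X_2)^{-1} X_2^\top P_1 w\,\big].
\]
The first term is $\frac{n_1}{p}\Vert w\Vert^2$ by (i). For the second, I would condition on $X_1$ (hence on $P_1$), set $u := P_1 w$, and split $X_2 = A + B$ with $A := P_1 X_2$, $B := (I-P_1)X_2$. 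Since $P_1$ and $I-P_1$ have orthogonal ranges, $A$ and $B$ are independent Gaussian matrices; moreover $X_2^\top(I-P_1)X_2 = B^\top B$ depends only on $B$, while $X_2^\top u = A^\top u$ (because $B^\top u = 0$, as $u \in \mathrm{range}(P_1)$) depends only on $A$. Writing $v := X_2^\top u \in \mathbb{R}^{n_2}$, its entries $v_i = x_{2,i}^\top u$ are i.i.d.\ $\mathcal{N}(0,\Vert u\Vert^2)$ and independent of $B$, so $\E[v^\top (B^\top B)^{-1} v \mid B] = \Vert u\Vert^2\, \tr\big((B^\top B)^{-1}\big)$.

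It then remains to compute $\E[\tr((B^\top B)^{-1})]$. Expressing $B$ in an orthonormal basis of the $(p-n_1)$-dimensional range of $I-P_1$ turns it into a $(p-n_1)\times n_2$ matrix with i.i.d.\ standard Gaussian entries, so $B^\top B$ is a Wishart matrix with $p-n_1$ degrees of freedom and identity scale; the hypothesis $p > n_1 + n_2 + 1$ makes the degrees of freedom exceed $n_2 + 1$, and the inverse-Wishart mean gives $\E[(B^\top B)^{-1}] = \frac{1}{p-(n_1+n_2)-1} I_{n_2}$, so $\E[\tr((B^\top B)^{-1})] = \frac{n_2}{p-(n_1+n_2)-1}$. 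Hence the second term, conditioned on $X_1$, equals $\Vert P_1 w\Vert^2 \cdot \frac{n_2}{p-(n_1+n_2)-1}$; taking the outer expectation over $X_1$ and applying (i) to $\Vert P_1 w\Vert^2$ yields $\frac{n_1}{p}\cdot\frac{n_2}{p-(n_1+n_2)-1}\Vert w\Vert^2$, which added to the first term gives $\frac{n_1}{p}\big(1 + \frac{n_2}{p-(n_1+n_2)-1}\big)\Vert w\Vert^2$.

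I expect the main obstacle to be the bookkeeping in the conditional step of (iii): verifying that $A$ and $B$ are genuinely independent, that $v = X_2^\top u$ is independent of $B$ with the stated distribution, and that $B^\top B$ has exactly the claimed Wishart law (which needs one more appeal to rotational invariance after passing to an orthonormal basis of $\mathrm{range}(I-P_1)$). Once these are in place, the only remaining input is the standard inverse-Wishart expectation, whose validity condition matches the assumption $p > n_1 + n_2 + 1$ precisely.
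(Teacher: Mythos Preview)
Your proposal is correct and follows essentially the same route as the paper: rotational invariance for (i) and (ii), then Lemma~\ref{lemma:proj-formulas}(iv) together with the conditional independence of $P_1 X_2$ and $(I-P_1)X_2$ and the inverse-Wishart mean for (iii). The only cosmetic difference is in (i), where the paper swaps the randomness (treating $P_1$ as a fixed coordinate projection and $w/\Vert w\Vert$ as uniform on the sphere), whereas you argue directly that $\E[P_1]=(n_1/p)I_p$; both are standard and equivalent.
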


\begin{proof}
~
\begin{enumerate}[label=\roman*., left=0pt]
    \item 
    Without loss of generality, we can focus on finding $\E[\Vert P_1 u \Vert^2]$ where $u \in \mathbb{R}^{p}$ and $\Vert u \Vert^2 = 1$,
    since $X_1$ is a Gaussian matrix with rank $n_1$, and $P_1 = X_1 (X_1^\top X_1)^{-1} X^\top_1$ is an orthogonal projection matrix with a similar rank. Due to the rotational invariance of the standard normal distribution, we can assume that $P_1$ is a fixed matrix and instead, $u$ is a random vector uniformly sampled from the unit sphere in $\mathbb{R}^{p}$. Using the rotational invariance again, we may assume without the loss of generality that $P_1$ is the coordinate projection onto the first $n_1$ coordinates in $\mathbb{R}^{p}$. Then it holds that:

    \begin{align*}
        \E_{X_1}[\Vert P_1 u \Vert^2] = \E_{u}[\sum_{i = 1}^{n_1} u_i^2] = \frac{n_1}{p}
    \end{align*}
    
    \item 
    $I - P_1$ is a projection orthogonal to $P_1$. Therefore, by Pythagorean theorem,

    \begin{align*}
        \E[\Vert (I - P_1) w \Vert^2] = \Vert w \Vert^2 - \E[\Vert P_1 w \Vert^2] = (1 - \frac{n_1}{p}) \Vert w \Vert^2
    \end{align*}

    \item 
    \begin{align*}
        \E[\Vert P_{0|1} w\Vert^2] &= \E[w^\top P^\top_{0|1} P_{0|1} w] 
        \\[2mm]
        &= \E[w^\top P_1 w] + \E[w^\top P_1 X_2 (X_2^\top (I - P_1) X_2)^{-1} X^\top_2 P_1 w]
        && (\text{Lemma \ref{lemma:proj-formulas} part (iv)})
        \\[2mm]
        &= \frac{n_1}{p} \Vert w \Vert^2 + \E[\tr (w^\top P_1 X_2 (X_2^\top (I - P_1) X_2)^{-1} X^\top_2 P_1 w)]
        \\[2mm]
        &= \frac{n_1}{p} \Vert w \Vert^2 + \E_{X_1}[\tr ( \E_{X_2} [X^\top_2 P_1 w w^\top P_1 X_2 (X_2^\top (I - P_1) X_2)^{-1}])] 
    \end{align*}

To find the above expectation over $X_2$, notice that $P_1$ is an orthogonal projection matrix and, therefore, for a fixed $P_1$, $P_1 X_2$ is independent of $(I - P_1) X_2$.
    \begin{align*}
        \Rightarrow \E[\Vert P_{0|1} w\Vert^2]
        &= \frac{n_1}{p} \Vert w \Vert^2 + \E_{X_1}[\tr ( \E_{X_2} [X^\top_2 P_1 w w^\top P_1 X_2] \E_{X_2} [(X_2^\top (I - P_1) X_2)^{-1}])] 
        \\[2mm]
        &= \frac{n_1}{p} \Vert w \Vert^2 + \E_{X_1}[\tr ( \tr(P_1 w w^\top P_1) \E_{X_2} [(X_2^\top (I - P_1) X_2)^{-1}])] 
        \\[2mm]
        &= \frac{n_1}{p} \Vert w \Vert^2 + \E_{X_1}[\Vert P_1 w\Vert^2 \tr(\E_{X_2} [(X_2^\top (I - P_1) X_2)^{-1}])] 
    \end{align*}

    We focus on finding the inner expectation first. Notice that for a fixed $X_1$, $I - P_1$ is an orthogonal projection matrix with rank $p - n_1$. Due to the rotational invariance of the standard normal distribution, we may assume without loss of generality that $I - P_1$ is the projection matrix that projects onto the first $p - n_1$ coordinates. With this in mind, $(X_2^\top (I - P_1) X_2)^{-1}$ follows an inverse-Wishart distribution with an identity scale matrix $I_{n_2 \times n_2} \in \mathbb{R}^{n_2}$ and $p - n_1$ degrees of freedom.
    
    \begin{align*}
        \Rightarrow \E[\Vert P_{0|1} w\Vert^2] &= \frac{n_1}{p} \Vert w \Vert^2 + \E_{X_1}[\Vert P_1 w\Vert^2 \tr(\frac{I_{n_2 \times n_2}}{p - n_1 - n_2 - 1})] 
        \\[2mm]
        &= \frac{n_1}{p} \Vert w \Vert^2 + \frac{n_2}{p-(n_1+n_2)-1} \E_{X_1}[\Vert P_1 w\Vert^2] 
        \\[2mm]
        &= \frac{n_1}{p}  (1 + \frac{n_2}{p-(n_1+n_2)-1}) \Vert w \Vert^2
    \end{align*}

\end{enumerate}
\end{proof}


\begin{lemma} \label{lemma:p01p03-expectations}
    Assume matrices $X_1 \in \mathbb{R}^{p \times n_1}$, $X_2 \in \mathbb{R}^{p \times n_2}$ and $X_3 \in \mathbb{R}^{p \times n_3}$ to be random matrices with entries being independently sampled from the standard Gaussian distribution $\mathcal{N}(0,1)$. Assume $p > n_1 + n_2 + n_3 + 1$ and let $X_{1:3} \in \mathbb{R}^{p\times (n_1 + n_2 + n_3)}$ be the result of concatenation of $X_1$, $X_2$, $X_3$. Also let $X_{0|1}, X_{0|2}, X_{0|3}\in \mathbb{R}^{p \times (n_1+n_2+n_3)}$ be the result of concatenating $X_1$, $X_2$ and $X_3$ with zero matrices such that 
$X_{0|1} = \begin{bmatrix} X_1&0&0 \end{bmatrix}$
, 
$X_{0|2} = \begin{bmatrix}0&X_2&0\end{bmatrix}$
and 
$X_{0|3} = \begin{bmatrix}0&0&X_3\end{bmatrix}$. 
Define 
$P_{0|1} = X_{1:3} (X^\top_{1:3} X_{1:3})^{-1} X_{0|1}^\top$ and $P_{0|3} = X_{1:3} (X^\top_{1:3} X_{1:3})^{-1} X_{0|3}^\top$.
    Assuming $w, w' \in \mathbb{R}^{p}$ are fixed given vectors, it holds that
    
$$
\E[w^\top P^\top_{0|1} P_{0|3} w'] = -\frac{n_1 n_3}{p (p - (n_1 + n_2 + n_3) - 1)} \langle w,w'\rangle
$$
\end{lemma}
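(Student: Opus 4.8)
The plan is to start from the closed form provided in Lemma~\ref{lemma:p01p03-formula},
\[
P^\top_{0|1}P_{0|3} = -P_1(I-\hat{P}_{1:2})X_3\bigl(X_3^\top(I-P_1)(I-\hat{P}_{1:2})X_3\bigr)^{-1}X_3^\top,
\]
and to compute the expectation by conditioning on $X_1,X_2$ and integrating over $X_3$ first. Write $M := I-P_1$ and, as in that lemma, $\hat{P}_{1:2} = X_2(X_2^\top M X_2)^{-1}X_2^\top M$; set $R := (I-P_1)(I-\hat{P}_{1:2})$, $N := I-R$, and $A := P_1(I-\hat{P}_{1:2})$. The first step is linear-algebraic bookkeeping around the oblique projection $\hat{P}_{1:2}$: a short computation gives $R = M - \Pi_2$ with $\Pi_2 := MX_2(X_2^\top MX_2)^{-1}X_2^\top M$ the orthogonal projection onto $\mathrm{col}(MX_2)$, and since $\Pi_2$ is absorbed by $M$ from either side, $R$ is a genuine \emph{orthogonal} projection of rank $p-n_1-n_2$ (a.s.) while $N = P_1+\Pi_2$ is the orthogonal projection onto $\mathrm{col}(X_{1:2})$, of rank $n_1+n_2$. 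I would also record the key identity $R\,A^\top w = 0$, equivalently $A^\top w\in\mathrm{col}(N)$: indeed $A^\top w = P_1 w - \hat{P}_{1:2}^\top P_1 w$ with $P_1 w\in\mathrm{col}(P_1)$ and $\hat{P}_{1:2}^\top P_1 w\in\mathrm{col}(MX_2)$, both inside $\mathrm{col}(N)$ (or directly, $(M-\Pi_2)P_1 = 0$ and $(M-\Pi_2)\hat{P}_{1:2}^\top = 0$).

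Next, conditioning on $X_1,X_2$, decompose $X_3 = NX_3 + RX_3$; since $N,R$ are complementary orthogonal projections, the Gaussian blocks $NX_3$ and $RX_3$ are independent. Then $X_3^\top R X_3 = (RX_3)^\top(RX_3)$ depends only on $RX_3$; $w^\top A X_3 = (A^\top w)^\top X_3 = (A^\top w)^\top(NX_3)$ by $R A^\top w = 0$; and $X_3^\top w' = (NX_3)^\top(Nw') + (RX_3)^\top(Rw')$. Expanding $w^\top A X_3\,(X_3^\top R X_3)^{-1}X_3^\top w'$, the piece carrying $(RX_3)^\top(Rw')$ vanishes in expectation because $NX_3$ enters linearly with mean zero and is independent of the other factors. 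For the surviving piece I would integrate over $NX_3$ first, using that its columns are i.i.d.\ $\mathcal{N}(0,N)$, so that $\E\bigl[(NX_3)^\top C\,(NX_3)\bigr] = \tr(CN)\,I_{n_3}$; this collapses the expression to $\langle A^\top w, w'\rangle\cdot\tr\bigl(((RX_3)^\top RX_3)^{-1}\bigr)$. Finally, since $R$ is a rank-$(p-n_1-n_2)$ orthogonal projection, rotational invariance gives $(RX_3)^\top RX_3\sim W_{n_3}(I_{n_3},\,p-n_1-n_2)$, whose inverse has mean $\tfrac{1}{p-n_1-n_2-n_3-1}I_{n_3}$ (this is where $p > n_1+n_2+n_3+1$ is used), so
\[
\E_{X_3}\bigl[w^\top P^\top_{0|1}P_{0|3}w'\bigr] = -\frac{n_3}{p-(n_1+n_2+n_3)-1}\;w^\top P_1(I-\hat{P}_{1:2})\,w'.
\]

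It then remains to take the expectation over $X_1,X_2$ of $w^\top P_1(I-\hat{P}_{1:2})w'$. Here $\E[w^\top P_1 w'] = \tfrac{n_1}{p}\langle w,w'\rangle$ by rotational invariance ($\E[P_1] = \tfrac{n_1}{p}I$, exactly as in Lemma~\ref{lemma:proj-expectations}), and $\E[w^\top P_1\hat{P}_{1:2}w'] = 0$: conditioning on $X_1$, the blocks $P_1 X_2$ and $MX_2$ are independent, and since $w^\top P_1\hat{P}_{1:2}w' = w^\top P_1 X_2\,(X_2^\top MX_2)^{-1}X_2^\top M w'$ contains $P_1 X_2$ only linearly, its conditional expectation is zero. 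Substituting $\E[w^\top P_1(I-\hat{P}_{1:2})w'] = \tfrac{n_1}{p}\langle w,w'\rangle$ into the display above yields the claimed formula. I expect the main obstacle to be precisely the first step — recognizing that $(I-P_1)(I-\hat{P}_{1:2})$ is an honest orthogonal projection and that $A^\top w$ has no component along it — after which the argument is the same conditioning-plus-Wishart computation already used for Lemma~\ref{lemma:proj-expectations}.
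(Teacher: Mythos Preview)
Your proposal is correct and follows essentially the same route as the paper's proof: both start from Lemma~\ref{lemma:p01p03-formula}, recognize that $R=(I-P_1)(I-\hat{P}_{1:2})$ is an orthogonal projection of rank $p-n_1-n_2$, split $X_3^\top w'$ along $R$ and $I-R$, use the independence of $RX_3$ and $(I-R)X_3$ together with mean-zero linearity to kill one piece, apply the inverse-Wishart mean for $\tr((RX_3)^\top RX_3)^{-1}$, and finish by showing $\E[w^\top P_1\hat{P}_{1:2}w']=0$ via $P_1X_2\perp MX_2$ and $\E[w^\top P_1 w']=\tfrac{n_1}{p}\langle w,w'\rangle$. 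Your bookkeeping (writing $R=M-\Pi_2$, identifying $N=P_1+\Pi_2$ as the projection onto $\mathrm{col}(X_{1:2})$, and recording $RA^\top w=0$) is a bit more explicit than the paper's, but the argument is the same.
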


\begin{proof}
    Based on Lemma \ref{lemma:p01p03-formula}, we have

\begin{align*}
\E[w^\top P^\top_{0|1} P_{0|3} w'] &= -\E[w^\top P_1(I - \hat{P}_{1:2})X_3(X_3^\top(I-P_1)(I-\hat{P}_{1:2})X_3)^{-1}X^\top_3 w']    
\\[2mm]
&= -\E_{X_1, X_2}[\E_{X_3} [w^\top P_1(I - \hat{P}_{1:2})X_3(X_3^\top(I-P_1)(I-\hat{P}_{1:2})X_3)^{-1}X^\top_3 w']]
\end{align*}

where $\hat{P}_{1:2} = X_2 (X^\top_2 (I-P_1) X_2)^{-1} X^\top_2 (I - P_1)$. Now denote $P = (I-P_1)(I-\hat{P}_{1:2})$. With a fixed $X_1$ and $X_2$, $P$ is an orthogonal projection matrix, meaning that $P^\top = P^2 = P$. Therefore, 

\begin{align*}
\E[w^\top P^\top_{0|1} P_{0|3} w'] & = -\E_{X_1, X_2}[\E_{X_3} [w^\top P_1(I - \hat{P}_{1:2}) X_3(X_3^\top P X_3)^{-1} X^\top_3 (P + I - P) w' ]]
\\[2mm]
&= -\E_{X_1, X_2}[\E_{X_3} [ w^\top P_1(I - \hat{P}_{1:2}) X_3(X_3^\top P X_3)^{-1} X^\top_3 P w']]
\\[2mm]
&- \E_{X_1, X_2}[\E_{X_3} [w^\top P_1(I - \hat{P}_{1:2}) X_3(X_3^\top P X_3)^{-1} X^\top_3 (I - P) w' ]]
\end{align*}

Notice that for a fixed $X_1$ and $X_2$, $P_1$ and $P$ are two orthogonal projection matrices. Therefore, the following independence relations hold:

\begin{align*}
(I-P) X_3 &\indep P X_3
\\[2mm]
P_1 (I - \hat{P}_{1:2}) X_3 &\indep (I - P_1) (I - \hat{P}_{1:2}) X_3
\end{align*}

With this in mind, we can write 

\begin{align*}
\E[w^\top P^\top_{0|1} P_{0|3} w']
&= -\E_{X_1, X_2}[w^\top P_1\E_{X_3} [ (I - \hat{P}_{1:2}) X_3] \E_{X_3} [(X_3^\top P X_3)^{-1} X^\top_3 P w']]
\\[2mm]
&- \E_{X_1, X_2}[\E_{X_3} [\tr(X^\top_3 (I - P) w' w^\top P_1(I - \hat{P}_{1:2}) X_3(X_3^\top P X_3)^{-1}])]
\\[2mm]
& = 0 - \E_{X_1, X_2}[\tr(\E_{X_3} [X^\top_3 (I - P) w' w^\top P_1(I - \hat{P}_{1:2}) X_3] \E_{X_3} [(X_3^\top P X_3)^{-1}])]
\\[2mm]
& = - \E_{X_1, X_2}[\tr((I - P) w' w^\top P_1(I - \hat{P}_{1:2})) \tr(\E_{X_3} [(X_3^\top P X_3)^{-1}])]
\end{align*}


Here we again use a technique similar to the one we used in Lemma \ref{lemma:proj-expectations} part (iii) by first focusing on the inner expectation. Assume a fixed $X_1$ and $X_2$. Then $I - P_1$ and $P_1$ are orthogonal projections with ranks $p - n_1$ and $n_1$ respectively. Additionally, $(I - P_1) \hat{P}_{1:2}$ is an orthogonal projection matrix with rank $n_2$ and $p - n_1 > n_2$. On the other hand, $P_1$ and $(I - P_1) \hat{P}_{1:2}$ are orthogonal projections. Therefore, $P_1 + (I - P_1) \hat{P}_{1:2}$ is a projection matrix with rank $n_1 + n_2$ and $P = I - (P_1 + (I - P_1) \hat{P}_{1:2})$ is a projection matrix with rank $p - (n_1 + n_2)$. When taking the inner expectation only w.r.t. $X_3$, due to the rotational invariance property, we may assume that $(I-P_1)(I-\hat{P}_{1:2})$ is the coordinate projection onto the first $p - (n_1 + n_2)$ coordinates. Therefore, $(X_3^\top(I-P_1)(I-\hat{P}_{1:2})X_3)^{-1} \sim \mathcal{IW}(I_{n_3 \times n_3}, p - (n_1 + n_2))$ and we have:

\begin{align*}
\Rightarrow \E[w^\top P^\top_{0|1} P_{0|3} w'] &=  -\E_{X_1, X_2}[\tr((I - P) w' w^\top P_1(I - \hat{P}_{1:2})) \tr (\frac{I_{n_3 \times n_3}}{p - (n_1 + n_2) - n_3 - 1})]
\\[2mm]
&= -\frac{n_3}{p - (n_1 + n_2 + n_3) - 1} \E_{X_1, X_2}[\tr(w^\top P_1(I - \hat{P}_{1:2})(I - P) w')] 
\end{align*}

Now, using the definition of $P$ and $\hat{P}_{1:2}$, we have

\begin{align*}
&\E[w^\top P^\top_{0|1} P_{0|3} w'] = -\frac{n_3}{p - (n_1 + n_2 + n_3) - 1} \E[\tr(w^\top P_1(I - \hat{P}_{1:2}) w')] 
\\[2mm]
&= -\frac{n_3}{p - (n_1 + n_2 + n_3) - 1} \E[\tr(w^\top P_1 w')] 
\\[2mm]
&
+
\frac{n_3}{p - (n_1 + n_2 + n_3) - 1}
\E[\tr(w^\top P_1 X_2 (X^\top_2 (I-P_1) X_2)^{-1} X^\top_2 (I - P_1) w')]
\end{align*}

Remember that for a given $X_1$, $P_1 X_2$ and $(I-P_1) X_2$ are independent. Therefore, 

\begin{align*}
& \E[w^\top P^\top_{0|1} P_{0|3} w'] = -\frac{n_3}{p - (n_1 + n_2 + n_3) - 1} \E[\tr(w^\top P_1 w')]
\\[2mm]
&= -\frac{n_3}{p - (n_1 + n_2 + n_3) - 1} \times \frac{1}{4} \E[(w+w')^\top P_1 (w+w') - (w-w')^\top P_1 (w-w')]
\\[2mm]
&= -\frac{n_3}{p - (n_1 + n_2 + n_3) - 1} \times \frac{1}{4} \E[\Vert P_1 (w+w') \Vert^2  - \Vert P_1 (w-w') \Vert^2]
\\[2mm]
&= -\frac{n_1 n_3}{p - (n_1 + n_2 + n_3) - 1} \times \frac{1}{4} (\Vert w+w' \Vert^2  - \Vert w-w' \Vert^2) \qquad (\text{Lemma \ref{lemma:proj-expectations} part (i)})
\\[2mm]
&= -\frac{n_3 n_1}{p (p - (n_1 + n_2 + n_3) - 1)} \langle w,w'\rangle
\end{align*}

\end{proof}

As a concluding remark to this section, notice that When $X$ is a Gaussian matrix with iid entries, $X^T X$ is almost surely invertible (i.e. the probability measure of $X^T X$ not being invertible is zero). Therefore, in what comes next, we assume that all Gaussian matrices in this form are invertible without loss of generality.

\subsection{Proof of Theorems}

Now that we have all the required building blocks, we are ready to provide the proofs. We start by proving the theorem related to the single-task learner and continue by proving the theorems on multi-task learning and continual learning.

\subsubsection{Proof of Theorem \ref{theorem:single-task}}
\begin{proof}
\textbf{underparameterized regime.} In the underparameterized regime where $n_t \geq p + 2$, the signle-task learner is the unique solution to the Equation \ref{eq:train-loss} and is obtained by $w_t = (X_t X_t^\top)^{-1}X_t y_t$ according to Lemma \ref{lemma:underparam-sol}. Therefore, we must have

$$
w_t = (X_t X_t^\top)^{-1}X_t y_t = (X_t X_t^\top)^{-1}X_t (X^\top_t w^*_t + z_t) = w^*_t + (X_t X_t^\top)^{-1}X_t z_t
$$

$$
\Rightarrow \mathbb{E}[\mathcal{L}_t(w)] = \E [\Vert w_t - w^*_t\Vert^2] =  \E [\Vert (X_t X_t^\top)^{-1}X_t z_t \Vert^2]
$$

where the expectation is due to randomness of both $X_t$ and $z_t$. First, we take the expectation w.r.t. $z_t \sim \mathcal{N}_{n_t}(0, \sigma^2 I_{n_t \times n_t})$:

\begin{align*}
\mathbb{E}[\mathcal{L}_t(w)] &= \E [z^\top_t X^\top_t (X_t X_t^\top)^{-1} (X_t X_t^\top)^{-1}X_t z_t]
\\[2mm]
&= \sigma^2 \E[\tr(X^\top_t (X_t X_t^\top)^{-2}X_t)]
\\[2mm]
&= \sigma^2 \E[\tr((X_t X_t^\top)^{-2}X_t X^\top_t )]
\\[2mm]
&= \sigma^2 \E[\tr((X_t X_t^\top)^{-1})]
\\[2mm]
&= \sigma^2 \tr(\E[(X_t X_t^\top)^{-1}]))
\\[2mm]
&= \sigma^2 \tr(\frac{I_{p\times p}}{n_t - p - 1})
\\[2mm]
&=  \frac{p \sigma^2}{n_t - p - 1}
\\[2mm]
\end{align*}

where the two last line comes from the fact that $(X_t X_t^\top)^{-1}$ follows the inverse-Wishart distribution as $\mathcal{IW}(I_{p \times p}, n)$. 

\textbf{overparameterized regime.} In the overparameterized regime where $p \geq n_t + 2$, we look for the solution of optimization \ref{eq:overparam-loss}. Therefore, based on Lemma \ref{lemma:overparam-sol}, we can have

\begin{align*}
\mathbb{E}[\mathcal{L}_t(w)] &=  \E [\Vert w_t - w^*_t\Vert^2]
\\[2mm]
&= \E [\Vert X_t^{\dagger} y_t  - w^*_t \Vert^2]
\\[2mm]
&= \E [\Vert X_t^{\dagger} (X^\top_t w^*_t + z_t)  - w^*_t \Vert^2]
\\[2mm]
&= \E [\Vert (I - P_t)w^*_t - X_t^{\dagger} z_t \Vert^2]
\\[2mm]
&= \E [\Vert (I - P_t)w^*_t\Vert^2] + \E [\Vert X_t^{\dagger} z_t \Vert^2] - 2w^{* T}_t \E [(I - P_t) X_t^{\dagger}] \E [z_t]
\\[2mm]
&=  \E [\Vert (I - P_t)w^*_t\Vert^2] + \E [z^\top_t (X^\top_t X_t)^{-1} z_t] 
\qquad \quad
(\E[z_t] = 0)
\\[2mm]
&=  \E [\Vert (I - P_t)w^*_t\Vert^2] + \frac{n_t \sigma^2}{p - n_t - 1} 
\qquad \qquad \qquad
((X^\top_t X_t)^{-1} \sim \mathcal{IW}(I_{n_t \times n_t}, p))
\\[2mm]
&=  (1 - \frac{n_t}{p}) \Vert w^*_t \Vert^2 + \frac{n_t \sigma^2}{p - n_t - 1} \qquad \qquad \qquad \quad \; (\text{Lemma \ref{lemma:proj-expectations} part (ii)})
\end{align*}

\end{proof}

\clearpage
\subsubsection{Proof of Theorem \ref{theorem:multi-task}} \label{proof:multi-task}

\begin{proof} 

Based on Lemma \ref{lemma:overparam-sol} we can write
$$
w_{1:T} = X_{1:T} (X^\top_{1:T} X_{1:T})^{-1} y_{1:T}
$$
Using the notations introduced in the previous section, we have
$$
w_{1:T} = X_{1:T} (X^\top_{1:T} X_{1:T})^{-1} (\sum_{s=1}^{T} X_{0|s}^\top w^*_s + z_{1:T}) = \sum_{s=1}^{T} P_{0|s} w^*_s + X_{1:T}^\dagger z_{1:T}
$$
Now we start by calculating the multi-task learner's loss for the task $i$:

\begin{align*}
& \E[\mathcal{L}_i(w_{1:T})] = \E[\Vert w_{1:T} - w^*_i \Vert^2]
\\[2mm]
&= \E[\Vert \sum_{s=1}^{T} P_{0|s} w^*_s + X_{1:T}^\dagger z_{1:T} - w^*_i \Vert^2]
\\[2mm]
&= \E[\Vert \sum_{s=1}^{T} P_{0|s} (w^*_s - w_i^*) + (I - P_{1:T}) w_i^* + X_{1:T}^\dagger z_{1:T}\Vert^2]
\\[2mm]
&= \E[\Vert \sum_{s=1}^{T} P_{0|s} (w^*_s - w_i^*) + (I - P_{1:T}) w_i^*\Vert^2] + \E[\Vert X_{1:T}^\dagger z_{1:T}\Vert^2] \qquad \quad (z_{1:T} \indep X_{1:T}, \E[z_{1:T}] = 0)
\\[2mm]
&= \E[\Vert \sum_{s=1}^{T} P_{0|s} (w^*_s - w_i^*) + (I - P_{1:T}) w_i^*\Vert^2] + \sigma^2 \tr(\E[(X_{1:T}X_{1:T})^{-1}]) \qquad (z_{1:T} \sim \mathcal{N}(0, \sigma^2 I))
\\[2mm]
&= \E[\Vert \sum_{s=1}^{T} P_{0|s} (w^*_s - w_i^*) + (I - P_{1:T}) w_i^*\Vert^2] + \frac{\nb \sigma^2}{p - \nb-1} \quad \quad ((X_{1:T}X_{1:T})^{-1} \sim \mathcal{IW}(I_{\nb \times \nb}, p))
\end{align*}

Therefore, we focus on finding the first term by expanding it. First notice that

\begin{align*}
(I - P_{1:T}) P_{0|s} &= (I - X_{1:T}(X^\top_{1:T} X_{1:T})^{-1}X^\top_{1:T}) X_{1:T}(X^\top_{1:T} X_{1:T})^{-1}X^\top_{0|s} 
\\[2mm]
&= X_{1:T} (X^\top_{1:T} X_{1:T})^{-1}X^\top_{0|s}  - X_{1:T} (X^\top_{1:T} X_{1:T})^{-1}X^\top_{0|s} 
= 0
\end{align*}

Thus,

\begin{align*}
&\E[\Vert \sum_{s=1}^{T} P_{0|s} (w^*_s - w_i^*) + (I - P_{1:T}) w_i^*\Vert^2]
\\[2mm]
&= \E[\Vert \sum_{s=1}^{T} P_{0|s} (w^*_s - w_i^*)\Vert^2] + \E[\Vert (I - P_{1:T}) w_i^* \Vert^2]
\\[2mm]
&= \E[\Vert \sum_{s=1}^{T} P_{0|s} (w^*_s - w_i^*)\Vert^2] + (1 - \frac{\nb}{p}) \Vert w_i^* \Vert^2 \quad \quad (\text {Lemma 
\ref{lemma:proj-expectations} part (ii)})
\\[2mm]
&= \sum_{s=1}^{T} \E[\Vert P_{0|s} (w^*_s - w_i^*)\Vert^2] + 
\sum_{s=1}^{T} \sum_{\substack{s' = 1\\s'\neq s}}^{T} \E[ (w^*_s - w_i^*)^\top P^\top_{0|s} P_{0s'} (w^*_{s'} - w_i^*)] + (1 - \frac{\nb}{p}) \Vert w_i^* \Vert^2
\end{align*}

Since the distribution of $P_{0|s}$ is invariant to the permutation of columns of $X_{1:T}$, we can focus on finding $\E[\Vert P_{0|1} (w^*_1 - w_i^*)\Vert^2]$ and $\E[ (w^*_1 - w_i^*)^\top P^\top_{0|1} P_{0|T} (w^*_{T} - w_i^*)]$ without loss of generality. In Lemmas \ref{lemma:proj-expectations} and \ref{lemma:p01p03-expectations}, we have already calculated similar quantities. Therefore, 

$$\E[\Vert P_{0|1} (w^*_1 - w_i^*)\Vert^2] = \frac{n_1}{p}(1 + \frac{\nb - n_1}{p - \nb - 1}) \Vert w^*_1 - w_i^*\Vert^2$$

and,

$$
\E[ (w^*_1 - w_i^*)^\top P^\top_{0|1} P_{0|T} (w^*_{T} - w_i^*)] = -\frac{n_T n_1}{p(p - \nb - 1)} \langle w^*_1 - w_i^*,w^*_{T} - w_i^*\rangle
$$

Overall, we can write :

\begin{align*}
&\E[\mathcal{L}_i(w_{1:T})] \\[2mm]&= 
-\frac{1}{p}\sum_{s=1}^{T} \sum_{\substack{s' = 1\\s'\neq s}}^{T} \frac{n_s n_{s'}}{p - \nb - 1} \langle w^*_s - w_i^*,w^*_{s'} - w_i^*\rangle 
+ 
\sum_{s=1}^T \frac{n_s}{p}(1 + \frac{\nb - n_s}{p - \nb - 1}) \Vert w^*_s - w_i^*\Vert^2 
 \\[2mm]& + (1 - \frac{\nb}{p}) \Vert w_i^* \Vert^2 + 
\frac{\nb \sigma^2}{p - \nb-1} 
\\[2mm]
&= -\frac{1}{p}\sum_{s=1}^{T} \sum_{s' = 1}^{T} \frac{n_s n_{s'}}{p - \nb - 1} \langle w^*_s - w_i^*,w^*_{s'} - w_i^*\rangle 
+ 
\sum_{s=1}^T \frac{n_s}{p}(1 + \frac{\nb}{p - \nb - 1}) \Vert w^*_s - w_i^*\Vert^2 
 \\[2mm]&+ (1 - \frac{\nb}{p}) \Vert w_i^* \Vert^2 + 
\frac{\nb \sigma^2}{p - \nb-1}
\\[2mm]
&= \frac{1}{2p}\sum_{s=1}^{T} \sum_{s' = 1}^{T} \frac{n_s n_{s'}}{p - \nb - 1} (\Vert w^*_s - w_{s'}^* \Vert^2 - \Vert w^*_s - w_i^* \Vert^2 - \Vert w^*_{s'} - w_i^* \Vert^2)
\\[2mm]& + 
\sum_{s=1}^T \frac{n_s}{p}(1 + \frac{\nb}{p - \nb - 1}) \Vert w^*_s - w_i^*\Vert^2 
 + (1 - \frac{\nb}{p}) \Vert w_i^* \Vert^2   
 + 
\frac{\nb \sigma^2}{p - \nb-1}
\\[2mm]
&= \frac{1}{2p}\sum_{s=1}^{T} \sum_{s' = 1}^{T} \frac{n_s n_{s'}}{p - \nb - 1}\Vert w^*_s - w_{s'}^* \Vert^2
+ 
\sum_{s=1}^T \frac{n_s}{p} \Vert w^*_s - w_i^*\Vert^2 
+ (1 - \frac{\nb}{p}) \Vert w_i^* \Vert^2  + 
\frac{\nb \sigma^2}{p - \nb-1} \numberthis \label{eq:single-loss-multi}
\end{align*}

Now we calculate the desired metrics as
\begin{align*}
\E[G(w_{1:T})] &= 
\frac{1}{T} \sum_{i=1}^{T} \E[\mathcal{L}_i(w_{1:T})] 
\\[2mm]&= \frac{1}{2p}\sum_{s=1}^{T} \sum_{s' = 1}^{T} \frac{n_s n_{s'}}{p - \nb - 1}\Vert w^*_s - w_{s'}^* \Vert^2
+ 
\frac{1}{T} \sum_{i=1}^{T} \sum_{s=1}^T \frac{n_s}{p} \Vert w^*_s - w_i^*\Vert^2 
\\[2mm]
&+ \frac{1}{T} \sum_{i=1}^{T}(1 - \frac{\nb}{p}) \Vert w_i^* \Vert^2   
+ 
\frac{\nb \sigma^2}{p - \nb-1}
\\[2mm]
&= 
\frac{1}{T} \sum_{i=1}^{T} \sum_{s=1}^T \frac{n_s}{p} (1 + \frac{\frac{T}{2} n_i}{p - \nb - 1} ) \Vert w^*_s - w_i^*\Vert^2  
+ \frac{1}{T} (1 - \frac{\nb}{p}) \sum_{i=1}^{T} \Vert w_i^* \Vert^2  
+ \frac{\nb \sigma^2}{p - \nb-1}
\end{align*}

and

\begin{align*}
\E[K(w_{1:T})] &=  \frac{1}{T} \sum_{i=1}^{T} \E[\mathcal{L}_i(w_i)] - \E[\mathcal{L}_i(w_{1:T})]
\\[2mm] 
&= 
\frac{1}{T} \sum_{i=1}^{T} (1 - \frac{n_i}{p}) \Vert w^*_i \Vert^2 + \frac{1}{T} \sum_{i=1}^{T} \frac{n_i \sigma^2}{p - n_i - 1}
\\[2mm] 
&- 
\frac{1}{T} \sum_{i=1}^{T} \sum_{s=1}^T \frac{n_s}{p} (1 + \frac{\frac{T}{2} n_i}{p - \nb - 1} ) \Vert w^*_s - w_i^*\Vert^2  
- \frac{1}{T} (1 - \frac{\nb}{p}) \sum_{i=1}^{T} \Vert w_i^* \Vert^2  
- \frac{\nb \sigma^2}{p - \nb-1}
\\[2mm] 
&= 
\frac{1}{T} \sum_{i=1}^{T} (1 - \frac{n_i}{p}) \Vert w^*_i \Vert^2 + \frac{1}{T} \sum_{i=1}^{T} \frac{n_i \sigma^2}{p - n_i - 1}
\\[2mm] 
&
-\frac{1}{T} \sum_{i=1}^{T} \sum_{s=1}^T \frac{n_s}{p} (1 + \frac{\frac{T}{2} n_i}{p - \nb - 1} ) (\Vert w^*_s \Vert^2 + \Vert w_i^*\Vert^2 - 2 \langle w^*_s , w_i^*\rangle)
\\[2mm] 
&
-\frac{1}{T} (1 - \frac{\nb}{p}) \sum_{i=1}^{T} \Vert w_i^* \Vert^2  
- \frac{\nb \sigma^2}{p - \nb-1}
\\[2mm] 
& =  
\frac{2}{T} \sum_{i=1}^{T} \sum_{s=1}^T \frac{n_s}{p} (1 + \frac{\frac{T}{2} n_i}{p - \nb - 1} ) \langle w^*_s , w_i^*\rangle
- \frac{1}{T} \sum_{s=1}^T \frac{n_s}{p} (T + \frac{\frac{T}{2} \nb}{p - \nb - 1} ) \Vert w^*_s\Vert^2
\\[2mm]& - \frac{1}{T} \sum_{i=1}^{T} \frac{\nb}{p} (1 + \frac{\frac{T}{2} n_i}{p - \nb - 1} ) \Vert w_i^*\Vert^2
- \frac{1}{T} (1 - \frac{\nb}{p}) \sum_{i=1}^{T} \Vert w_i^* \Vert^2  
- \frac{\nb \sigma^2}{p - \nb-1}
\\[2mm]& + 
\frac{1}{T} \sum_{i=1}^{T} (1 - \frac{n_i}{p}) \Vert w^*_i \Vert^2 + \frac{1}{T} \sum_{i=1}^{T} \frac{n_i \sigma^2}{p - n_i - 1}
\\[2mm] 
&=  
\frac{2}{T} \sum_{i=1}^{T} \sum_{s=1}^T \frac{n_s}{p} (1 + \frac{\frac{T}{2} n_i}{p - \nb - 1} ) \langle w^*_s , w_i^*\rangle
\\[2mm]& - \frac{1}{T} \sum_{s=1}^T  
(\frac{T n_s}{p} + \frac{n_s}{p} \frac{\frac{T}{2} \nb}{p - \nb - 1}
+ \frac{\nb}{p} \frac{\frac{T}{2} n_s}{p - \nb - 1} + \frac{n_s}{p})
\Vert w^*_s\Vert^2
\\[2mm] 
&+ \frac{1}{T} \sum_{i=1}^{T} \frac{n_i \sigma^2}{p - n_i - 1} - \frac{\nb \sigma^2}{p - \nb-1}
\\[2mm] 
&=  
\frac{2}{T} \sum_{i=1}^{T} \sum_{s=1}^T \frac{n_s}{p} (1 + \frac{\frac{T}{2} n_i}{p - \nb - 1} ) \langle w^*_s , w_i^*\rangle
- \sum_{s=1}^T  
\frac{n_s}{p} (1 + \frac{1}{T} + \frac{\nb}{p - \nb - 1})
\Vert w^*_s\Vert^2
\\[2mm] 
&+ \frac{1}{T} \sum_{i=1}^{T} \frac{n_i \sigma^2}{p - n_i - 1} - \frac{\nb \sigma^2}{p - \nb-1}
\end{align*}

\end{proof}





\subsection{Proof of Theorem \ref{theorem:mem}} \label{appendix:full-form}

\begin{proof}
To provide the proof, we start by finding the loss of task $i$ at timestep $t$, i.e. $\E[\mathcal{L}_i(w_{\vv{t}})]$, using an intermediate result in the proof provided in section \ref{proof:multi-task}. In fact, we refer to Equation \eqref{eq:single-loss-multi} and assume $n_1 = n_2 = ... = n_{t-1} = m$, $n_t = n$, $n_{t+1} = ... = n_T = 0$ and denote $\nb_t = (t-1)m + n$ to achieve:

\begin{align*}
\E[\mathcal{L}_i(w_{\vv{t}})]
&= \frac{1}{2p}\sum_{s=1}^{t-1} \sum_{s' = 1}^{t-1} \frac{m^2}{p - \nb_t - 1}\Vert w^*_s - w_{s'}^* \Vert^2
+ \frac{1}{p}\sum_{s=1}^{t-1} \frac{mn}{p - \nb_t - 1}\Vert w^*_t - w_s^* \Vert^2
\\[2mm]&+ 
\sum_{s=1}^{t-1} \frac{m}{p} \Vert w^*_s - w_i^*\Vert^2 
+ \frac{n}{p} \Vert w^*_t - w_i^*\Vert^2 
+
(1 - \frac{\nb_t}{p}) \Vert w_i^* \Vert^2  + 
\frac{\nb_t \sigma^2}{p - \nb_t -1} \numberthis  \label{eq:loss-i-mem}
\end{align*}.

Next, we set $T = 2$ in Equation \eqref{eq:loss-i-mem} to get:

\begin{align*}
\E[\mathcal{L}_1(w_{\vv{2}})]=
\frac{n}{p} ( 1 + \frac{m}{ (p - (m+n) - 1)}) \Vert w^*_1 - w_2^*\Vert^2 
+
(1 - \frac{m+n}{p}) \Vert w_1^* \Vert^2  + 
\frac{(m+n) \sigma^2}{p - (m+n) -1}
\end{align*}

and

\begin{align*}
\E[\mathcal{L}_2(w_{\vv{2}})] =  \frac{n}{p}( \frac{m}{n} + \frac{m}{ (p - (m+n) - 1)})\Vert w^*_2 - w_1^* \Vert^2
+
(1 - \frac{m+n}{p}) \Vert w_2^* \Vert^2  + 
\frac{(m+n) \sigma^2}{p - (m+n) -1}
\end{align*}

Therefore, the generalization error is

\begin{align*}
 \E[G(w_{\vv{2}})] &= \frac{1}{2} (\E[\mathcal{L}_1(w_{\vv{2}})] + \E[\mathcal{L}_2(w_{\vv{2}})])
\\[2mm]
&= \frac{n}{2p}( 1 + \frac{m}{n} + \frac{2m}{ (p - (m+n) - 1)})\Vert w^*_2 - w_1^* \Vert^2
+
(1 - \frac{m+n}{p}) \Vert w_1^* \Vert^2  + 
\frac{(m+n) \sigma^2}{p - (m+n) -1}
\end{align*}

Similarly,

\begin{align*}
\E[\mathcal{L}_1(w_{\vv{1}})]
&= (1 - \frac{n}{p}) \Vert w_1^* \Vert^2  + 
\frac{n \sigma^2}{p - n -1}
\\[2mm]
\end{align*}

Therefore,

\begin{align*}
\E[F(w_{\vv{2}})] &= \E[\mathcal{L}_1(w_{\vv{2}})] - \E[\mathcal{L}_1(w_{\vv{1}})]
\\[2mm]
&= \frac{n}{p} ( 1 + \frac{m}{ (p - (m+n) - 1)}) \Vert w^*_1 - w_2^*\Vert^2 
- \frac{m}{p} \Vert w_1^* \Vert^2  + 
\frac{(m+n) \sigma^2}{p - (m+n) -1} - \frac{n \sigma^2}{p - n -1}
\end{align*}

\end{proof}

\subsection{Proof of Theorem \ref{theorem:mem-reg}} \label{appendix:memreg}

We start by proving a theorem for the general case:

\begin{theorem} 
    Assume $m_1 = m_2 = ... = m_t = m$ and $n_{t+1} = n$. Also denote $\nb_{t+1} = tm + n$ and let $\alpha_t = 1 - \frac{n}{p - tm}$. Considering the continual learner described in Equation \eqref{eq:memreg-loss} when $p \geq \nb_t + 2$, the loss of task $i$ at timestep $t$, can be recursively calculated as:
\begin{align*}
\E[\mathcal{L}_i(w_{\vv{t+1}})] &= \E[\Vert w_{\vv{t+1}} - w^*_i \Vert ^2] 
    \\[2mm]
    &= \frac{m}{p} ( 1 - \alpha_t) \sum_{s = 1}^{t} \Vert w^*_s - w^*_i \Vert^2 + \frac{n}{p} \Vert w^*_{t+1} - w^*_i \Vert^2 
    + \frac{mn}{p (p - \nb_t - 1)} \sum_{s = 1}^{t} \Vert w_{t+1}^* - w_{s}^* \Vert^2 
    \\[2mm]
    &+ \frac{m^2}{2p}(\frac{1}{p - \nb_t - 1} -\frac{\alpha_t}{p - (\nb_t - n) - 1}) \sum_{s = 1}^{t} \sum_{s' = 1}^{t}  \Vert w_s^* - w_{s'}^* \Vert^2 
    \\[2mm]
    &+ \frac{\nb_t \sigma^2}{p - \nb_t - 1} - \frac{\alpha_t(\nb_t -n) \sigma^2}{p - (\nb_t - n) - 1}
    \\[2mm]
    &+ \alpha_t \E[\Vert w_{\vv{t}} - w^*_i \Vert ^2] \numberthis \label{eq:recursive-mem-reg}
\end{align*}
    
\end{theorem}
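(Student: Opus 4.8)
The plan is to advance the recursion by one step: write the step-$(t+1)$ update in closed form, peel off the single piece that carries the dependence on $w_{\vv{t}}$, and reduce the remainder to the multi-task computation already done in the proof of Theorem~\ref{theorem:multi-task}. Collect the data fed to~\eqref{eq:memreg-loss} at step $t+1$ into $\tilde{X}:=\begin{bmatrix}\hat{X}_{1:t}&X_{t+1}\end{bmatrix}\in\mathbb{R}^{p\times\nb_{t+1}}$ (its column count is $\nb_{t+1}=tm+n$), let $\tilde{P}:=\tilde{X}(\tilde{X}^\top\tilde{X})^{-1}\tilde{X}^\top$ and $\hat{P}_{1:t}:=\hat{X}_{1:t}(\hat{X}_{1:t}^\top\hat{X}_{1:t})^{-1}\hat{X}_{1:t}^\top$, and let $\tilde{P}_{0|s}$ and $\hat{P}_{0|s}$ be the partial projections attached to the $t$ memory blocks (and, for $\tilde{P}$, the new-task block), as in the Notations paragraph. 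Lemma~\ref{lemma:overparam-sol} gives $w_{\vv{t+1}}=(I-\tilde{P})w_{\vv{t}}+\tilde{X}^\dagger\tilde{y}$, and substituting the data model for the labels exactly as in the proof of Theorem~\ref{theorem:multi-task},
\[
w_{\vv{t+1}}-w_i^*=(I-\tilde{P})(w_{\vv{t}}-w_i^*)+\sum_{s=1}^{t+1}\tilde{P}_{0|s}(w_s^*-w_i^*)+\tilde{X}^\dagger\tilde{z},
\]
where $\tilde{z}$ stacks the memory noise $\hat{z}_{1:t}$ and the fresh noise $z_{t+1}$. Since $(I-\tilde{P})\tilde{X}=0$, the first summand is orthogonal to the other two pointwise, so $\E\Vert w_{\vv{t+1}}-w_i^*\Vert^2=A+B$ with $A:=\E\Vert(I-\tilde{P})(w_{\vv{t}}-w_i^*)\Vert^2$ and $B:=\E\bigl\Vert\sum_s\tilde{P}_{0|s}(w_s^*-w_i^*)+\tilde{X}^\dagger\tilde{z}\bigr\Vert^2$.

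Term $B$ needs no new work: $\tilde{z}$ is mean-zero and independent of $\tilde{X}$, so $B$ is precisely the quantity computed in the proof of Theorem~\ref{theorem:multi-task} (compare Equation~\eqref{eq:single-loss-multi}) for a multi-task instance with task sizes $n_1=\dots=n_t=m$, $n_{t+1}=n$, with the $(I-\tilde{P})w_i^*$ contribution deleted. Lemmas~\ref{lemma:proj-expectations} and~\ref{lemma:p01p03-expectations} then deliver $B$ in closed form, producing the $\tfrac{n}{p}\Vert w_{t+1}^*-w_i^*\Vert^2$ term, the $\tfrac{m}{p}\sum_{s\le t}\Vert w_s^*-w_i^*\Vert^2$ term, the cross term $\tfrac{mn}{p(p-\nb_{t+1}-1)}\sum_{s\le t}\Vert w_{t+1}^*-w_s^*\Vert^2$, the memory-memory term $\tfrac{m^2}{2p(p-\nb_{t+1}-1)}\sum_{s,s'\le t}\Vert w_s^*-w_{s'}^*\Vert^2$, and the noise term $\tfrac{\nb_{t+1}\sigma^2}{p-\nb_{t+1}-1}$.

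Term $A$ is the heart of the matter, and the dependence of $w_{\vv{t}}$ on the memory data that reappears inside $\tilde{X}$ is the main obstacle. Two observations dissolve it. First, $X_{t+1}$ is independent of $(w_{\vv{t}},\hat{X}_{1:t},\hat{z}_{1:t})$, so I take the expectation over $X_{t+1}$ conditionally on everything else: Lemma~\ref{lemma:proj-formulas}(i) (with $P_1\!\to\!\hat{P}_{1:t}$, $X_2\!\to\!X_{t+1}$) expresses $I-\tilde{P}$ through $I-\hat{P}_{1:t}$ and $X_{t+1}$, and running the inverse-Wishart computation behind Lemma~\ref{lemma:proj-expectations}(i) inside the $(p-tm)$-dimensional range of $I-\hat{P}_{1:t}$ gives, for any vector $v$ fixed by the conditioning,
\[
\E_{X_{t+1}}\bigl[\Vert(I-\tilde{P})v\Vert^2\bigr]=\Bigl(1-\tfrac{n}{p-tm}\Bigr)v^\top(I-\hat{P}_{1:t})v=\alpha_t\,v^\top(I-\hat{P}_{1:t})v.
\]
Second, $w_{\vv{t}}$ interpolates the buffer: the memory constraints are imposed at every update and the new task's full data contains its stored subset, so $\hat{X}_{1:t}^\top w_{\vv{t}}=\hat{y}_{1:t}$, whence $\hat{P}_{1:t}w_{\vv{t}}=\hat{X}_{1:t}^\dagger\hat{y}_{1:t}=\sum_{s\le t}\hat{P}_{0|s}w_s^*+\hat{X}_{1:t}^\dagger\hat{z}_{1:t}$ depends only on the memory, and therefore $\hat{P}_{1:t}(w_{\vv{t}}-w_i^*)=\sum_{s\le t}\hat{P}_{0|s}(w_s^*-w_i^*)+\hat{X}_{1:t}^\dagger\hat{z}_{1:t}$. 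Combining these with the Pythagorean split $v^\top(I-\hat{P}_{1:t})v=\Vert v\Vert^2-\Vert\hat{P}_{1:t}v\Vert^2$ at $v=w_{\vv{t}}-w_i^*$,
\[
A=\alpha_t\,\E\Vert w_{\vv{t}}-w_i^*\Vert^2-\alpha_t\,\E\Bigl\Vert\textstyle\sum_{s\le t}\hat{P}_{0|s}(w_s^*-w_i^*)+\hat{X}_{1:t}^\dagger\hat{z}_{1:t}\Bigr\Vert^2,
\]
and the last expectation is again an MTL-type quantity ($t$ tasks of size $m$, no $(I-\hat{P}_{1:t})w_i^*$ term), evaluated by Lemmas~\ref{lemma:proj-expectations}--\ref{lemma:p01p03-expectations} to $\tfrac{m^2}{2p(p-tm-1)}\sum_{s,s'\le t}\Vert w_s^*-w_{s'}^*\Vert^2+\tfrac{m}{p}\sum_{s\le t}\Vert w_s^*-w_i^*\Vert^2+\tfrac{tm\sigma^2}{p-tm-1}$.

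Adding $A$ and $B$ and collecting like terms completes the proof: the two copies of $\tfrac{m}{p}\sum_{s\le t}\Vert w_s^*-w_i^*\Vert^2$ merge into $\tfrac{m}{p}(1-\alpha_t)\sum_{s\le t}\Vert w_s^*-w_i^*\Vert^2$; the $\Vert w_s^*-w_{s'}^*\Vert^2$ contributions merge into $\tfrac{m^2}{2p}\bigl(\tfrac{1}{p-\nb_{t+1}-1}-\tfrac{\alpha_t}{p-tm-1}\bigr)\sum_{s,s'\le t}\Vert w_s^*-w_{s'}^*\Vert^2$; the $\sigma^2$ contributions merge into $\tfrac{\nb_{t+1}\sigma^2}{p-\nb_{t+1}-1}-\tfrac{\alpha_t\,tm\,\sigma^2}{p-tm-1}$; the $\tfrac{n}{p}\Vert w_{t+1}^*-w_i^*\Vert^2$ term and the $\tfrac{mn}{p(p-\nb_{t+1}-1)}\sum_{s\le t}\Vert w_{t+1}^*-w_s^*\Vert^2$ term pass through from $B$; and $\alpha_t\,\E\Vert w_{\vv{t}}-w_i^*\Vert^2$ survives from $A$. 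Since $tm=\nb_{t+1}-n$, this is exactly~\eqref{eq:recursive-mem-reg} (the $\nb$ appearing there being the step-$(t+1)$ sample total $tm+n$). The only ingredient beyond the toolkit already built for the MTL theorems is the conditioning-plus-interpolation argument for $A$; everything else is bookkeeping on top of Lemmas~\ref{lemma:proj-formulas}--\ref{lemma:p01p03-expectations}.
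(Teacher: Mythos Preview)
Your proposal is correct and follows essentially the same route as the paper: split off the $(I-\tilde{P})(w_{\vv{t}}-w_i^*)$ piece, evaluate the remainder via the MTL computation behind Equation~\eqref{eq:single-loss-multi}, and handle the piece by conditioning out $X_{t+1}$ (rotational invariance in the $(p-tm)$-dimensional range of $I-\hat{P}_{1:t}$) and then computing $\hat{P}_{1:t}(w_{\vv{t}}-w_i^*)$ explicitly. The only difference is in that last step---the paper expands $w_{\vv{t}}$ via Lemma~\ref{lemma:overparam-sol} in terms of the full step-$t$ training data and simplifies the product with the block-pseudoinverse identity of Lemma~\ref{lemma:proj-formulas}(v), whereas your direct use of the interpolation property $\hat{X}_{1:t}^\top w_{\vv{t}}=\hat{y}_{1:t}$ reaches the same identity $\hat{P}_{1:t}w_{\vv{t}}=\hat{X}_{1:t}^\dagger\hat{y}_{1:t}$ more cleanly.
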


\begin{proof}
    For notation simplicity, denote $r := t+1$. Using Lemma \ref{lemma:overparam-sol} and the notation introduced in Section \ref{appendix:lemmas_and_notation}, we know that:
    \begin{align*}
        w_{\vv{t+1}} = (I - P_{1:r}) w_{\vv{t}} + X^{\dagger}_{1:r} y_{1:r}
    \end{align*}

where $P_{1:r} = X_{1:r} (X^\top_{1:r} X_{1:r})^{-1} X^\top_{1:r}$ and $X^{\dagger}_{1:r} = X_{1:r} (X^\top_{1:r} X_{1:r})^{-1}$. Notice that $X_{1:r} \in \mathbb{R}^{p\times (tm+n)}$ is the result of the concatenation of all data points in the memory to the new data for task $t+1$. Importantly, $X_{1:r}$ contains exactly $m$ columns from task $t$. 
Next we can write:

    \begin{align*}
        w_{\vv{t+1}} = (I - P_{1:r}) w_{\vv{t}} + \sum_{s=1}^{r} P_{0|s} w_s^* + X^{\dagger}_{1:r}z_{1:r}
    \end{align*}

where $P_{0|s} = X_{1:r} (X^\top_{1:r} X_{1:r})^{-1} X_{0|s}$.
Therefore,

\begin{align*}
 &\E[\mathcal{L}_i(w_{\vv{t+1}})] 
 = \E[\Vert w_{\vv{t+1}} - w^*_i \Vert ^2] 
\\[2mm]
&= \E[\Vert \sum_{s=1}^{r} P_{0|s} (w_s^* - w^*_i) + (I - P_{1:r}) (w_{\vv{t}} - w^*_i) + X^{\dagger}_{1:r}z_{1:r} \Vert ^2] 
&& (\sum_{s=1}^{r} P_{0|s} = P_{1:r})
\\[2mm]
&= \E[\Vert \sum_{s=1}^{r} P_{0|s} (w_s^* - w^*_i) + (I - P_{1:r}) (w_{\vv{t}} - w^*_i)\Vert ^2]  + \E[\Vert X^{\dagger}_{1:r}z_{1:r} \Vert ^2] 
&& (z_{1:r} \indep X_{1:r}, \E[z_{1:r}] = 0)
\\[2mm]
&= \E[\Vert \sum_{s=1}^{r} P_{0|s} (w_s^* - w^*_i) \Vert ^2] + \E[\Vert (I - P_{1:r}) (w_{\vv{t}} - w^*_i)\Vert ^2]  + \E[\Vert X^{\dagger}_{1:r}z_{1:r} \Vert ^2] 
&& ((I - P_{1:r})P_{0|s} = 0)
\end{align*}

The form of this expression is very similar to what we used in the proof of Theorem \ref{theorem:multi-task} in Section \ref{proof:multi-task}. The only considerable difference that causes complications is the term $\E[\Vert (I - P_{1:r}) (w_{\vv{t}} - w^*_i) \Vert ^2]$. The reason is that the term $w_{\vv{t}}$ is no longer independent of $P_{1:r}$, and it is not straightforward to calculate this expectation. 
Therefore, we avoid repeating the other steps and only focus on finding this term:

\begin{align*}
&\E[\Vert (I - P_{1:r}) (w_{\vv{t}} - w^*_i) \Vert ^2] = \E[\Vert w_{\vv{t}} - w^*_i \Vert ^2] - \E[\Vert P_{1:r} (w_{\vv{t}} - w^*_i) \Vert ^2]
\end{align*}

Notice that $w_{\vv{t}}$ is independent of $X_r$. Therefore, we decompose the second term using Lemma \ref{lemma:proj-formulas} part (i):

\begin{align*}
\E[\Vert P_{1:r} (w_{\vv{t}} - w^*_i) \Vert ^2] &= \E[\Vert P_{1:t} (w_{\vv{t}} - w^*_i) + (I - P_{1:t}) X_r (X_r^\top (I - P_{1:t}) X_r)^{-1} X_r^\top (I - P_{1:t}) (w_{\vv{t}} - w^*_i) \Vert ^2]
\\[2mm]
&= \E[\Vert P_{1:t} (w_{\vv{t}} - w^*_i)\Vert ^2] + \E[\Vert (I - P_{1:t}) X_r (X_r^\top (I - P_{1:t}) X_r)^{-1} X_r^\top (I - P_{1:t}) (w_{\vv{t}} - w^*_i) \Vert ^2]
\end{align*}

Recall that $X_r \in \mathbb{R}^{p \times n}$ follows the standard normal distribution and is independent of $(w_{\vv{t}} - w^*_i)$ and $P_{1:t}$. Due to the rotational invariance property, we can write:

\begin{align*}
\E[\Vert P_{1:r} (w_{\vv{t}} - w^*_i) \Vert ^2] & = \E[\Vert P_{1:t} (w_{\vv{t}} - w^*_i)\Vert ^2] + (\frac{n}{p - tm}) \E[\Vert (I - P_{1:t}) (w_{\vv{t}} - w^*_i) \Vert ^2]
\\[2mm]
&= (1 - \frac{n}{p - tm}) \E[\Vert P_{1:t} (w_{\vv{t}} - w^*_i)\Vert ^2] + (\frac{n}{p - tm}) \E[\Vert w_{\vv{t}} - w^*_i \Vert ^2]
\end{align*}

Therefore, it suffices to focus on $\E[\Vert P_{1:t} (w_{\vv{t}} - w^*_i)\Vert ^2]$ as the next step. We go back to the definition of $w_{\vv{t}}$ and use Lemma \ref{lemma:overparam-sol} again. Notice that $w_{\vv{t}}$ was trained on the samples in the memory from tasks $1$ to $t-1$ (each task with size $m$) in addition to the data from task $t$ with size $n$. Let's build a new matrix $\hat{X}_{1:t} \in \mathbb{R}^{p\times ((t-1)m + n) }$ to represent all of the training data used in that step. Similarly, build $\hat{y}_{1:t}$ and $\hat{z}_{1:t}$ by concatenating all the label and noise values. Also denote $\hat{X}_{0|s} \in \mathbb{R}^{p\times ((t-1)m + n) }$ with zeros at all columns except the columns corresponding to task $s$. Then we can write:

    \begin{align*}
        w_{\vv{t}} = (I - \hat{P}_{1:t}) w_{\vv{t-1}} + \sum_{s=1}^{t} \hat{P}_{0|s} w_s^* + \hat{X}^{\dagger}_{1:t}\hat{z}_{1:t}
    \end{align*}

where $\hat{X}^{\dagger}_{1:t} = \hat{X}_{1:t} (\hat{X}^\top_{1:t} \hat{X}_{1:t})^{-1} $, $\hat{P}_{1:t} = \hat{X}_{1:t} (\hat{X}^\top_{1:t} \hat{X}_{1:t})^{-1} \hat{X}^\top_{1:t}$ and $\hat{P}_{0|s} = \hat{X}_{1:t} (\hat{X}^\top_{1:t} \hat{X}_{1:t})^{-1} \hat{X}^\top_{0|s}$. Notice that the first $tm$ columns of $\hat{X}_{1:t}$ is exactly $X_{1:t}$ and there are exactly $n - m$ extra columns in $\hat{X}_{1:t}$ that we threw away before proceeding to the task $t+1$. Let $\hat{X}_t \in \mathbb{R}^{p \times (n-m)}$ represent this portion of the training set. In other words, $\hat{X}_{1:t} = \begin{bmatrix} X_{1:t} & \hat{X}_t \end{bmatrix}$. With this in mind, based on Lemma \ref{lemma:proj-formulas} part (v), it holds that

\begin{align*}
&\hat{X}^{\dagger}_{1:t} = 
\begin{bmatrix}
X^{\dagger}_{1:t} - (I - P_{1:t}) \hat{X}_t (\hat{X}_t^\top (I - P_{1:t})  \hat{X}_t)^{-1} \hat{X}_t^\top X^{\dagger}_{1:t} &  (I - P_{1:t}) \hat{X}_t (\hat{X}_t^\top (I - P_{1:t})  \hat{X}_t)^{-1}
\end{bmatrix}
\\[2mm]
&\Rightarrow 
P_{1:t} \hat{P}_{1:t} = P_{1:t} \hat{X}^{\dagger}_{1:t} \hat{X}^\top_{1:t}=
\begin{bmatrix}
X^{\dagger}_{1:t}  & 0
\end{bmatrix} \hat{X}^\top_{1:t} = P_{1:t}
\end{align*}

and also

\begin{align*}
P_{1:t} \hat{P}_{0|s} = P_{1:t} \hat{X}^{\dagger}_{1:t} \hat{X}^\top_{0|s} = P_{0|s}
\end{align*}

Therefore,

\begin{align*}
\E[\Vert P_{1:t} (w_{\vv{t}} - w^*_i)\Vert ^2] &= \E[\Vert  P_{1:t}(I - \hat{P}_{1:t}) (w_{\vv{t-1}} - w^*_i) + \sum_{s=1}^{t}  P_{1:t}\hat{P}_{0|s} (w_s^* - w^*_i) +  P_{1:t}\hat{X}^{\dagger}_{1:t}\hat{z}_{1:t} \Vert ^2]
\\[2mm]
&= \E[\Vert  \sum_{s=1}^{t}  P_{0|s} (w_s^* - w^*_i) +  X^{\dagger}_{1:t}z_{1:t} \Vert ^2]
\end{align*}

We have previously calculated terms like this in Section \ref{proof:multi-task}. Therefore, we can write overall:

\begin{align*}
\E[\Vert (I - P_{1:r}) (w_{\vv{t}} - w^*_i) \Vert ^2] 
& = \E[\Vert w_{\vv{t}} - w^*_i \Vert ^2] - (1 - \frac{n}{p - tm}) \E[\Vert P_{1:t} (w_{\vv{t}} - w^*_i)\Vert ^2] - (\frac{n}{p - tm}) \E[\Vert w_{\vv{t}} - w^*_i \Vert ^2]
\\[2mm]
& = \alpha_t \E[\Vert w_{\vv{t}} - w^*_i \Vert ^2] - \alpha_t \E[\Vert P_{1:t} (w_{\vv{t}} - w^*_i)\Vert ^2]
\\[2mm]
&
= \alpha_t \E[\Vert w_{\vv{t}} - w^*_i \Vert ^2] - \alpha_t [\frac{m^2}{2p(p - tm - 1)}\sum_{s=1}^{t} \sum_{s' = 1}^{t}  \Vert w^*_s - w^*_{s'} \Vert^2 
\\[2mm]
& + 
\frac{m}{p} \sum_{s=1}^t \Vert w^*_s - w_i^*\Vert^2 +
\frac{tm \sigma^2}{p - tm -1} ]
\end{align*}

Finally, we have

\begin{align*}
    \E[\mathcal{L}_i(w_{\vv{t+1}})] &= \E[\Vert w_{\vv{t+1}} - w^*_i \Vert ^2] 
    \\[2mm]
    & =
    \frac{m^2}{2p(p - \nb_t - 1)}\sum_{s=1}^{t} \sum_{s' = 1}^{t} \Vert w^*_s - w_{s'}^* \Vert^2
    + 
    \frac{mn}{p(p - \nb_t - 1)}\sum_{s=1}^{t} \Vert w^*_{t+1} - w_s^* \Vert^2
    \\[2mm]
    &+
    \frac{m}{p} \sum_{s=1}^t \Vert w^*_s - w_i^*\Vert^2 
    +
    \frac{n}{p} \Vert w^*_{t+1} - w_i^*\Vert^2 
    + \E[\Vert (I - P_{1:r}) (w_{\vv{t}} - w^*_i) \Vert ^2] +
    \frac{\nb_t \sigma^2}{p - \nb_t -1}
    \\[2mm]
    &= \frac{m}{p} ( 1 - \alpha_t) \sum_{s = 1}^{t} \Vert w^*_s - w^*_i \Vert^2 + \frac{n}{p} \Vert w^*_{t+1} - w^*_i \Vert^2 
    + \frac{mn}{p (p - \nb_t - 1)} \sum_{s = 1}^{t} \Vert w_{t+1}^* - w_{s}^* \Vert^2 
    \\[2mm]
    &+ \frac{m^2}{2p}(\frac{1}{p - \nb_t - 1} -\frac{\alpha_t}{p - (\nb_t - n) - 1}) \sum_{s = 1}^{t} \sum_{s' = 1}^{t}  \Vert w_s^* - w_{s'}^* \Vert^2 
    \\[2mm]
    &+ \frac{\nb_t \sigma^2}{p - \nb_t - 1} - \frac{\alpha_t(\nb_t -n) \sigma^2}{p - (\nb_t - n) - 1}
    \\[2mm]
    &+ \alpha_t \E[\Vert w_{\vv{t}} - w^*_i \Vert ^2]    
\end{align*}

\end{proof}

Using the recursive form in Equation \eqref{eq:recursive-mem-reg}, one can exactly find the loss of the continual learner in Equation \ref{eq:memreg-loss}. However, the full form is not intuitive. Therefore, we focus on the two-task case instead.

\paragraph{Proof of Theorem \ref{theorem:mem-reg}}

\begin{proof}
    We use Equation \eqref{eq:recursive-mem-reg} and substitute $\sigma = 0$ and $T = 2$. 
    Therefore, $\alpha_0 = 1 - \frac{n}{p}$ and $\alpha_1 = 1 - \frac{n}{p - m}$. For the loss of the first task, we have
    
\begin{align*}
    \E[\mathcal{L}_1(w_{\vv{1}})] = \E[\Vert w_{\vv{1}} - w^*_1 \Vert ^2] 
    = (1 - \frac{n}{p}) \Vert w^*_1 \Vert ^2
\end{align*}

\begin{align*}
    \Rightarrow \E[\mathcal{L}_1(w_{\vv{2}})] &= \E[\Vert w_{\vv{2}} - w^*_1 \Vert ^2] 
    \\[2mm]
    &= \frac{n}{p} ( 1 + \frac{m}{p - (n+m) - 1}) \Vert w_{2}^* - w_{1}^* \Vert^2 
    + (1 - \frac{n}{p - m}) \E[\Vert w_{\vv{1}} - w^*_1 \Vert ^2]    
    \\[2mm]
    &= \frac{n}{p} ( 1 + \frac{m}{p - (n+m) - 1}) \Vert w_{2}^* - w_{1}^* \Vert^2 
    + (1 - \frac{n}{p - m}) (1 - \frac{n}{p}) \Vert w^*_1 \Vert ^2   
\end{align*}

And for the second task:
\begin{align*}
    \E[\mathcal{L}_2(w_{\vv{1}})] &= \E[\Vert w_{\vv{1}} - w^*_2 \Vert ^2] 
    \\[2mm]
    &= \frac{n}{p} \Vert w^*_1 - w^*_2 \Vert^2 + (1 - \frac{n}{p}) \Vert w^*_2 \Vert ^2
\end{align*}

\begin{align*}
    \Rightarrow \E[\mathcal{L}_2(w_{\vv{2}})] &= \E[\Vert w_{\vv{t+1}} - w^*_i \Vert ^2] 
    \\[2mm]
    &= \frac{m}{p} (\frac{n}{p - m}+ \frac{n}{p - (m+n) - 1}) \Vert w^*_1 - w^*_2 \Vert^2 
    + (1 - \frac{n}{p - m}) \E[\Vert w_{\vv{1}} - w^*_2 \Vert ^2]
    \\[2mm]
    &= \frac{n}{p} (1  - \frac{n - m}{p - m}+ \frac{m}{p - (m+n) - 1}) \Vert w^*_1 - w^*_2 \Vert^2 
    + (1 - \frac{n}{p - m}) (1 - \frac{n}{p}) \Vert w^*_2 \Vert ^2
\end{align*}

Finally, we can find the desired metrics:

\begin{align*}
\E[G(w_{\vv{2}})] &= \frac{1}{2} (\E[\mathcal{L}_1(w_{\vv{2}})] + \E[\mathcal{L}_2(w_{\vv{2}})])
\\[2mm]
&= \frac{n}{2p} (2  - \frac{n - m}{p - m}+ \frac{2m}{(p - \nb_t - 1)}) \Vert w^*_1 - w^*_2 \Vert^2 
    + (1 - \frac{n}{p - m}) (1 - \frac{n}{p}) \Vert w^*_1 \Vert ^2
\end{align*}

and

\begin{align*}
\E[F(w_{\vv{2}})] &= \E[\mathcal{L}_1(w_{\vv{2}})] - \E[\mathcal{L}_1(w_{\vv{1}})]
\\[2mm]
&=\frac{n}{p} ( 1 + \frac{m}{p - (n+m) - 1}) \Vert w_{2}^* - w_{1}^* \Vert^2 
     - \frac{n}{p - m} (1 - \frac{n}{p}) \Vert w^*_1 \Vert ^2  
\end{align*}

\end{proof}

\section{Bridging Linear Models and Overparameterized DNNs}\label{appendix:Linear_to_DNN}

Recent literature has pointed out several connections between linear models and overparameterized DNNs. In this section, we briefly review this connection to highlight the importance of studying linear models and refer the reader to Section 1.2 of \citet{hastie2022surprises} for a broader discussion. We establish the connection using the concept of lazy training regime.

Assume an i.i.d. data as $\mathcal{D} = \{(z_i, y_i)\}_{i=1}^n$ with inputs $z_i \in \mathbb{R}^d$ and labels $y_i \in \mathbb{R}$. Consider a possibly non-linear neural network $f(\cdot;\theta):\mathbb{R}^d \rightarrow \mathbb{R}$ parameterized by $\theta \in \mathbb{R}^p$.
Under certain conditions \citep{jacot2018ntk, Allen2019Overaparm, chizat2020lazy}, including overparameterization, the neural network $f(\cdot; \theta)$ can be approximated by its first-order Taylor expansion around the initial parameters $\theta_0$. Furthermore, by supposing that the initialization is such that $f(z;\theta_0) \approx 0$, we can write the following linear form:

\[ f(z; \theta) \approx f(z; \theta_0) + \nabla_\theta f(z; \theta_0)^\top (\theta - \theta_0) \approx \nabla_\theta f(z; \theta_0)^\top (\theta - \theta_0). \]

This approximation is still non-linear in $z$ but linear in the parameters $\theta$, and it implies that in the lazy training regime, the neural network behaves similarly to a linear model where the features are given by the gradients of the network with respect to its parameters. Specifically, the features are the Jacobian matrix $x_i := \nabla_\theta f(z_i; \theta_0)$. 

Additionally, this approximation allows us to define a Neural Tangent Kernel (NTK) \citep{jacot2018ntk} as $\Theta(z_i, z_j) = \nabla_\theta f(z_i; \theta_0)^\top \nabla_\theta f(z_j; \theta_0)$, which captures the inner product of the gradients of the neural network with respect to the parameters at different data points. As training progresses, if the parameters remain close to their initialization (lazy training), the predictions of the neural network can be effectively described by a linear model in this high-dimensional feature space defined by the NTK. This connection elucidates how overparameterized deep neural networks can exhibit behavior akin to kernel methods, where the NTK serves as the kernel function that determines the similarity between data points.

\section{Experimental Details} \label{appendix:exp-details}

We presented the most important details regarding the dataset and architecture details in Section \ref{sec:exp}. We only present extra information here.

We used the standard cross-entropy loss for all our models. For the continual learning models, we used a simple experience replay with a uniform sampling strategy in which we optimized the model simultaneously on the data for the new task and the previously stored samples in the memory buffer. 

We utilized SGD optimizer with a learning rate of $0.01$, Nestrov, and $0.95$ momentum. We trained all our models for 100 epochs. Notice that there exists an epoch-wise double-descent in deep neural networks \citep{Nakkiran_2021}. Therefore, we fixed the number of epochs to 100 and mostly explored the model-wise double descent in our work. For simplicity, no extra augmentation or preprocessing was applied to the dataset. All experiments in this paper were repeated at least 3 times, and the average results were reported. 

\paragraph{Further Details on Table \ref{tab:result}}

In Tables \ref{tab:cifar-results} to \ref{tab:pmnist-results}, we provide the full version of Table \ref{tab:result} with extra details for all four datasets. Notice that in the CIFAR-100 experiments, we used the frozen ResNet-18 backbone and used the width of the MLP layer to control the architecture budget. Similarly, for IN-R and CUB datasets we used a pretrained ViT-B-16 backbone and changed the model's complexity by modyfing the final MLP layers. However, for the PMNIST dataset, an unfrozen ResNet-18 was used and the whole backbone was optimized.

\begin{table}[ht]
\caption{The effect of varying sample memory budget and architecture memory budget over the average accuracy and average forgetting of a continual learner on the CIFAR-100 dataset. $k$ denotes the width of the MLP layers and $m$ is the number of samples in the memory for each class.
The last task train error for all experiments is zero, meaning that all models operate in the overparameterized regime.}
  \label{tab:cifar-results}
  \small
\centering  
\begin{tabular}{cccccc}
\specialrule{1pt}{2.1pt}{2.5pt}
$m$
&
\begin{tabular}[c]{@{}c@{}}Sample \\ Budget (MB)\end{tabular} 
&
$k$
&
\begin{tabular}[c]{@{}c@{}}Arch.\\Budget (MB)\end{tabular} 
& 
\begin{tabular}[c]{@{}c@{}}Avgerage\\ Accuracy (\%)\end{tabular} 
&
\begin{tabular}[c]{@{}c@{}}Avgerage\\ Forgetting. (\%)\end{tabular} 
\\ \hline
50 & 122.8 & 128  & 0.4   & $56.6_{\pm 0.4}$      & $-22.5_{\pm 0.5}$     \\
40 & 98.3  & 1605 & 24.5  & $70.6_{\pm 0.1}$      & $-7.1_{\pm 0.1}$      \\
30 & 73.7  & 2329 & 49.1  & $\bm{71.9_{\pm 0.2}}$ & $\bm{-5.4_{\pm 0.3}}$ \\
20 & 49.1  & 2885 & 73.7  & $66.3_{\pm 0.4}$      & $-11.3_{\pm 0.4}$     \\
10 & 24.5  & 3354 & 98.3  & $67.4_{\pm 0.3}$      & $-9.7_{\pm 0.3}$      \\
0  & 0.0   & 3768 & 122.9 & $63.8_{\pm 0.3}$      & $-10.8_{\pm 0.3}$     \\
\specialrule{1pt}{2.1pt}{2.5pt}
\end{tabular}
\end{table}

\begin{table}[ht]
\caption{The effect of varying sample memory budget and architecture memory budget over the average accuracy and average forgetting of a continual learner on the IN-R dataset. $k$ denotes the width of the MLP layers and $m$ is the number of samples in the memory for each class.
The last task train error for all experiments is zero, meaning that all models operate in the overparameterized regime.}
  \label{tab:imagenet-results}
  \small
\centering  
\begin{tabular}{cccccc}
\specialrule{1pt}{2.1pt}{2.5pt}
$m$
&
\begin{tabular}[c]{@{}c@{}}Sample \\ Budget (MB)\end{tabular} 
&
$k$
&
\begin{tabular}[c]{@{}c@{}}Arch.\\Budget (MB)\end{tabular} 
& 
\begin{tabular}[c]{@{}c@{}}Avgerage\\ Accuracy (\%)\end{tabular} 
&
\begin{tabular}[c]{@{}c@{}}Avgerage\\ Forgetting. (\%)\end{tabular} 
\\ \hline
10 & 1204.2 & 32    & 0.1    & $76.1_{\pm 0.4}$      & $-3.3_{\pm 0.6}$      \\
8  & 963.4  & 5249  & 240.9  & $76.9_{\pm 0.6}$      & $-2.0_{\pm 0.3}$      \\
6  & 722.5  & 7520  & 481.8  & $77.1_{\pm 0.6}$      & $-1.4_{\pm 0.4}$      \\
4  & 481.7  & 9263  & 722.6  & $\bm{78.9_{\pm 0.5}}$ & $-1.0_{\pm 0.1}$      \\
2  & 240.8  & 10733 & 963.5  & $78.5_{\pm 0.3}$      & $\bm{-0.6_{\pm 0.1}}$ \\
0  & 0.0    & 12028 & 1204.4 & $49.5_{\pm 0.6}$      & $-20.5_{\pm 0.5}$     \\
\specialrule{1pt}{2.1pt}{2.5pt}
\end{tabular}
\end{table}

\begin{table}[ht]
\caption{The effect of varying sample memory budget and architecture memory budget over the average accuracy and average forgetting of a continual learner on the CUB dataset. $k$ denotes the width of the MLP layers and $m$ is the number of samples in the memory for each class.
The last task train error for all experiments is zero, meaning that all models operate in the overparameterized regime.}
  \label{tab:cub-results}
  \small
\centering  
\begin{tabular}{cccccc}
\specialrule{1pt}{2.1pt}{2.5pt}
$m$
&
\begin{tabular}[c]{@{}c@{}}Sample \\ Budget (MB)\end{tabular} 
&
$k$
&
\begin{tabular}[c]{@{}c@{}}Arch.\\Budget (MB)\end{tabular} 
& 
\begin{tabular}[c]{@{}c@{}}Avgerage\\ Accuracy (\%)\end{tabular} 
&
\begin{tabular}[c]{@{}c@{}}Avgerage\\ Forgetting. (\%)\end{tabular} 
\\ \hline
5 & 602.1  & 16   & 0.1   & $87.6_{\pm 0.2}$      & $-4.5_{\pm 0.2}$     \\
4 & 481.7  & 3644 & 120.4  & $84.1_{\pm 0.5}$      & $-6.7_{\pm 0.0}$     \\
3 & 361.3  & 5249 & 240.9  & $88.5_{\pm 0.3}$      & $-1.8_{\pm 0.6}$     \\
2 & 240.8  & 6481 & 361.4  & $\bm{92.0_{\pm 0.1}}$ & $\bm{-0.1_{\pm 0.2}}$\\
1 & 120.4  & 7520 & 481.8  & $91.7_{\pm 0.4}$      & $-0.4_{\pm 0.3}$     \\
0 & 0.0    & 8435 & 602.1 & $75.8_{\pm 0.7}$      & $-12.2_{\pm 0.7}$    \\
\specialrule{1pt}{2.1pt}{2.5pt}
\end{tabular}
\end{table}

\begin{table}[ht]
\caption{The effect of varying sample memory budget and architecture memory budget over the average accuracy and average forgetting of a continual learner on the PMNIST dataset. $k$ denotes the width of the backbone layer, and $m$ is the number of samples in the memory for each class.
The last task train error for all experiments is zero, meaning that all models operate in the overparameterized regime.}
  \label{tab:pmnist-results}
  \small
\centering  
\begin{tabular}{cccccc}
\specialrule{1pt}{2.1pt}{2.5pt}
$m$
&
\begin{tabular}[c]{@{}c@{}}Sample \\ Budget (MB)\end{tabular} 
&
$k$
&
\begin{tabular}[c]{@{}c@{}}Arch.\\Budget (MB)\end{tabular} 
& 
\begin{tabular}[c]{@{}c@{}}Avgerage\\ Accuracy (\%)\end{tabular} 
&
\begin{tabular}[c]{@{}c@{}}Avgerage\\ Forgetting. (\%)\end{tabular} 
\\ \hline
50 & 31.3 & 1 & 0.1 & $68.4_{\pm 1.6}$ & $-15.9_{\pm 0.2}$\\
40 & 25.0 & 24 & 6.4 & $83.2_{\pm 0.3}$ & $-11.4_{\pm 0.4}$\\
30 & 18.8 & 34 & 12.8 & $79.2_{\pm 1.0}$ & $-16.8_{\pm 1.1}$\\
20 & 12.5 & 42 & 19.5 & $\bf 86.3_{\pm 0.1}$ & $\bf -9.5_{\pm 0.1}$\\
10 & 6.2 & 48 & 25.4 & $82.4_{\pm 0.3}$ & $-14.0_{\pm 0.4}$\\
0 & 0.0 & 54 & 32.1 & $30.8_{\pm 1.2}$ & $-71.5_{\pm 1.2}$\\
\specialrule{1pt}{2.1pt}{2.5pt}
\end{tabular}
\end{table}

\section{Additional Experiments}\label{appendix:further-exp}

In this section, we provide extra experiments on DNNs. Section \ref{appendix:comp_MTL} provides the MTL experiments with single-head and multi-head architectures but with a frozen backbone. Section \ref{appendix:unfrozen_back} offers the results for the experiments with unfrozen backbones. In Section \ref{appendix:optimal_weight}, we dig deeper into investigating the optimal weights learned by each MLP layer, given that the layers are unshared across the tasks, and we measure the task similarity at each layer using the average inner product of the optimal task-specific weights. In Section \ref{appendix:MLP_depth}, we investigate the effect of the MLP depth and provide the corresponding train and test curves. Finally, in Section \ref{appendix:Comp_CL} we offer extra experiments on CL models and demonstrate the effect of label noise on CL models.

\subsection{Complementary Multi-Task Learning Experiments}\label{appendix:comp_MTL}

Figures \ref{fig:multi_cifar_resnet18} to \ref{fig:multi_cub_vit} demonstrate the full training and testing curves for training the model in an MTL setup with multi-head and single-head architectures with different backbones and datasets. As discussed before in the main text, the training error is zero for both cases in large enough models. However, the multi-task learner performs worse in the single-head architecture due to the conflict at the classification layer.

\begin{figure}[h] 
  \centering
  \begin{subfigure}{\linewidth}
  \includegraphics[width=\textwidth]{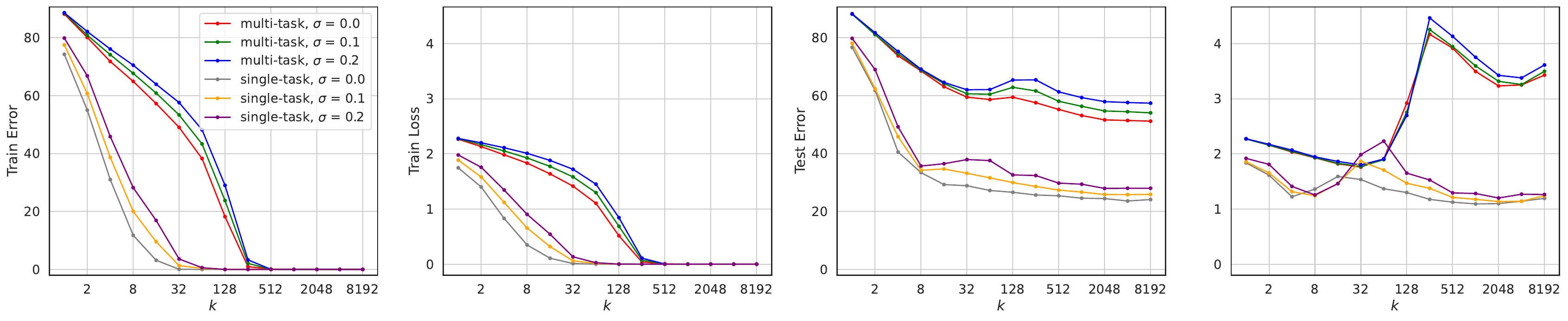}
  \end{subfigure}

\begin{subfigure}{\linewidth}
  \includegraphics[width=\textwidth]{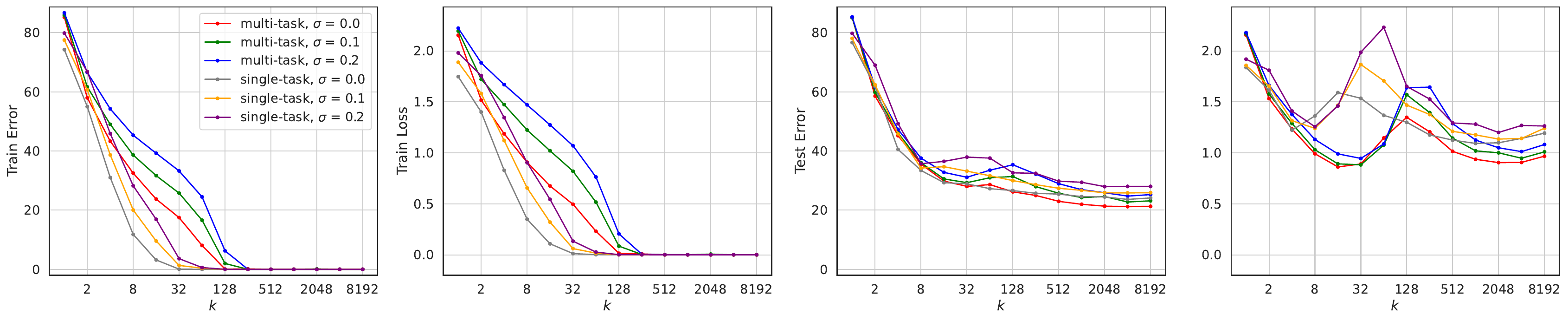}
  \end{subfigure}  
  \caption{Comparing the single-task and multi-task performance on CIFAR-100 with pretrained ResNet-18 backbone. The top row shows the experiment with the single-head architecture and the bottom row corresponds to the multi-head architecture. The horizontal axis in all plots is the width of the MLP layer which is placed on top of the feature extractor. $\sigma$ represents the noisy portion of the training set that was corrupted by randomly switching its labels. }
  \label{fig:multi_cifar_resnet18}
\end{figure}

\begin{figure}[h] 
  \centering
  \begin{subfigure}{\linewidth}
  \includegraphics[width=\textwidth]{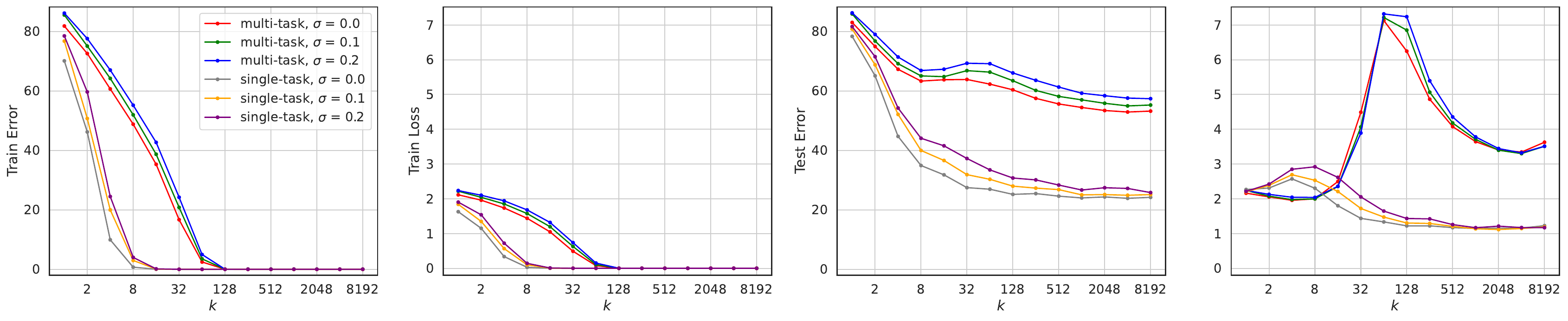}
  \end{subfigure}

\begin{subfigure}{\linewidth}
  \includegraphics[width=\textwidth]{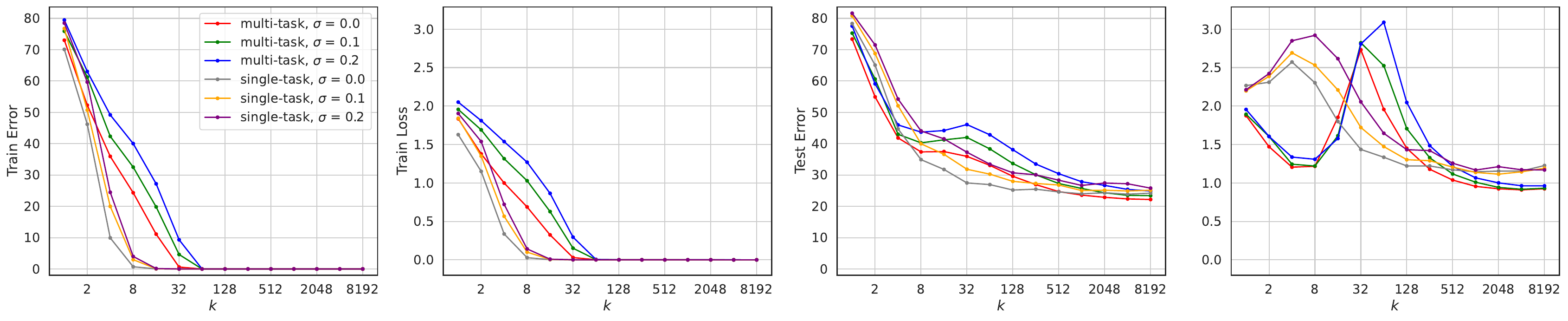}
  \end{subfigure}  
  \caption{Comparing the single-task and multi-task performance on CIFAR-100 with pretrained ResNet-50 backbone. The top row shows the experiment with the single-head architecture, and the bottom row corresponds to the multi-head architecture. }
\end{figure}

\begin{figure}[h] 
  \centering
  \begin{subfigure}{\linewidth}
  \includegraphics[width=\textwidth]{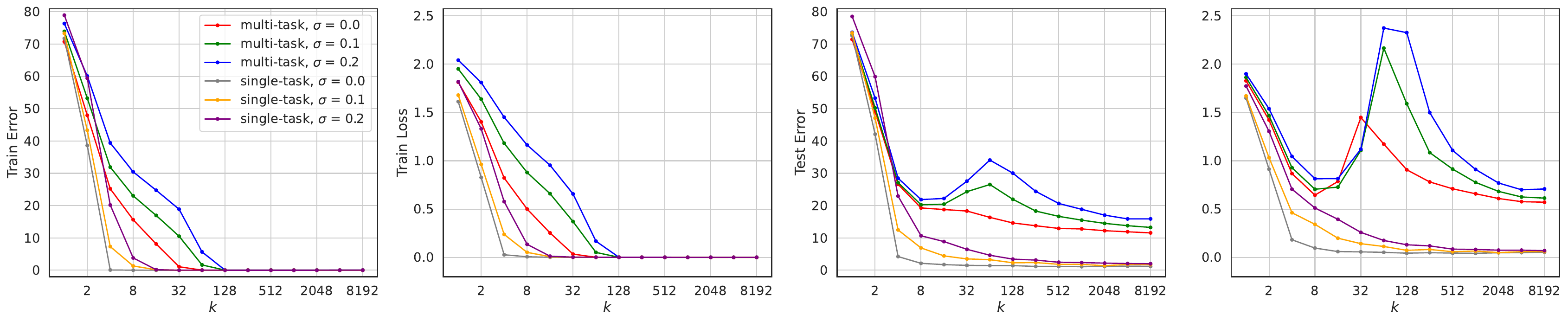}
  \end{subfigure}

\begin{subfigure}{\linewidth}
  \includegraphics[width=\textwidth]{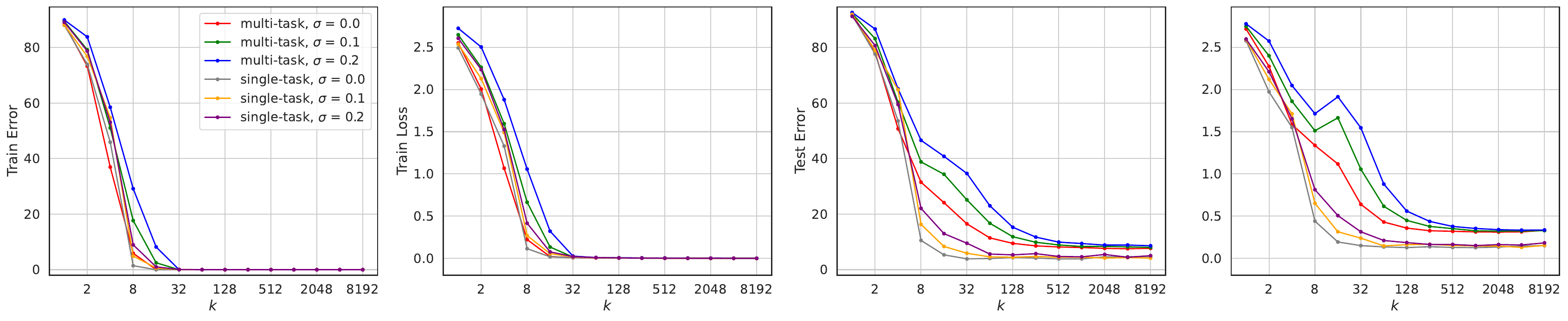}
  \end{subfigure}  
  \caption{Comparing the single-task and multi-task performance on CIFAR-100 with pretrained ViT-B-16 backbone. The top row shows the experiment with the single-head architecture, and the bottom row corresponds to the multi-head architecture.}
\end{figure}

\begin{figure}[h] 
  \centering
  \begin{subfigure}{\linewidth}
  \includegraphics[width=\textwidth]{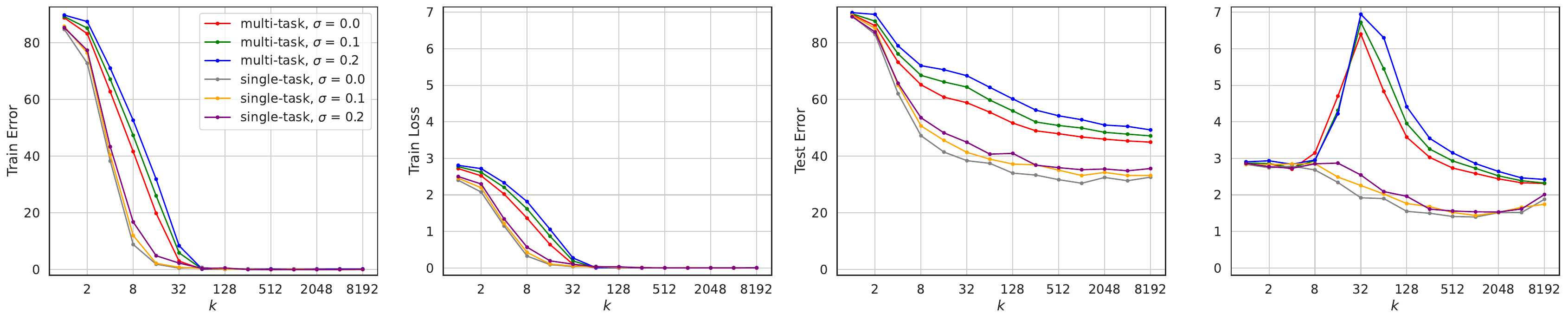}
  \end{subfigure}

\begin{subfigure}{\linewidth}
  \includegraphics[width=\textwidth]{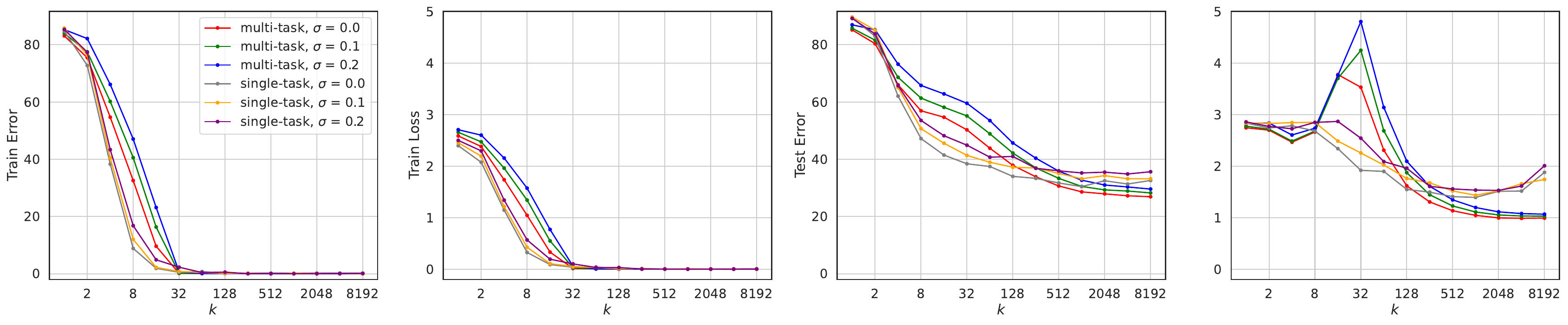}
  \end{subfigure}  
  \caption{Comparing the single-task and multi-task performance on the Imagenet-R dataset with pretrained ResNet-50 backbone. The top row shows the experiment with the single-head architecture, and the bottom row corresponds to the multi-head architecture.}
\end{figure}

\begin{figure}[h] 
  \centering
  \begin{subfigure}{\linewidth}
  \includegraphics[width=\textwidth]{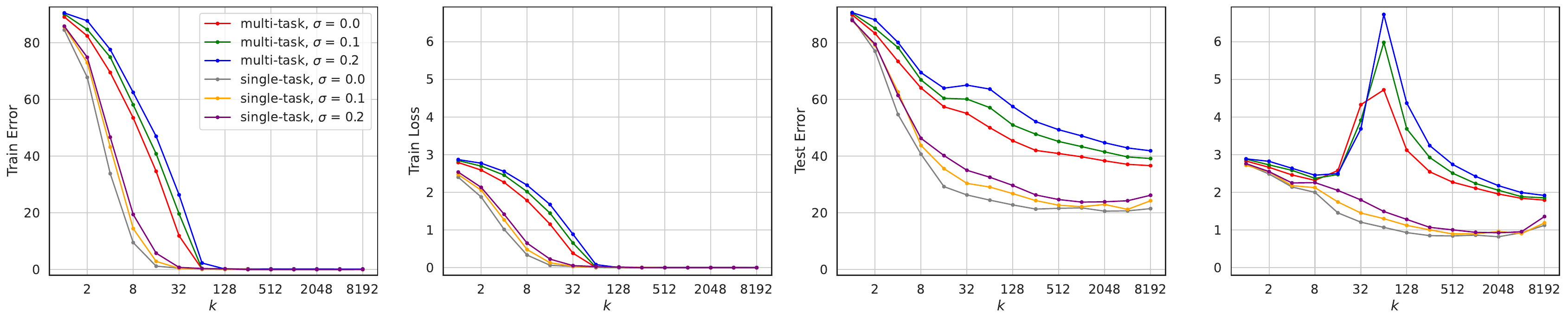}
  \end{subfigure}

\begin{subfigure}{\linewidth}
  \includegraphics[width=\textwidth]{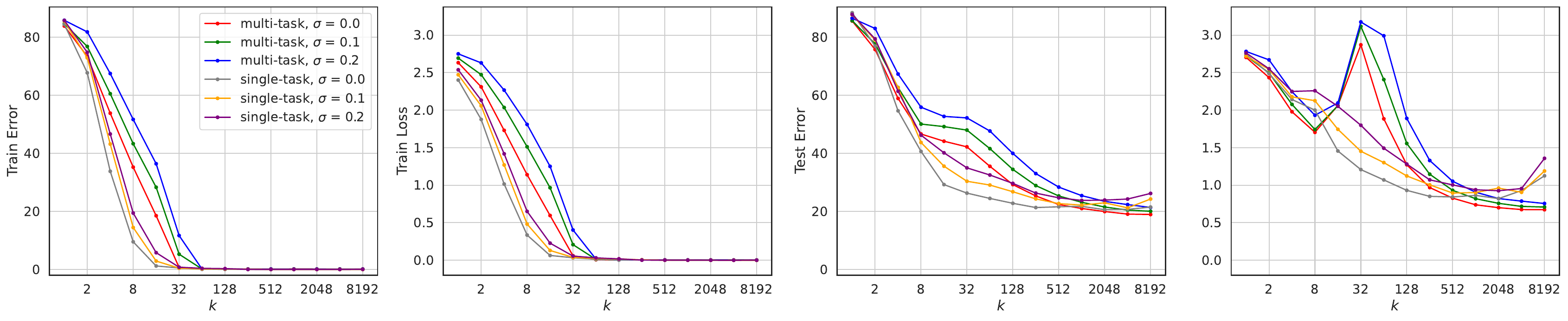}
  \end{subfigure}  
  \caption{Comparing the single-task and multi-task performance on the Imagenet-R dataset with pretrained ViT-B-16 backbone. The top row shows the experiment with the single-head architecture, and the bottom row corresponds to the multi-head architecture.}
\end{figure}

\begin{figure}[h] 
  \centering
  \begin{subfigure}{\linewidth}
  \includegraphics[width=\textwidth]{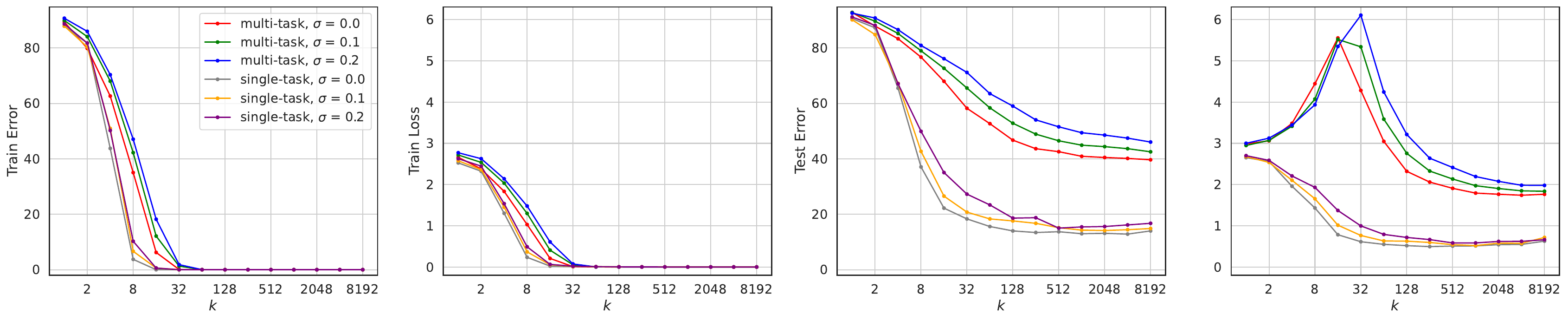}
  \end{subfigure}

\begin{subfigure}{\linewidth}
  \includegraphics[width=\textwidth]{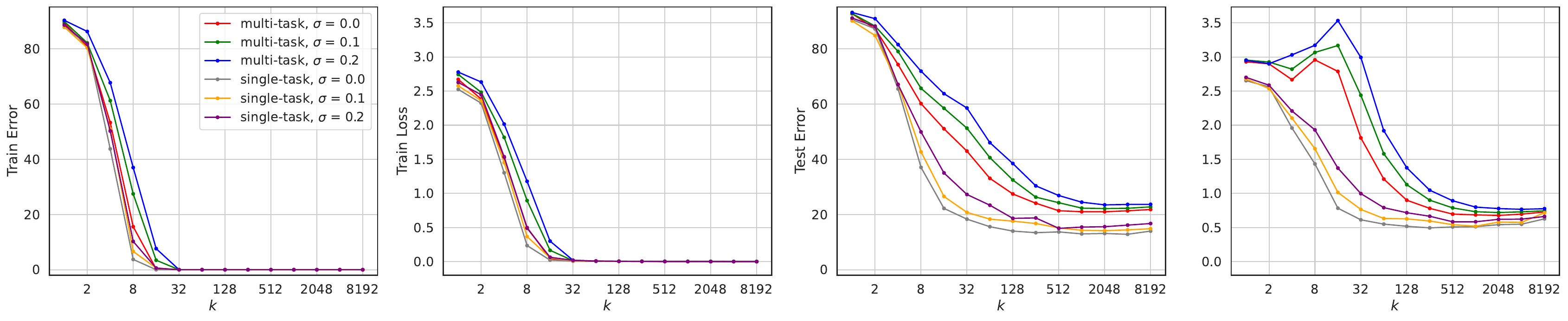}
  \end{subfigure}  
  \caption{Comparing the single-task and multi-task performance on the CUB dataset with pretrained ResNet-50 backbone. The top row shows the experiment with the single-head architecture, and the bottom row corresponds to the multi-head architecture.}
\end{figure}

\begin{figure}[h] 
  \centering
  \begin{subfigure}{\linewidth}
  \includegraphics[width=\textwidth]{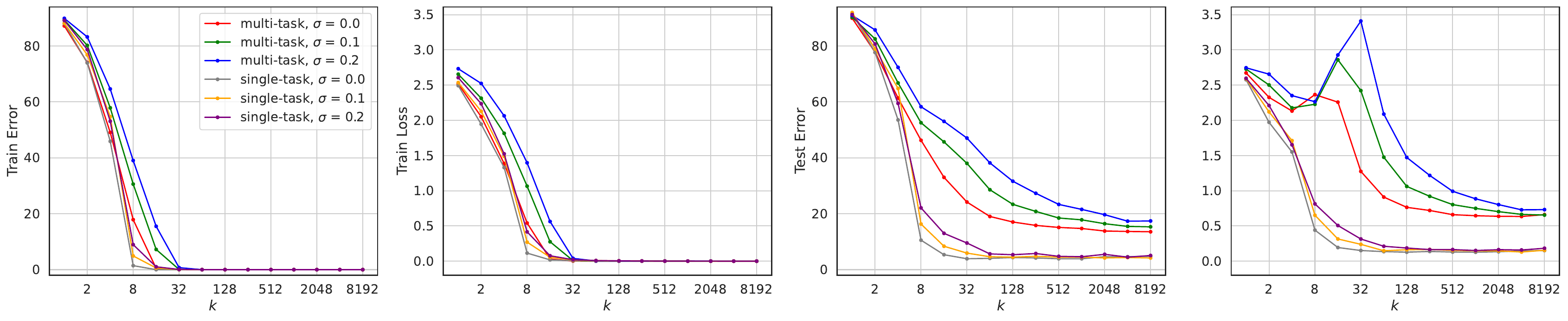}
  \end{subfigure}

\begin{subfigure}{\linewidth}
  \includegraphics[width=\textwidth]{Figures/Appendix/exp1_CUB_ViTB16_task_pretrained.pdf}
  \end{subfigure}  
  \caption{Comparing the single-task and multi-task performance on the CUB dataset with pretrained ViT-B-16 backbone. The top row shows the experiment with the single-head architecture, and the bottom row corresponds to the multi-head architecture.}
  \label{fig:multi_cub_vit}
\end{figure}

\subsection{Multi-Task Learning Experiments with Unfrozen Backbone} \label{appendix:unfrozen_back}

Next, we perform similar experiments with an unfrozen backbone. The results of this experiment are provided in Figure \ref{fig:multi_cifar_unfrozen_resnet18}. The model-wise double descent phenomena are similarly observable when training the backbone from scratch but at different locations for different configurations (refer to Section~\ref{sec:exp} for details on the architecture).

\begin{figure}[h] 
  \centering
  \begin{subfigure}{\linewidth}
  \includegraphics[width=\textwidth]{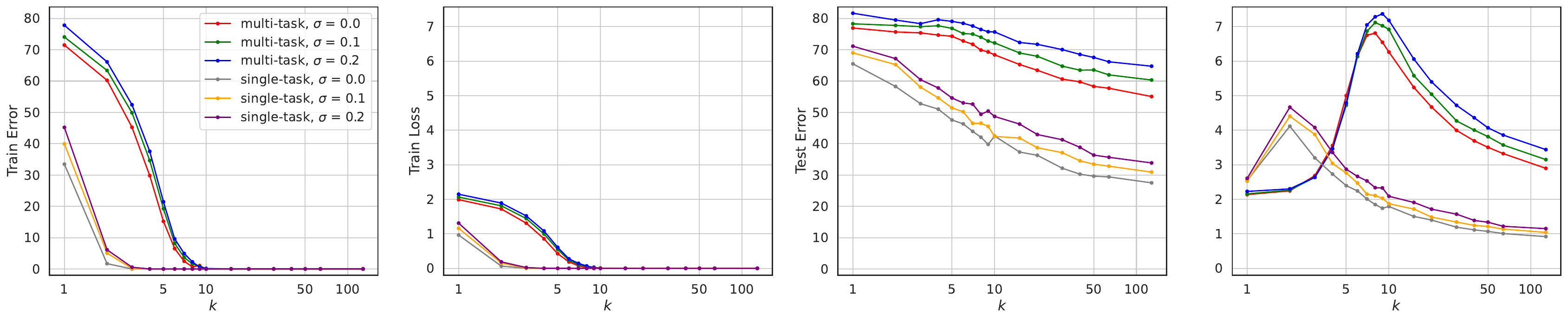}
  \end{subfigure}

\begin{subfigure}{\linewidth}
  \includegraphics[width=\textwidth]{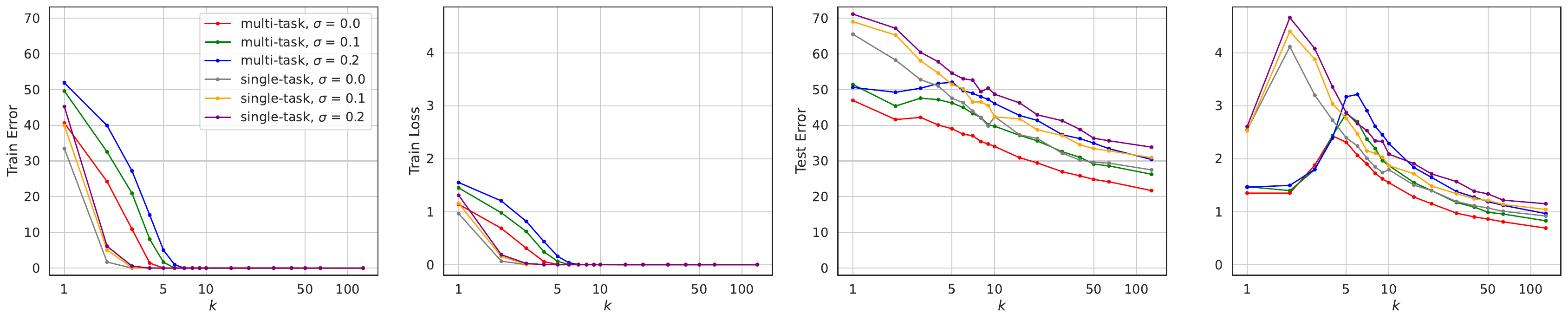}
  \end{subfigure}  
  \caption{Comparing the single-task and multi-task performance on CIFAR-100 with unfrozen ResNet-18 backbone with random initialization. The top row shows the experiment with the single-head architecture, and the bottom row corresponds to the multi-head architecture. The horizontal axis in all plots is the scale factor that controls the number of filters in convolutional layers. $\sigma$ represents the noisy portion of the training set that was corrupted by randomly switching its labels. }
  \label{fig:multi_cifar_unfrozen_resnet18}
\end{figure}

\clearpage
\subsection{Investigation of the Task Optimal Weights} \label{appendix:optimal_weight}

In this section, we analyze the layer-wise optimal weights for each training task. For this purpose, we start with a fully shared MTL model. Remember that in our MTL model, a frozen backbone is followed by three layers of MLP and a single-head classifier. At each time, we pick a specific layer and unshare it across the tasks by reinstantiating a new weight matrix for each task. Then, we optimize the model on the whole training set while ensuring that the unshared layers only see the data of their own task. We repeat this procedure by unsharing only one layer at a time to have task-specific optimal weights for each layer. 

Figure~\ref{fig:Weight_unsharing} shows the results of this experiment. As demonstrated, the fully shared MTL model has the worst performance, suggesting that weight sharing does not happen effectively in a fully shared model. Although unsharing any layer can significantly boost performance, the experiments show that an unshared classification head (i.e., multi-head architecture) is the most effective. 

To better understand the reason for such behavior, we look at the average inner product of flattened task-wise optimal weights. More formally, let $W_{t,l}^*$ represent the linearized weight matrix at layer $l$, optimized specifically for task $t$, where $1 \leq l \leq 4$ and $1 \leq t \leq T$. We compute the following layer-wise average score:

\begin{equation} \label{eq:S_score}
S_l = \frac{2}{T(T-1)} \sum_{t = 1}^{T} \sum^{T}_{t' = t+1} \langle W_{t,l}^* , W_{t',l}^*\rangle.
\end{equation}

Figure \ref{fig:Avg_inner} shows the values of this score at different layers for different model sizes. Although the classification layer has less parameters than the other layers (because it is bonded to the number of classes from one side), the figure shows that it has the most negative score. In fact, the negative average score shows that the tasks are mostly conflicting at this layer and this explains why unsharing the final layer makes more improvements than the other layers.

\begin{figure}[h] 
  \centering 
  \includegraphics[width=\textwidth]{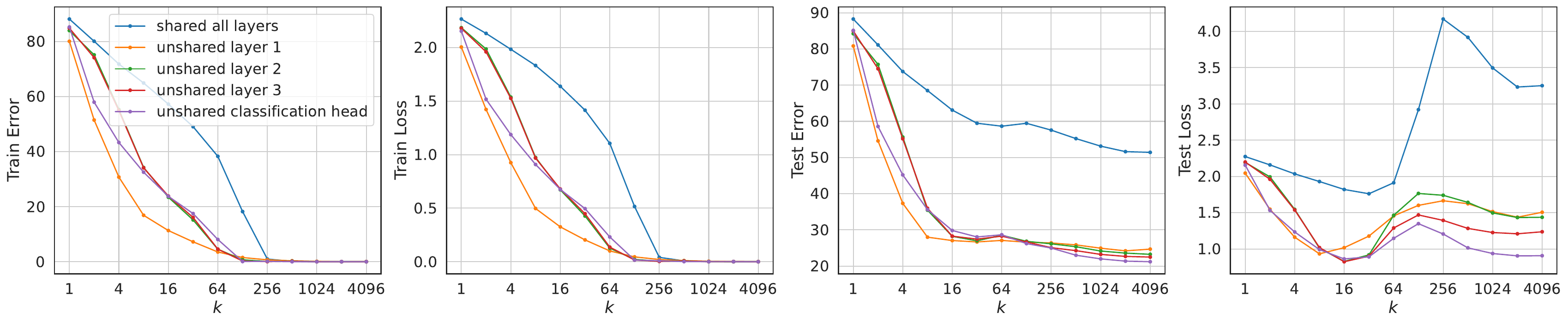}
  \caption{Investigating the effect of weight sharing in MTL by unsharing different layers of the MLP layer and training task-specific layers. The horizontal axis is the width of the MLP layer.}
  \label{fig:Weight_unsharing}
\end{figure}

\begin{figure}[h] 
  \centering 
  \includegraphics[width=\textwidth]{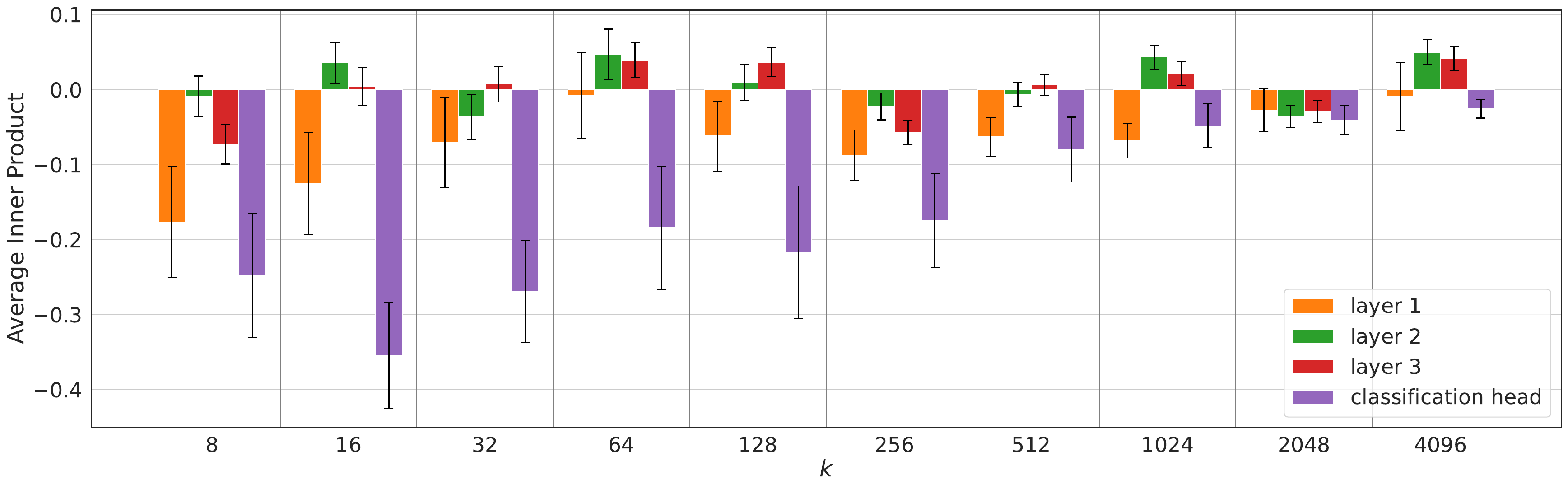}
  \caption{Comparing the average inner product score (Equation~\eqref{eq:S_score}) across different layers after unsharing the weights across the tasks. The horizontal axis is the width of the MLP layer.}
  \label{fig:Avg_inner}
\end{figure}

\subsection{Investigation of the Depth} \label{appendix:MLP_depth}
In this section, we investigate the effect of the MLP depth on top of the pretrained backbones. Figure~\ref{fig:depth} compares different models with varying depths of the MLP. Notice that our theories are based on the assumption of a linear model, and therefore, they do not directly predict the behavior as a function of the depth. However, as discussed in Section~\ref{appendix:Linear_to_DNN}, the DNN performs similarly to a linear model as long as certain conditions hold. Figure~\ref{fig:depth} confirms this claim by showing similar behavior for all different models. Another interesting observation in this plot is the impact of depth over the height of the test error peak, which is a notable direction for future studies. 

\begin{figure}[h] 
  \centering 
  \includegraphics[width=\textwidth]{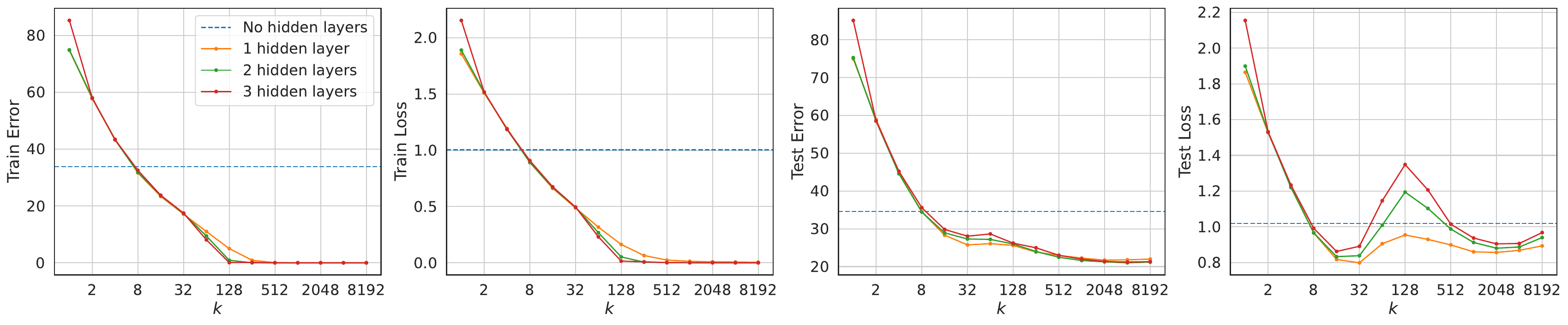}
  \caption{Visualizing the effect of the depth on the performance curves of the multi-head multi-task learner. The horizontal axis is the width of the MLP layers. Notice that a simple linear layer on top of the feature extractor can not fully overfit the training samples and has no tunable hyperparameter for increasing the complexity.}
  \label{fig:depth}
\end{figure}

\clearpage

\subsection{Complementary Continual Learning Experiments} \label{appendix:Comp_CL}

Figure \ref{fig:cl_pretrained_plot} demonstrates full training and testing curves of a CL setup on CIFAR-100 with different replay buffer sizes using pretrained ResNet-18. Notice that the last task training error is zero for both cases in large enough models. Notice that Figure \ref{fig:cl_pretrained_plot} contains both experiments with a single-head and multi-head architecture, which is respectively denoted as task-incremental and domain-incremental learning in the CL literature. Additionally, Figure \ref{fig:cl_plot} shows a similar experiment with an unfrozen ResNet-18 backbone.

\begin{figure}[h] 
  \centering
  \begin{subfigure}{\linewidth}
  \includegraphics[width=\textwidth]{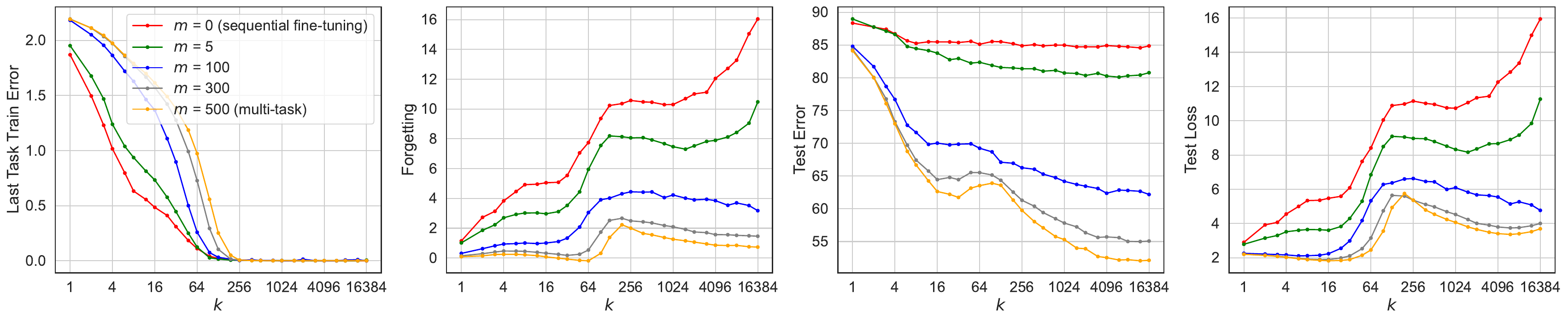}
  \end{subfigure}

\begin{subfigure}{\linewidth}
  \includegraphics[width=\textwidth]{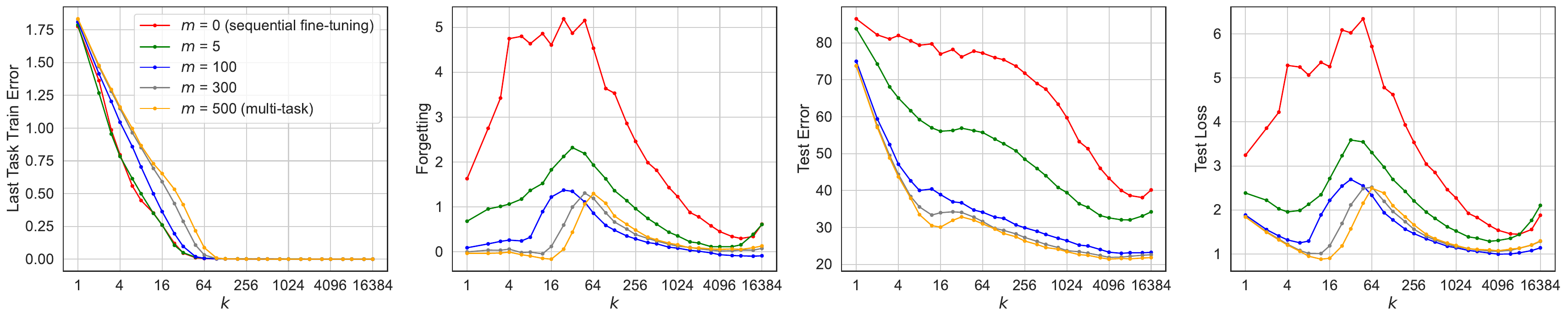}
  \end{subfigure}  
  \caption{Visualizing the CL performance on CIFAR-100 with different values of memory budget denoted as $m$. The horizontal axis is the width of the MLP layer. The top row corresponds to the experiments with a single-head classifier and the bottom row shows the multi-head architecture.}
  \label{fig:cl_pretrained_plot}
\end{figure}

\begin{figure}[h] 
  \centering
\includegraphics[width=\textwidth]{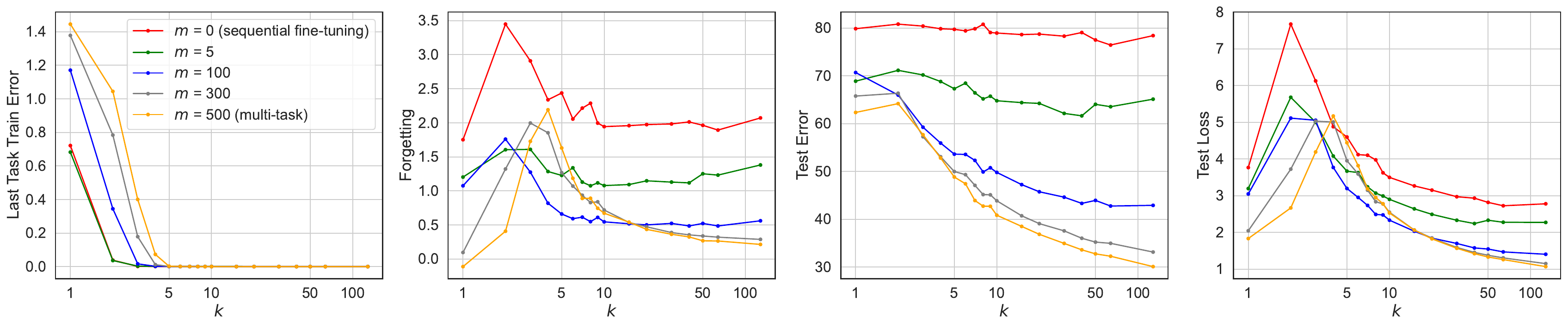}
  \caption{Visualizing the CL performance on CIFAR-100 using an unfrozen ResNet-18 backbone and a multi-head classification head. The plot shows the performance for different values of memory budget denoted as $m$. The horizontal axis is scale factor that controls the number of filters in the ResNet-18 backbone.}
  \label{fig:cl_plot}
\end{figure}

Finally, we present the experiments with different noise levels in Figure \ref{fig:CL_noise}. As our theories suggest, the error peak is intensified with stronger noise.

\begin{figure}[h] 
  \centering 
  \includegraphics[width=\textwidth]{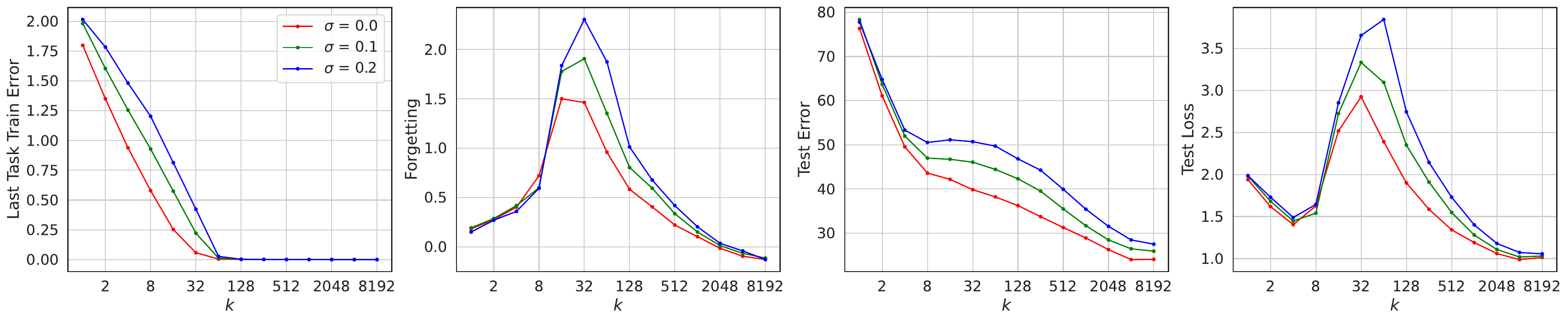}
  \caption{Visualizing the CL performance of multi-head models using memory size $m = 50$ on CIFAR-100 with different levels of noise denoted as $\sigma$. 
  The horizontal axis is the width of the MLP layer.}
  \label{fig:CL_noise}
\end{figure}

\section{Limitations and Future Work} \label{appendix:limitations}

In this paper, we presented a theoretical framework for understanding multi-task overparameterized linear models which is novel and unexplored before. The strength of our work is that we offer exact theoretical closed-form results rather than asymptotic results or generalization bounds for an MTL setup. However, there are also limitations in our work, which we will address in this section.

Our theoretical analysis is based on several assumptions and simplifications to derive closed-form expressions. For instance, we assume i.i.d. Gaussian features, the existence of optimal task vectors and additive noise in our linear regression setup, which simplifies the mathematical analysis but may not fully represent the complexity and variability of real-world data. These assumptions might limit the generalizability of our results to more complex or non-Gaussian data distributions commonly encountered in practical applications.

The next potential limitation is that our analysis is focused on linear models. Although we connected the linear models and DNNs in Appendix \ref{appendix:Linear_to_DNN}, but the connection holds under certain assumptions which may be violated in different applications. Besides, we emphasize that the goal of this work is not to make formal claims regarding the performance of multi-task DNNs, rather, we tried to qualitatively connect our understanding of linear models to the observations derived experimentally from DNNs. We believe that knowing the trade-offs between different system specifications, such as the model size and number of training samples, can provide an important guideline when deploying practical models.

Despite all these limitations, we believe that our work should be evaluated in line with the recent advancements in theoretical investigations of overparameterized models. Additionally, we emphasize that the theoretical tools developed in the proofs section are valuable and novel if compared to the related prior theoretical works in the field.

At this time, our theoretical tools for fully understanding overparameterized DNNs are limited and therefore, one possible direction for future work is to focus more specifically on these models. In the context of multi-task learning, it is important to understand what does task similarity mean in deep models operating over complex datasets. Remember that we defined the task similarity using the distance between the optimal task vectors, an assumption which is not necessarily well-defined for DNNs with high-dimensional weights. Therefore, it is essential to provide a better definition of the task similarity and then connect it to the double descent behavior of DNNs.

\section{Infrastructures and Data Availability} \label{appendix:code_and_data}
We mainly used PyTorch \citep{paszke2019pytorch} for our implementations and datasets are freely accessible (Imagenet-R \citep{hendrycks2021Imagenetr}, CUB\citep{WahCUB_200_2011}, CIFAR-100~\citep{CIFAR100}, and MNIST~\citep{MNIST}). All experiments in the main text are reproducible on a single NVIDIA 2080 TI GeForce GPU. 


\end{document}